\documentclass[11pt,reqno]{amsart}
\usepackage{graphicx,enumerate,amssymb,bbold}
\usepackage[notrig]{physics}
\usepackage[font=footnotesize,labelfont=small]{subcaption}
%\captionsetup[subfigure]{labelfont=small}
\usepackage[left=3.0cm,right=3.0cm,top=2.5cm,bottom=2.5cm,includeheadfoot]{geometry} 

\usepackage[colorlinks=true, pdfstartview=FitV, linkcolor=blue, 
            citecolor=blue, urlcolor=blue]{hyperref}

\numberwithin{equation}{section}
\numberwithin{figure}{section}
\setcounter{tocdepth}{3}

%\setcapindent{0pt}
%-> Kein Indent
%\setcapindent*{0pt}
%-> Zeilenumbruch nach "Figure" & kein Indent

%%%%%%%%
%%%%%%%%%%%%%%%%%%%%%%%%%%%%%%%%%
\theoremstyle{plain}
\newtheorem{theorem}{Theorem}[section]
\newtheorem{lemma}[theorem]{Lemma}
\newtheorem{proposition}[theorem]{Proposition}

\newtheorem{conjecture}{Conjecture}[section]

\theoremstyle{definition}
\newtheorem{definition}{Definition}[section]

\newtheorem{remark}{Remark}[section]
\newtheorem{example}{Example}[section]

%\newtheorem{theorem}{Theorem}
%\newtheorem{corollary}[theorem]{Corollary}
%\newtheorem{definition}{Definition}
%\newtheorem{lemma}{Lemma}
%\newtheorem{exercise}{Exercise}
%\newtheorem{remark}{Remark}
%\newtheorem{example}{Example}
%\newtheorem{warning}{Warning}
%%%%%%%%%%%%%%%%%%%%%%%%%%%%%%%%%

\newcommand{\bitem}{\begin{itemize}}
\newcommand{\eitem}{\end{itemize}}
\newcommand{\mc}[1]{\mathcal{#1}}

\newcommand{\N}{\mathbb{N}}
\newcommand{\R}{\mathbb{R}}

\newcommand{\B}{\mathbb{B}}
\newcommand{\Sp}{\mathbb{S}}

\newcommand{\EE}{\mathbb{E}}

\newcommand{\bpm}{\begin{pmatrix}}
\newcommand{\epm}{\end{pmatrix}}
\newcommand{\bsm}{\left(\begin{smallmatrix}}
\newcommand{\esm}{\end{smallmatrix}\right)}
\newcommand{\T}{\top}

\newcommand{\ol}[1]{\overline{#1}}
\newcommand{\la}{\langle}
\newcommand{\ra}{\rangle}

\newcommand{\mrm}[1]{\mathrm{#1}}

\newcommand{\veps}{\varepsilon}

\newcommand{\gdw}{\Leftrightarrow}

\newcommand{\eins}{\mathbb{1}}

\DeclareMathOperator{\Diag}{Diag}

\newcommand{\otop}[1]{\mathring{#1}}

\DeclareMathOperator{\supp}{supp}

\DeclareMathOperator{\Exp}{Exp}

%%%%%%%%

\title[Image Labeling by Assignment]{Image Labeling by Assignment}
\author[F.~~{\AA}str\"{o}m, S.~Petra, B.~Schmitzer, C.~Schn\"{o}rr]{F.~{\AA}str\"{o}m, S.~Petra, B.~Schmitzer, C.~Schn\"{o}rr}
\address[F.~{\AA}str\"{o}m]{Heidelberg Collaboratory for Image Processing, Heidelberg University, Germany} 
\email{freddie.astroem@iwr.uni-heidelberg.de}
\urladdr{\url{http://hci.iwr.uni-heidelberg.de/Staff/fastroem/}}
\address[S.~Petra]{Mathematical Image Analysis Group, Heidelberg University, Germany} 
\email{petra@math.uni-heidelberg.de}
\urladdr{\url{http://ipa.iwr.uni-heidelberg.de/dokuwiki/doku.php?id=people:spetra:start}}
\address[B.~Schmitzer]{CEREMADE, University Paris-Dauphine, France} 
\email{schmitzer@ceremade.dauphine.fr}
\urladdr{\url{https://www.ceremade.dauphine.fr/~schmitzer/}}
\address[C.~Schn\"{o}rr]{Image and Pattern Analysis Group, Heidelberg University, Germany} 
\email[corresponding author]{schnoerr@math.uni-heidelberg.de}
\urladdr{\url{http://ipa.iwr.uni-heidelberg.de}}
\date{\today} 

\thanks{Support by the German Research Foundation (DFG) is gratefully acknowledged, grant GRK 1653.}

\keywords{
Image labeling, assignment manifold, Fisher-Rao metric, Riemannian gradient flow, replicator equations, information geometry, neighborhood filters, nonlinear diffusion.
}

\subjclass[2010]{62H35, 65K05, 68U10, 62M40}
%%%%%%%%%%%%%%%%%%%%%

%%% BEGIN DOCUMENT
\begin{document}

\maketitle

\begin{abstract}
We introduce a novel geometric approach to the image labeling problem. Abstracting from specific labeling applications, a general objective function is defined on a manifold of stochastic matrices, whose elements assign prior data that are given in any metric space, to observed image measurements. The corresponding Riemannian gradient flow entails a set of replicator equations, one for each data point, that are spatially coupled by geometric averaging on the manifold. Starting from uniform assignments at the barycenter as natural initialization, the flow terminates at some global maximum, each of which corresponds to an image labeling that uniquely assigns the prior data. Our geometric variational approach constitutes a smooth non-convex inner approximation of the general image labeling problem, implemented with sparse interior-point numerics in terms of parallel multiplicative updates that converge efficiently.
\end{abstract}

%\newpage

\tableofcontents

%%%
%\newpage
%%%

\section{Introduction}
\label{sec:Introduction}

\subsection{Motivation}
\label{sec:Motivation}

\textit{Image labeling} is a basic problem of variational low-level image analysis. It amounts to determining a \textit{partition} of the image domain by uniquely assigning to each pixel a single element from a finite set of labels. Most applications require such decisions to be made depending on other decisions. This gives rise to a global objective function whose minima correspond to favorable label assignments and partitions. Because the problem of computing globally optimal partitions generally is NP-hard, \textit{relaxations} of the variational problem only define computationally feasible optimization approaches.

\textit{Continuous models} and relaxations of the image labeling problem were studied e.g.~in \cite{Lellmann-Schnoerr-10a,Chambolle2012}, including the specific binary case, where two labels are only assigned \cite{ContinuousGlobalBinary06} and the convex relaxation is tight, such that the global optimum can be determined by convex programming. \textit{Discrete models} prevail in the field of computer vision. They lead to polyhedral relaxations of the image partitioning problem that are tighter than those obtained from continuous models after discretization. We refer to \cite{Kappes2015} for a comprehensive survey and evaluation. Similar to the continuous case, the binary partition problem can be efficiently and globally optimal solved using a subclass of binary discrete models \cite{EnergyGraphCuts-PAMI04}.

Relaxations of the variational image labeling problem fall into two categories: \textit{convex and non-convex relaxations}. The dominant \textit{convex approach} is based on the local-polytope relaxation, a particular linear programming (LP-) relaxation \cite{WernerPAMI07}. This has spurred a lot of research on developing specific algorithms for efficiently solving large problem instances, as they often occur in applications. We mention \cite{Kolmogorov-TRMP06} as a prominent example and otherwise refer again to \cite{Kappes2015}. Yet, models with higher connectivity in terms of objective functions with local potentials that are defined on larger cliques, are still difficult to solve efficiently. A major reason that has been largely motivating our present work is the \textit{non-smoothness} of optimization problems resulting from convex relaxation -- the price to pay for convexity.

Major classes of \textit{non-convex relaxations} are based on the mean-field approach \cite{Orland1985}, \cite[Section 5]{WainrightJordan08} or on approximations of the intractable entropy of the probability distribution whose negative logarithm equals the functional to be minimized \cite{Yedidia-GenBP-05}. Examples for early applications of relaxations of the former approach include \cite{Herault1993, PairwiseClusteringAnnealing-97}. The basic instance of the latter class of approaches is known as the Bethe-approximation. In connection with image labeling, all these approaches amount to \textit{non-convex inner} relaxations of the combinatorially complex set of feasible solutions (the so-called marginal polytope), in contrast to the \textit{convex outer} relaxations in terms of the local polytope discussed above. As a consequence, the non-convex approaches provide a mathematically valid basis for \textit{probabilistic inference} like computing marginal distributions, which in principle enables a more sophisticated data analysis than mere energy minimization or maximum a-posteriori inference, to which energy minimization corresponds from a probabilistic viewpoint.

On the other hand, like non-convex optimization problems in general, these  relaxations are plagued by the problem of avoiding poor local minima. Although attempts were made to tame this problem by local convexification \cite{Heskes-ApprInferConvex06}, the class of \textit{convex} relaxation approaches has become dominant in the field, because the ability to solve the relaxed problem for a global optimum is a much better basis for research on algorithms and also results in more reliable software for users and applications.

\vspace{0.25cm}
Both classes of convex and non-convex approaches to the image labeling problem motivate the present work as an attempt to address the following two issues.
\begin{itemize}
\item \textbf{Smoothness vs.~Non-Smoothness.} Regarding convex approaches and the development of efficient algorithms, a major obstacle stems from the inherent non-smoothness of the corresponding optimization problems. This issue becomes particularly visible in connection with decompositions of the optimization task into simpler problems by dropping complicating contraints, at the cost of a non-smooth dual master problem where these constraints have to be enforced. Advanced bundle methods \cite{Kappes2012a} then seem to be among the most efficient methods. Yet, how to make rapid progress in systematic way does not seem obvious. 

On the other hand, since the early days of linear programming, e.g.~\cite{Bayer1989a,Bayer1989}, it has been known that endowing the feasible set with a proper \textit{smooth} geometry enables efficient numerics. Yet, such \textit{interior point} methods \cite{Nesterov2002} are considered as not applicable for large-scale problems of variational image analysis, due to dense numerical linear algebra steps that are both too expensive and too memory intensive.

In view of these aspects, \textbf{our approach} may be seen as a \textit{smooth geometric approach} to image labeling based on \textit{first-order, sparse} numerical operations. \\
\item
\textbf{Local vs.~global optimality.} Global optimality distinguishes convex approaches from other ones and is the major argument for the former ones. Yet, having computed a global optimum of the relaxed problem, it has to be projected to the feasible set of combinatorial solutions (labelings) in a post-processing step. While the inherent suboptimality of this step can be bounded \cite{Lellmann2011b}, and despite progress has been made to recover the true combinatorial optimum as least partially \cite{Swoboda2016}, it is clear that the benefit of global optimality of convex optimization has to be relativized when it constitutes a relaxation of an intractable optimization problem.
Turning to non-convex problems, on the other hand, raises the two well-known issues: local optimality of solutions instead of global optimality, and susceptibility to initialization.

In view of these aspects, \textbf{our approach} enjoys the following properties. While being non-convex, there is a \textit{single natural} initialization only which makes obsolete the need to search for a good initialization. Furthermore, the approach returns a \textit{global} optimum (out of many), which corresponds to an image labeling (combinatorial solution) without the need of further post-processing. 

Clearly, the latter property is typical for concave minimization formulations of combinatorial optimization problems \cite{Horst1996} where solutions of the latter problem are enforced by weighting the concave penalty sufficiently large. Yet, in such cases, and in particular so when working in high dimensions as in image analysis, the problem persists to determine good initializations and to carefully design the numerics (search direction, step-size selection, etc.), in order to ensure convergence and a reasonable convergence rate.
\end{itemize}

\begin{figure}
\centering
\includegraphics[width=0.5\textwidth]{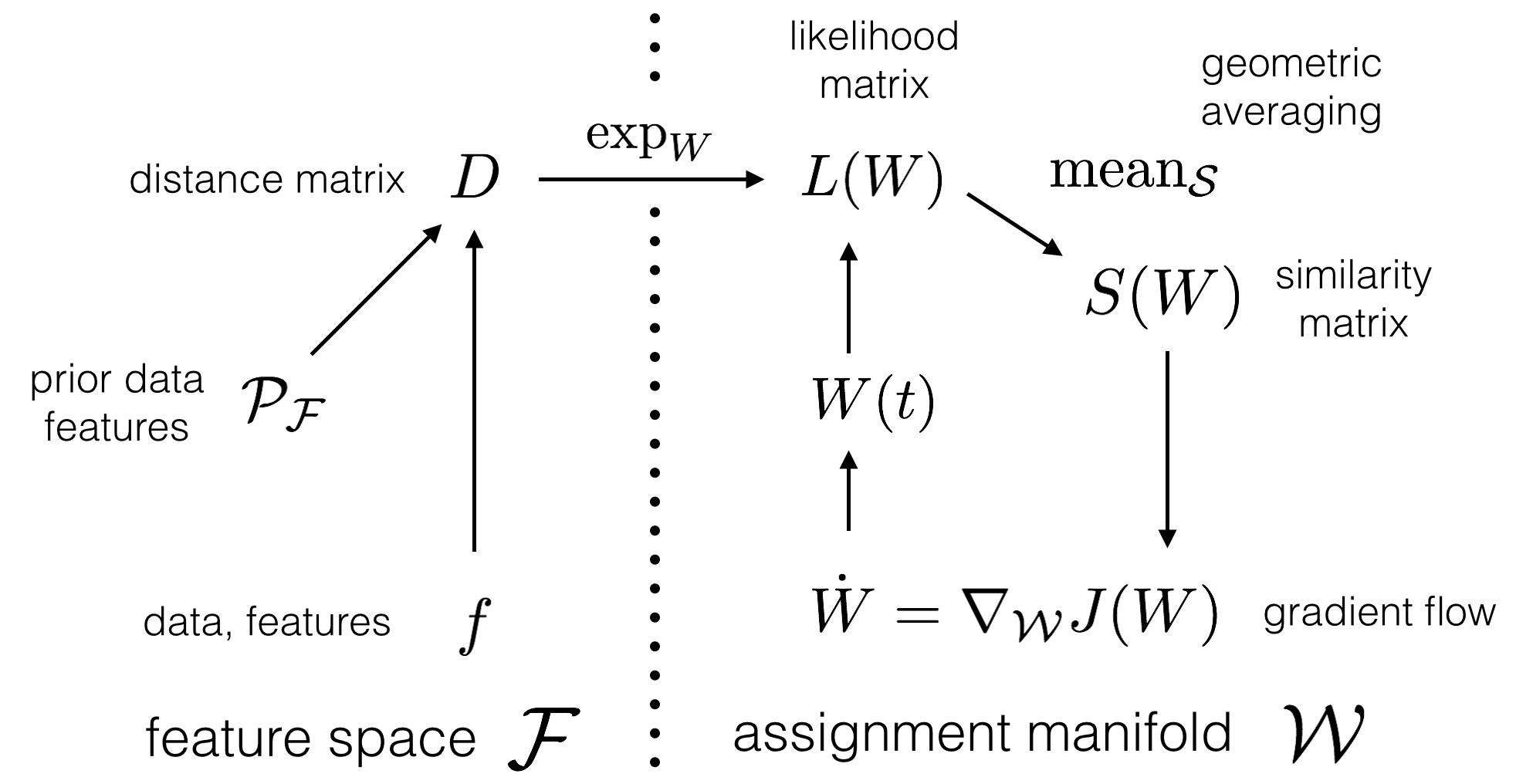}
\caption{
Overview of the variational approach. Given data and prior features in a metric space $\mc{F}$, inference corresponds to a Riemannian gradient flow with respect to an objective function $J(W)$ on the assignment manifold $\mc{W}$. The curve of matrices $W(t)$ assigns at each $t$ prior data $\mc{P}_{\mc{F}}$ to observed data $f$ and terminates at a global maximum $W^{\ast}$ that constitutes a labeling, i.e.~a unique assignment of a single prior datum to each data point. Spatial coherence of the labeling field is enforced by geometric averaging over spatial neighborhoods. The entire dynamic process on the assignment manifold achieves a MAP-labeling in a smooth, geometrical setting, realized with sparse interior-point numerics in terms of parallel multiplicative updates.
}
\label{fig:approach-overall}
\end{figure}

\subsection{Approach: Overview}
Figure \ref{fig:approach-overall} illustrates our set-up and the approach. We distinguish the feature space $\mc{F}$, that models all application-specific aspects, and the assignment manifold $\mc{W}$ used for modelling the image labeling problem and for computing a solution. This distinction avoids to mix up physical dimensions, specific data formats etc.~with the representation of the inference problem. It ensures broad applicability to any application domain that can be equipped with a metric which properly reflects data similarity. It also enables to normalize the representation used for inference, so as to remove any bias towards a solution \textit{not} induced by the data at hand.

We consider \textit{image labeling} as the task to assign to the image data an arbitrary prior data set $\mc{P}_{\mc{F}}$, provided the distance of its elements to any given data element can be measured by a distance function $d_{\mc{F}}$, which the user has to supply. Basic examples for the elements of $\mc{P}_{\mc{F}}$ include prototypical feature vectors, patches, etc.  Collecting all pairwise distance data into a distance matrix $D$, which could be computed on the fly for extremely large problem sizes, provides the input data to the inference problem.

The mapping $\exp_{W}$ lifts the distance matrix to the assignment manifold $\mc{W}$. The resulting likelihood matrix $L$ constitutes a normalized version of the distance matrix $D$ that reflects the initial feature space geometry as given by the distance function $d_{\mc{F}}$. Each point on $\mc{W}$, like the matrices $L, S$ and $W$, are \textit{stochastic matrices} with strictly positive entries, that is with row vectors that are discrete probability distributions having full support. Each such row vector indexed by $i$ represents the \textit{assignment} of prior elements of $\mc{P}_{\mc{F}}$ to the given datum a location $i$, in other words, the \textit{labeling} of datum $i$. We equip the set of all such matrices with the geometry induced by the Fisher-Rao metric and call it \textit{assignment manifold}.

The inference task (image labeling) is accomplished by \textit{geometric averaging} in terms of Riemannian means of assignment vectors over spatial neighborhoods. This step transforms the likelihood matrix $L$ into the similarity matrix $S$. It also induces a dependency of labeling decisions on each other, akin to the prior (regularization) terms of the established variational approaches to image labeling, as discussed in the preceding section. These dependencies are resolved by maximizing the correlation (inner product) between the assignment in terms of the matrix $W$ and the similarity matrix $S$, where the latter matrix is induced by $W$ as well. The Riemannian gradient flow of the corresponding objective function $J(W)$, that is highly nonlinear but smooth, evolves $W(t)$ on the manifold $\mc{W}$ until a fixed point is reached which terminates the loop on the right-hand side of Figure \ref{fig:approach-overall}. The resulting fixed point corresponds to an \textit{image labeling} which \textit{uniquely} assigns to each datum a prior element of $\mc{P}_{\mc{F}}$.

Adopting a probabilistic Bayesian viewpoint, this fixed-point iteration may be viewed as maximum a-posterior inference carried out in a geometric setting with multiplicative, sparse and highly parallel numerical operations.

\subsection{Further Related Work} \label{sec:Further-Related-Work}
Besides current research on image labeling, there are further classes of approaches that resemble our approach. We briefly sketch each of them in turn and highlight similarities and differences. 
\begin{description}
\item[Neighborhood Filters]
A large class of approaches to \textit{denoising} of given image data $f$ are defined in terms of neighborhood filters, that iteratively perform operations of the form
\begin{equation} \label{eq:neighborhood-filter}
u_{i}^{(k+1)} = \sum_{j} \frac{K(x_{i}, x_{j}, u_{i}^{(k)}, u_{j}^{(k)})}{\sum_{l} K(x_{i}, x_{l}, u_{i}^{(k)}, u_{l}^{(k)})} u_{j}^{(k)},\qquad u(0)=f,\qquad \forall i,
\end{equation}
where $K$ is a nonnegative kernel function that is symmetric with respect to the two indexed locations (e.g.~$i,j$ in the numerator) and may depend on both the spatial distance $\|x^{i}-x^{j}\|$ and the values $|u_{i}-u_{j}|$ of pairs of pixels. Maybe the most promiment example is the non-local means filter \cite{Buades-et-al-06} where $K$ depends on the distance of \textit{patches} centered at $i$ and $j$, respectively. We refer to  \cite{Milanfar2013} for a recent survey.

Noting that \eqref{eq:neighborhood-filter} is a linear operation with a row-normalized nonnegative (i.e.~stochastic) matrix, a similar situation would be
\begin{equation}
u_{i} = \sum_{j} L_{ij}(W) u_{j},
\end{equation}
with the likelihood matrix from Fig.~\ref{fig:approach-overall}, if we would replace the prior data $\mc{P}_{\mc{F}}$ with the given image data $f$ itself and adopt a distance function $d_{\mc{F}}$, in order to mimick the kernel function $K$ of \eqref{eq:neighborhood-filter}. 

In our approach, however, the likelihood matrix along with its nonlinear geometric transformation, the similarity matrix $S(W)$, evolves along with the evolution of assignment matrix $W$, so as to determine a labeling with \textit{unique} assignments to each pixel $i$, rather than convex combinations as required for denoising. Furthermore, the prior data set $\mc{P}_{\mc{F}}$ that is assigned in our case, may be very different from the given image data and, accordingly, the assignment matrix may have any rectangular shape rather than being a quadratic $m \times m$ matrix.

Conceptually, we are concerned with \textit{decision making} (labeling, partitioning, unique assignments) rather than with mapping one image to another one. Whenever the prior data $\mc{P}_{\mc{F}}$ comprise a finite set of \textit{prototypical} image values or patches, such that a mapping of the form
\begin{equation} \label{eq:u=Wf=related}
u_{i} = \sum_{j} W_{ij} f_{j}^{\ast},\qquad f_{j}^{\ast} \in \mc{P}_{\mc{F}},\qquad \forall i,
\end{equation}
is well-defined, then this does result in a transformed image $u$ after having reached a fixed point of the evolution of $W$. This result then should not be considered as a denoised image, however. Rather, it merely illustrates the interpretation of the given data $f$ in terms of the prior data $\mc{P}_{\mc{F}}$ and a corresponding optimal assignment.
\item[Nonlinear Diffusion]
Neighborhood filters are closely related to iterative algorithms for numerically solving discretized diffusion equations. Just think of the basic 5-point stencil of the discrete Laplacian, the iterative averaging of nearest neighbors differences, and the large class of adaptive generalizations in terms of nonlinear diffusion filters 
\cite{Weickert1998}. More recent work directly addressing this connection includes \cite{Buades2006, DiffusionNonlocalFilters-09, Milanfar2013a}. 
The author of \cite{Milanfar2013a}, for instance, advocates the approximation of the matrix of \eqref{eq:neighborhood-filter} by a \textit{symmetric} (hence, doubly-stochastic) \textit{positive-definite} matrix, in order to enable interpretations of the denoising operation in terms of the spectral decomposition of the assignment matrix, and to make the connection to diffusion mappings on graphs.

The connection to our work is implicitly given by the discussion of the previous point, the relation of our approach to neighborhood filters. Roughly speaking, the application of our approach in the \textit{specific} case of assigning image data to image data, may be seen as some kind of nonlinear diffusion that results in an image whose degrees of freedom are given by the cardinality of the prior set $\mc{P}_{\mc{F}}$. We plan to explore the exact nature of this connection in more detail in our future work.
\item[Replicator Dynamics] Replicator dynamics and the corresponding equations are well known \cite{EvolutionaryGameDynamics-03}. They play a major role in models of various disciplines, including theoretical biology and applications of game-theory to economy. In the field of image analysis, such models have been promoted by Pelillo and co-workers, mainly to efficiently determine by continuous optimization techniques good local optima of intractable problems, like matchings through maximum-clique search in an association graph \cite{ReplicatorMaxCliqueGraphIsomorphism-99}. Although the corresponding objective functions are merely quadratic, the analysis of the corresponding equations is rather involved \cite{Bomze-DynamicQP-SIREV-02}. Accordingly, clever heuristics have been suggested to tame related problems of non-convex optimization \cite{AnnealedMaxClique-02}.

Regarding our approach, we aim to get rid of these issues -- see the discussion of ``Global optimality'' in Section \ref{sec:Motivation} -- through three ingredients: (i) a unique natural initialization, (ii) spatial averaging that removes spurious local affects of noisy data, and (iii) adopting the Riemannian geometry which determines the structure of the replicator equations, for both geometric spatial averaging and numerical optimization.
\item[Relaxation Labeling] 
The task of labeling  primitives in images has been formulated as a problem of contextual decision making already 40 years ago \cite{Rosenfeld1976,Hummel1983}. Originally, update rules were merely formulated in order to find mutually consistent individual label assignments. Subsequent research related these labeling rules to optimization tasks. We refer to \cite{RelaxationLabeling-97} for a concise account of the literature and for putting the approach on mathematically solid ground. Specifically, the so-called Baum-Eager theorem was applied in order to show that updates increase the mutual consistency of label assignments. Applications include pairwise clustering \cite{DominantSetsPairwiseClustering-07} that boils down to determining a local optimum by continuous optimization of a non-convex quadratic form, similar to the optimization tasks considered in \cite{ReplicatorMaxCliqueGraphIsomorphism-99} and \cite{Bomze-DynamicQP-SIREV-02}. We attribute the fact that these approaches have not been widely applied to the problems of non-convex optimization discussed above.

The measure of mutual consistency of our approach is non-quadratic and the Baum-Eager theorem about polynomial growth transforms does not apply. Increasing consistency follows from the Riemannian gradient flow that governs the evolution of label assignments. Regarding the non-convexity from the viewpoint of optimization, we believe that the set-up of our approach displayed by Fig.~\ref{fig:approach-overall} significantly alleviates these problems, in particular through the geometric averaging of assignments that emanates from a natural initialization.
\end{description}
We address again some of these points, that are relevant our future work, in Section \ref{sec:Conclusion}.

\subsection{Organization}
Section \ref{sec:Assignment-Manifold} summarizes the geometry of the probability simplex in order to define the assignment manifold, which is the basis of our variational approach. The approach is presented in Section \ref{sec:Approach} by repeating the discussion of Figure \ref{fig:approach-overall}, together with the mathematical details. Finally, several numerical experiments are reported in Section \ref{sec:Basic-Applications}. They are academical, yet non-trivial, and supposed to illustrate properties of the approach as claimed in the preceding sections. Specific applications of image labeling are not within the scope of this paper. We conclude and indicate  further directions of research in Section \ref{sec:Conclusion}.

\vspace{0.25cm}
\noindent
Major symbols and the basic notation used in this paper are listed in Appendix \ref{sec:Basic-Notation}. In order not to disrupt the flow of reading and reasoning, proofs and technical details, all of which are elementary but essentially complement the presentation and make this paper self-contained, are listed as Appendix \ref{sec:App-Proofs}.

%%%%%%%
%\newpage
%
\section{The Assignment Manifold}
\label{sec:Assignment-Manifold}

In this section, we define the feasible set for representing and computating image labelings in terms of assignment matrices $W \in \mc{W}$, the assignment manifold $\mc{W}$. The basic building block is the open probability simplex $\mc{S}$ equipped with the Fisher-Rao metric. We collect below and in Appendix \ref{sec:S-proofs} corresponding definitions and properties. 

For background reading and much more details on information and Riemannian geometry, we refer to \cite{Amari2000} and \cite{Jost2005}.

\subsection{Geometry of the Probability Simplex} 
\label{sec:Geometry-Simplex}

The relative interior $\mc{S}=\otop{\Delta}_{n-1}$ of the probability simplex given by \eqref{eq:rint-delta}
becomes a differentiable Riemannian manifold 
when endowed with the Fisher-Rao metric. In the present particular case, it reads (cf.~the notation \eqref{eq:def-Rmetric})
\begin{equation} \label{eq:metric-simplex}
\la u, v \ra_{p} := \big\la \frac{u}{\sqrt{p}}, \frac{v}{\sqrt{p}} \big\ra,\qquad
\forall u,v \in T_{p}\mc{S},
\end{equation}
with tangent spaces given by
\begin{equation} \label{eq:def-TSimplex}
T_{p}\mc{S} = \{ v \in \R^{n} \colon \la \eins, v \ra = 0 \},\qquad
p \in \mc{S}.
\end{equation}
We regard the scaled sphere $\mc{N}=2\Sp^{n-1}$ as manifold with Riemannian metric induced by the Euclidean inner product of $\R^{n}$.
The following diffeomorphism $\psi$ between $\mc{S}_{n}$ and the open subset $\psi(\mc{S}_{n}) \subset \mc{N}$, was suggested e.g.~by \cite[Section 2.1]{Kass1989} and \cite[Section 2.5]{Amari2000}.
\begin{definition}[Sphere-Map]
We call the diffeomorphism 
\begin{equation} \label{eq:def-psi-simplex-sphere}
\psi \colon \mc{S} \to \mc{N},\qquad
p \mapsto s = \psi(p) := 2 \sqrt{p},
\end{equation}
\textit{sphere-map} (see Fig.~\ref{fig:simplexSphereMap}).
\end{definition}
The sphere-map enables to compute the geometry of $\mc{S}$ from the geometry of the 2-sphere.
\begin{lemma} \label{lem:sphere-map}
The sphere-map $\psi$ \eqref{eq:def-psi-simplex-sphere} is an isometry, i.e.~the Riemannian metric is preserved. Consequently, lenghts of tangent vectors and curves are preserved as well.
\end{lemma}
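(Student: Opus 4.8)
The plan is to verify directly that the differential $d\psi_p$ carries the Fisher--Rao inner product \eqref{eq:metric-simplex} on $T_p\mc{S}$ to the Euclidean inner product on $T_{\psi(p)}\mc{N}$ inherited from $\R^n$, which by definition is what it means for $\psi$ to be an isometry; the statements about lengths of tangent vectors and curves are then immediate consequences, since the length of a curve is the integral of the norm of its velocity and $\psi$ preserves that norm pointwise.

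First I would compute the differential of $\psi(p) = 2\sqrt{p}$ componentwise. Writing $s = \psi(p)$ with $s_k = 2\sqrt{p_k}$, for a tangent vector $v \in T_p\mc{S}$ we get $(d\psi_p v)_k = \tfrac{\partial}{\partial p_k}\big(2\sqrt{p_k}\big)\, v_k = \frac{v_k}{\sqrt{p_k}}$, i.e.\ $d\psi_p v = v/\sqrt{p}$ in the componentwise notation of the paper. Next I would check that this really lands in the tangent space of $\mc{N}=2\Sp^{n-1}$ at $s$: since $\mc{N}$ is the sphere of radius $2$ centered at the origin, $T_s\mc{N} = \{w : \la s, w\ra = 0\}$, and indeed $\la s, d\psi_p v\ra = \sum_k 2\sqrt{p_k}\cdot \frac{v_k}{\sqrt{p_k}} = 2\sum_k v_k = 2\la \eins, v\ra = 0$ by \eqref{eq:def-TSimplex}. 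Then the core computation is that for $u,v \in T_p\mc{S}$,
\begin{equation*}
\la d\psi_p u,\, d\psi_p v\ra_{\R^n} = \Big\la \frac{u}{\sqrt{p}},\, \frac{v}{\sqrt{p}}\Big\ra = \la u, v\ra_p,
\end{equation*}
where the last equality is precisely the definition \eqref{eq:metric-simplex} of the Fisher--Rao metric. This establishes that $\psi$ is an isometry.

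Finally, for the consequences, I would note that for any $v \in T_p\mc{S}$ the norms agree, $\|d\psi_p v\|_{\R^n} = \la v, v\ra_p^{1/2} = \|v\|_p$, so tangent vector lengths are preserved; and for a $C^1$ curve $\gamma\colon [a,b]\to \mc{S}$ the image curve $\psi\circ\gamma$ has velocity $(\psi\circ\gamma)'(t) = d\psi_{\gamma(t)}\,\gamma'(t)$, hence the same speed at every $t$, so $\mrm{length}(\psi\circ\gamma) = \int_a^b \|d\psi_{\gamma(t)}\gamma'(t)\|\,dt = \int_a^b \|\gamma'(t)\|_{\gamma(t)}\,dt = \mrm{length}(\gamma)$. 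There is no real obstacle here: the only point requiring a moment's care is keeping the componentwise division notation straight and confirming that $\psi$ is indeed a diffeomorphism onto its open image in $\mc{N}$ (it maps into the open positive orthant of the sphere and is smoothly invertible via $p = (s/2)^2$), which is already asserted in the statement and in the cited references, so the argument is essentially a one-line chain-rule verification.
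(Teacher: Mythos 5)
Your proposal is correct and follows essentially the same route as the paper's own proof: compute $D\psi(p)[v]=v/\sqrt{p}$, check it lies in $T_{\psi(p)}\mc{N}$ via $\la\psi(p),D\psi(p)[v]\ra=2\la\eins,v\ra=0$, observe that the Euclidean inner product of the images equals the Fisher--Rao product \eqref{eq:metric-simplex}, and deduce preservation of curve lengths by integrating the pointwise equality of speeds (the paper phrases this last step by pulling a curve on $\mc{N}$ back through $\psi^{-1}$, but it is the same chain-rule computation in the opposite direction). No gaps.
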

\begin{proof} See Appendix \ref{sec:S-proofs}. \end{proof}
\noindent

In particular, geodesics as critical points of length functionals are mapped by $\psi$ to geodesics. As a consequence, we have
\begin{lemma}[Riemannian Distance on $\mc{S}$] \label{eq:geoDist-simplex}
The Riemannian distance on $\mc{S}$ is given by
\begin{equation} \label{eq:geodist}
d_{\mc{S}}(p,q) = 2 \arccos\bigg(\sum_{i \in [n]} \sqrt{p_{i} q_{i}} \bigg) \in [0,\pi).
\end{equation}
\end{lemma}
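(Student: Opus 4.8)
The plan is to transport the distance computation to the sphere $\mc{N} = 2\Sp^{n-1}$ via the sphere-map $\psi$, where geodesic distances are classical, and then pull the formula back. By Lemma~\ref{lem:sphere-map}, $\psi$ is an isometry, so $d_{\mc{S}}(p,q) = d_{\mc{N}}\big(\psi(p), \psi(q)\big)$ where $d_{\mc{N}}$ is the Riemannian distance on the round sphere of radius $2$ carrying the metric induced from $\R^{n}$. Hence it suffices to recall the geodesic distance on a sphere of radius $r$: for two points $s, t$ on the sphere $r\Sp^{n-1}$, the connecting great-circle arc has length $r \cdot \vartheta$, where $\vartheta \in [0,\pi]$ is the angle between $s$ and $t$ at the origin, i.e.\ $\vartheta = \arccos\big(\la s, t\ra / r^{2}\big)$.

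Carrying this out, first I would set $s = \psi(p) = 2\sqrt{p}$ and $t = \psi(q) = 2\sqrt{q}$, both lying on $2\Sp^{n-1}$ since $\sum_{i} p_{i} = \sum_{i} q_{i} = 1$ forces $\|s\|^{2} = \|t\|^{2} = 4$. Then
\[
\la s, t \ra = \Big\la 2\sqrt{p},\, 2\sqrt{q} \Big\ra = 4 \sum_{i \in [n]} \sqrt{p_{i} q_{i}},
\]
so with $r = 2$ the angle at the origin is $\vartheta = \arccos\Big(\sum_{i \in [n]} \sqrt{p_{i} q_{i}}\Big)$, and therefore $d_{\mc{N}}(s,t) = 2\vartheta = 2\arccos\Big(\sum_{i \in [n]} \sqrt{p_{i} q_{i}}\Big)$, which is exactly \eqref{eq:geodist}. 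The range assertion follows because $\sum_{i} \sqrt{p_{i} q_{i}} \in (0,1]$: strict positivity holds since $p, q \in \mc{S}$ have full support, and the Cauchy–Schwarz inequality $\sum_{i} \sqrt{p_{i}}\sqrt{q_{i}} \le \big(\sum_{i} p_{i}\big)^{1/2}\big(\sum_{i} q_{i}\big)^{1/2} = 1$ gives the upper bound, with equality iff $p = q$; consequently $\arccos$ of that quantity lies in $[0, \pi/2)$ and the distance lies in $[0,\pi)$.

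The one point requiring a little care — and the main (mild) obstacle — is justifying that the minimizing geodesic on $\psi(\mc{S})$ between $s$ and $t$ is really the great-circle arc of the ambient sphere, rather than some shorter path leaving the open subset $\psi(\mc{S}) \subsetneq \mc{N}$. Here one uses that $\psi(\mc{S})$ is the open ``positive octant'' $\{s \in 2\Sp^{n-1} : s_{i} > 0\ \forall i\}$, which is geodesically convex in $\mc{N}$: the short great-circle arc between any two points with all coordinates positive again has all coordinates positive (each coordinate along the arc is a positive combination of $s_{i}$ and $t_{i}$ up to the usual $\sin$-weights, hence stays positive for arc-length $< \pi$, and the arc here has length $< \pi$). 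Since great circles are the geodesics of the round sphere and the isometry $\psi$ carries geodesics of $\mc{S}$ to geodesics of $\psi(\mc{S})$, this identifies $d_{\mc{S}}(p,q)$ with the arc length computed above, completing the proof.
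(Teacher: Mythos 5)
Your proposal is correct and follows essentially the same route as the paper: the paper obtains this lemma directly from Lemma \ref{lem:sphere-map}, pulling back the great-circle distance on the radius-$2$ sphere $\mc{N}$ through the isometry $\psi$, exactly as you do. Your additional verification that the short great-circle arc between points of $\psi(\mc{S})$ stays in the positive octant (so the ambient spherical distance is indeed realized within $\psi(\mc{S})$), and the Cauchy--Schwarz argument for the range $[0,\pi)$, are details the paper leaves implicit, and both are handled correctly.
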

\begin{figure}
%\centering
\begin{subfigure}[b]{0.25\textwidth}
\centering
\includegraphics[width=\textwidth]{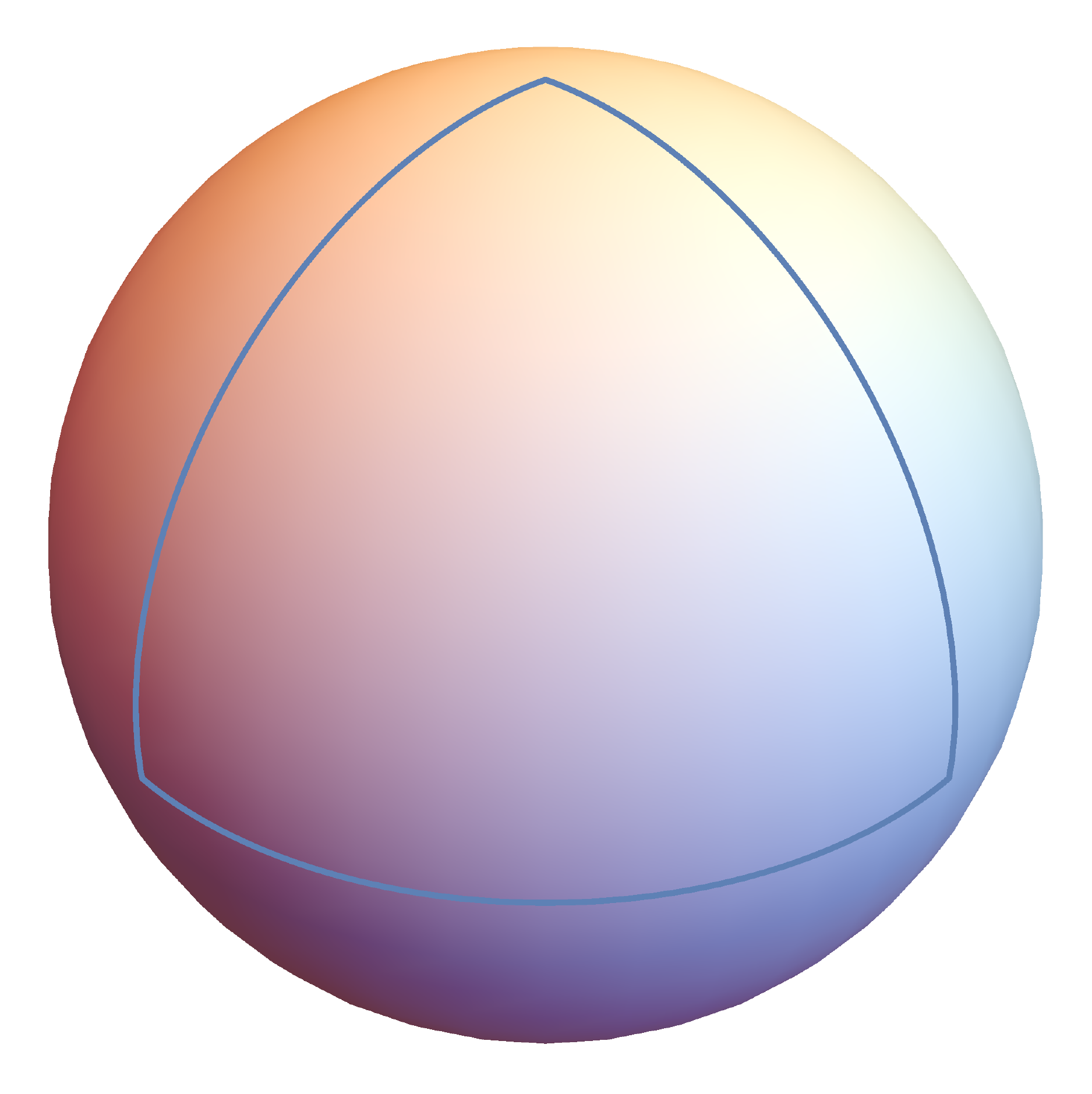}
\caption{
The triangle encloses the image $\psi(\mc{S}_{2}) \subset 2 \Sp^{2}$ of the simplex $\mc{S}_{2}$ under the sphere-map \eqref{eq:def-psi-simplex-sphere}.
}
\label{fig:simplexSphereMap}
\end{subfigure}%
\hfill%
\begin{subfigure}[b]{0.7\textwidth}
\centering
\includegraphics[width=0.45\textwidth]{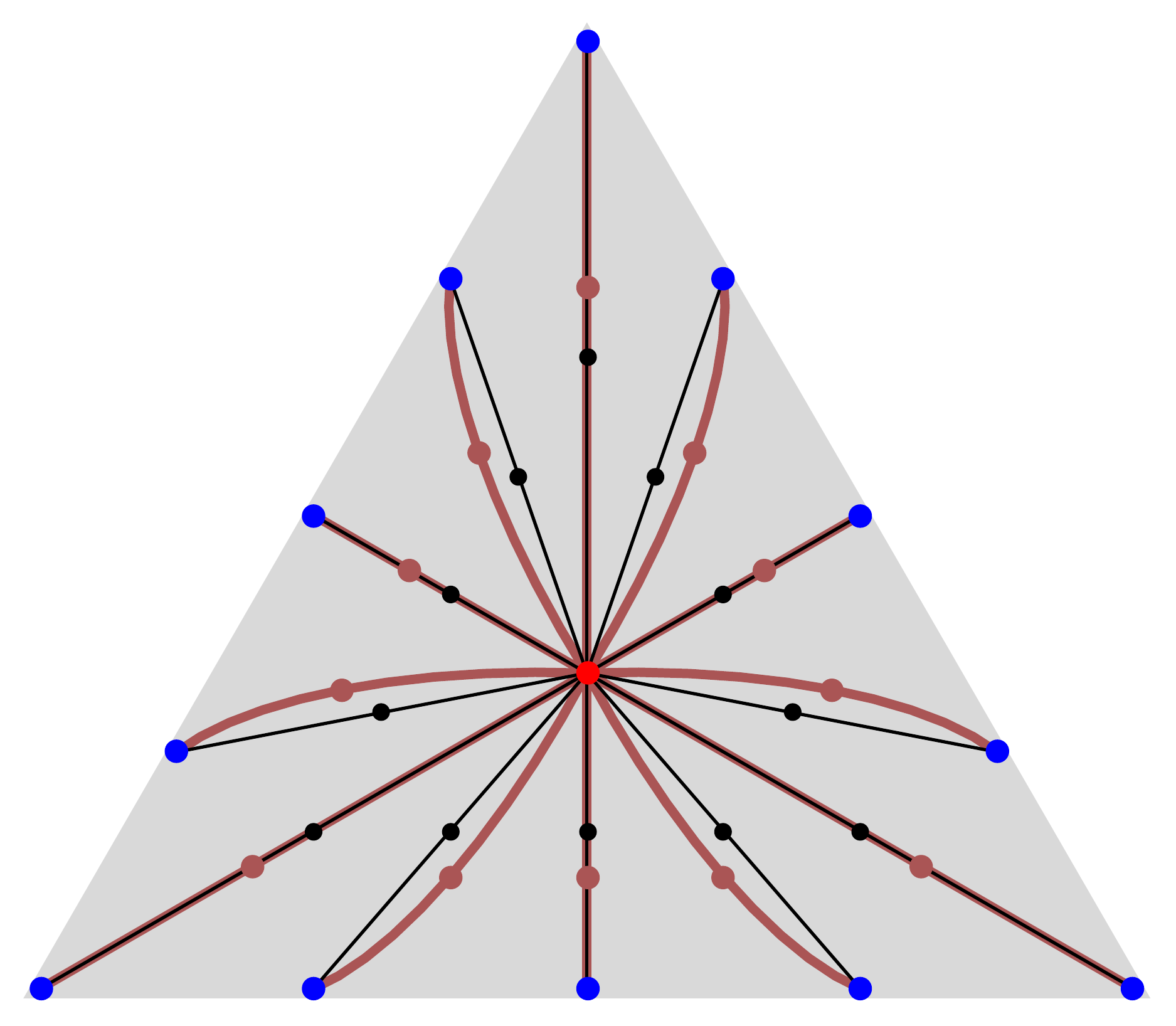}
\hspace{0.025\textwidth}
\includegraphics[width=0.45\textwidth]{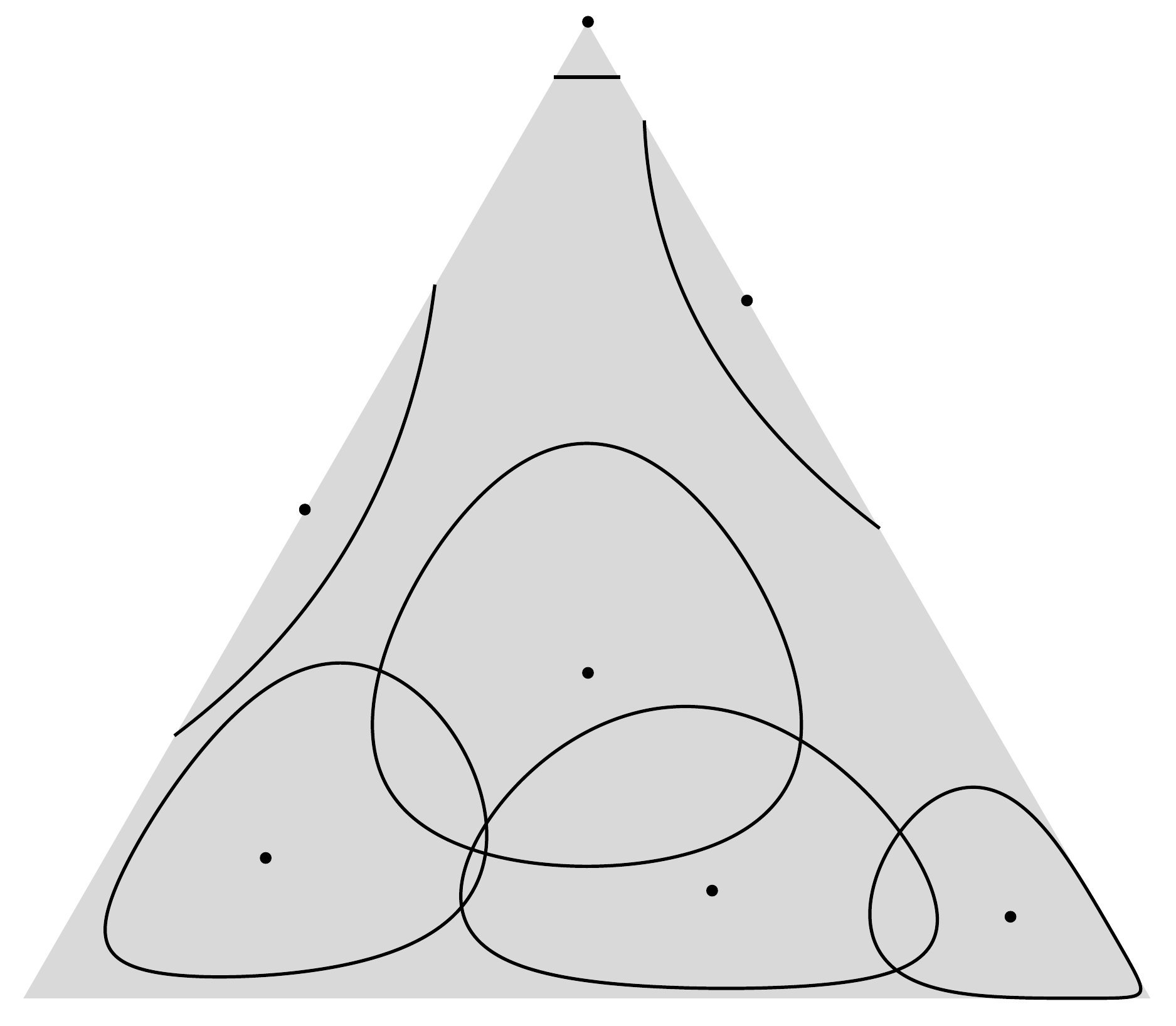}
\caption{
The geometry of the probability simplex induced by the Fisher-Rao Metric. The \textit{left panel} shows Euclidean (black) and non-Euclidean geodesics (brown) connecting the barycenter (red) and the blue points, along with the corresponding Euclidean and Riemannian means: In comparison to Euclidean means, geometric averaging pushes towards the boundary. The \textit{right panel} shows contour lines of points that have the same Riemannian distance from the respective center point (black dots). The different sizes of these regions indicates that geometric averaging causes a larger effect around the barycenters of both the simplex and its faces, where such points represent fuzzy labelings, and a smaller effect within regions close to the vertices (unit vectors).
}
\label{fig:simplexGeodesics}
\end{subfigure}
\caption{
Geometry of the probability simplex $\mc{S}_{2}$.
}
\label{fig:simplex-2-geometry}
\end{figure}
%%%%%
\begin{figure}
\centering
\includegraphics[width=0.5\textwidth]{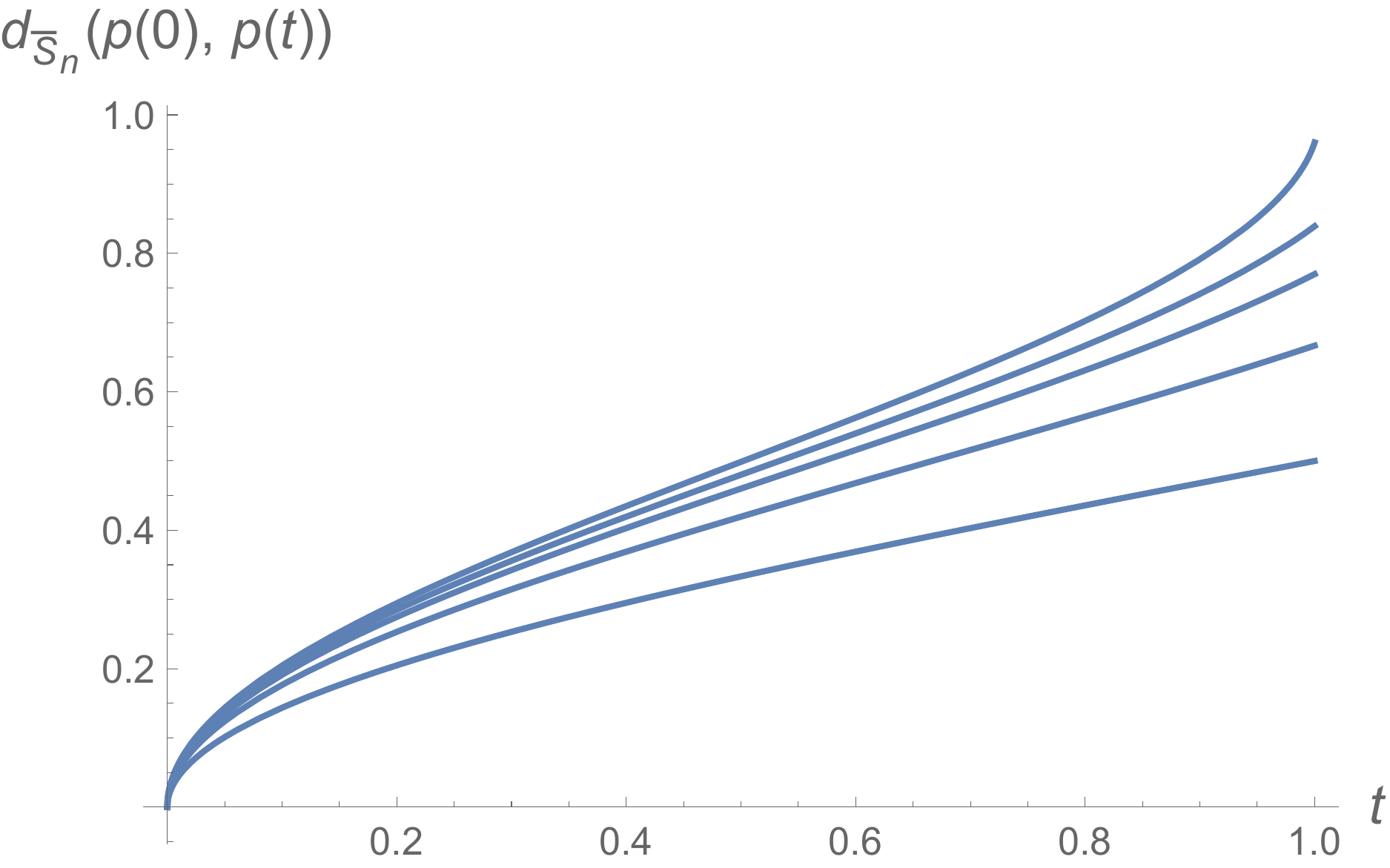}
\caption{
Each curve represents the Riemannian distances $d_{\ol{\mc{S}}_{n}}\big(p(0),p(t)\big)$ (normalized to [0,1]; Eq.~\eqref{eq:geodist}) of points on the curve $\{p(t)\}_{t \in [0,1]}$ that linearly (i.e., Euclidean) interpolates between the fixed vertex $p(0)=e^{1}$ of the simplex $\ol{\mc{S}}_{n}=\Delta_{n-1}$ and the barycenter $p(1)=\ol{p}$, for dimensions $n = 2^{k},\,k \in \{1,2,3,4,8\}$. As the dimension $n$ grows, the barycenter is located as far away from $e^{1}$ as all other boundary points $e^{i}, t e^{i}+(1-t) e^{j},\, t \in [0,1],\, i,j \neq 1$, etc., which have \textit{disjoint} supports. This entails a normalizing effect on the Riemannian mean of points that are far away, unlike with  Euclidean averaging where this influence increases with the Euclidean distance.
}
\label{fig:simplexDistances-over-d}
\end{figure}
The objective function for computing Riemannian means (geometric averaging;  see Definition \ref{def:Riemannian-mean} and Eq.~\eqref{eq:objective-Rmean} below) is based on the distance \eqref{eq:geodist}. Figure \ref{fig:simplexGeodesics} visualizes corresponding geodesics and level sets on $\mc{S}_{3}$, that differ for discrete distributions $p \in \mc{S}_{3}$ close to the barycenter and for low-entropy distributions close to the vertices. See also the caption of Fig.~\ref{fig:simplexGeodesics}.

It is well known from the literature (e.g.~\cite{Ball1997,Ledoux2001}) that geometries may considerably change in higher dimensions. Figure \ref{fig:simplexDistances-over-d} displays the Riemannian distances of points on curves that connect the barycenter and vertices on $\ol{\mc{S}}_{n}$ (to which the distance \eqref{eq:geodist} extends), depending on the dimension $n$. The normalizing effect on geometric averaging, further discussed in the caption, increases with $n$ and is relevant to image labeling, where large values of $n$ may occur in applications.

Let $\mc{M}$ be a smooth Riemannian manifold (see the paragraph around Eq.~\eqref{eq:differential-notation} introducing our notation). The Riemannian gradient $\nabla_{\mc{M}} f(p) \in T_{p}\mc{M}$ of a smooth function $f \colon \mc{M} \to \R$ at $p \in \mc{M}$ is the tangent vector defined by \cite[p.~89]{Jost2005}
\begin{equation} \label{eq:def-nabla_S}
\la \nabla_{\mc{M}} f(p), v \ra_{p} = Df(p)[v] 
= \la \nabla f(p), v \ra,\qquad
\forall v \in T_{p}\mc{M}.
\end{equation}
We consider next the specific case $\mc{M}=\mc{S}=\mc{S}_{n}$.
\begin{proposition}[Riemannian Gradient on $\mc{S}_{n}$] \label{prop:simplex-gradient}
For any smooth function $f \colon \mc{S} \to \R$, the Riemannian gradient of $f$ at $p \in \mc{S}$ is given by
\begin{equation} \label{eq:simplex-gradient}
\nabla_{\mc{S}} f(p) = p\big(\nabla f(p)-\la p, \nabla f(p) \ra \eins\big).
\end{equation}
\end{proposition}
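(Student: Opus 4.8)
The plan is to work directly from the defining relation \eqref{eq:def-nabla_S} of the Riemannian gradient. By \eqref{eq:def-nabla_S}, the vector $g := \nabla_{\mc{S}} f(p)$ is the unique element of $T_{p}\mc{S}$ satisfying
\[
\la g, v \ra_{p} = \la \nabla f(p), v \ra \quad \text{for all } v \in T_{p}\mc{S},
\]
where $\nabla f(p)$ denotes the ordinary Euclidean gradient. Existence and uniqueness pose no difficulty: by \eqref{eq:metric-simplex} the Fisher--Rao metric can be written $\la u, v \ra_{p} = \la \Diag(1/p)\,u, v \ra$, and since $p \in \mc{S}$ has strictly positive entries $\Diag(1/p)$ is positive definite, so the metric restricts to a genuine inner product on $T_{p}\mc{S}$. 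It therefore suffices to exhibit a tangent vector satisfying the identity above.

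First I would convert the constrained identity into an unconstrained linear equation. Since $\la g, v \ra_{p} = \la \Diag(1/p)\, g, v \ra$ and $T_{p}\mc{S} = \eins^{\perp}$ by \eqref{eq:def-TSimplex}, the identity says exactly that $\Diag(1/p)\, g - \nabla f(p)$ is Euclidean-orthogonal to $\eins^{\perp}$, i.e.\ $\Diag(1/p)\, g - \nabla f(p) = \lambda \eins$ for some scalar $\lambda \in \R$. Solving for $g$ gives $g = p\big(\nabla f(p) + \lambda \eins\big)$, i.e.\ $\Diag(p)$ applied to the bracket. It remains to fix $\lambda$ via the tangency condition $g \in T_{p}\mc{S}$, i.e.\ $\la \eins, g \ra = 0$; using $\la \eins, p \ra = 1$ this gives $\la p, \nabla f(p)\ra + \lambda = 0$, hence $\lambda = -\la p, \nabla f(p)\ra$, and substituting back produces precisely \eqref{eq:simplex-gradient}.

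Alternatively, one can simply verify the formula a posteriori: the right-hand side of \eqref{eq:simplex-gradient} has coordinates summing to $\la p, \nabla f(p)\ra - \la p, \nabla f(p)\ra\,\la \eins, p\ra = 0$, so it lies in $T_{p}\mc{S}$, and for any $v \in T_{p}\mc{S}$ one has $\la g, v\ra_{p} = \la \nabla f(p) - \la p, \nabla f(p)\ra\eins,\, v\ra = \la \nabla f(p), v\ra - \la p, \nabla f(p)\ra\,\la\eins, v\ra = \la \nabla f(p), v\ra$, the last term vanishing because $\la \eins, v\ra = 0$. I do not expect any real obstacle; the only things requiring care are keeping the Fisher--Rao metric $\la\cdot,\cdot\ra_{p}$ and the Euclidean inner product $\la\cdot,\cdot\ra$ clearly apart, and observing that the constraint $\la\eins,v\ra = 0$ does double duty — it lets the $\lambda\eins$ term be dropped when pairing against tangent vectors, while at the same time pinning down the value of $\lambda$ that keeps $g$ tangent.
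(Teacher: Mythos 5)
Your argument is correct and complete, but it takes a different route from the paper. The paper proves Proposition \ref{prop:simplex-gradient} by transporting the problem to the sphere via the isometry $\psi$ of \eqref{eq:def-psi-simplex-sphere}: it sets $g(s)=f(\psi^{-1}(s))$ on $\mc{N}=2\Sp^{n-1}$, uses the fact that on an embedded submanifold the Riemannian gradient is the Euclidean gradient orthogonally projected onto the tangent space, and then pulls the resulting vector field back through $\big(D\psi(p)\big)^{-1}=\Diag(\sqrt{p})$, finishing (as you also do) with a check of the defining relation \eqref{eq:def-nabla_S}. You instead work entirely intrinsically on $\mc{S}$: writing the Fisher--Rao metric as $\la u,v\ra_{p}=\la \Diag(1/p)u,v\ra$, you observe that the defining identity forces $\Diag(1/p)\,\nabla_{\mc{S}}f(p)-\nabla f(p)\in(\eins^{\perp})^{\perp}=\Span\{\eins\}$, and the tangency condition $\la\eins,\nabla_{\mc{S}}f(p)\ra=0$ pins down the multiplier as $-\la p,\nabla f(p)\ra$; your closing direct verification is likewise sound. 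Your derivation is more elementary and self-contained — it needs only the definition \eqref{eq:def-nabla_S}, positivity of $p$, and the identification of the orthogonal complement of $T_{p}\mc{S}$ — whereas the paper's route reuses the sphere-map machinery (Lemma \ref{lem:sphere-map}) that it needs anyway for geodesics, the exponential map, and the Riemannian distance, so the projection-and-pullback computation comes almost for free within that framework and keeps all the simplex geometry derived from a single source.
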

\begin{proof} See Appendix \ref{sec:S-proofs}. \end{proof}
The exponential map associated with the open probability simplex $\mc{S}$ is detailed next.
\begin{proposition}[Exponential Map (Manifold $\mc{S}$)] \label{prop:Exp}
The exponential mapping 
\begin{subequations} \label{eq:Exp-explicit}
\begin{gather}
\Exp_{p} \colon V_{p} \to \mc{S},\qquad
v \mapsto \Exp_{p}(v) = \gamma_{v}(1), \qquad p \in \mc{S},
\intertext{is given by} \label{eq:gamma-S}
\gamma_{v}(t) = \frac{1}{2} \Big(p + \frac{v_{p}^{2}}{\|v_{p}\|^{2}}\Big)
+ \frac{1}{2}\Big(p - \frac{v_{p}^{2}}{\|v_{p}\|^{2}}\Big) \cos\big(\|v_{p}\| t\big) + \frac{v_{p}}{\|v_{p}\|} \sqrt{p} \sin\big(\|v_{p}\| t\big),
\end{gather}
with $t=1$, $v_{p} = v/\sqrt{p},\, p = \gamma(0)$, $\dot\gamma_{v}(0)=v$ and
\begin{gather} \label{eq:S-def-Vp}
V_{p} = \big\{v \in T_{p}\mc{S} \colon \gamma_{v}(t) \in \mc{S},\; t \in [0,1]\big\}.
\end{gather}
\end{subequations}
\end{proposition}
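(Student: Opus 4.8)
The plan is to transport the problem to the round sphere via the isometry $\psi$ of Lemma~\ref{lem:sphere-map}, where geodesics are great circles that can be written down explicitly, and then to pull the resulting formula back to $\mc{S}$ by squaring componentwise. Concretely, I would first record the two facts furnished by $\psi$. Since $\psi$ is a Riemannian isometry onto the open subset $\psi(\mc{S}) \subset \mc{N} = 2\Sp^{n-1}$, it intertwines exponential maps, $\psi\big(\Exp_p(v)\big) = \Exp^{\mc{N}}_{\psi(p)}\big(D\psi(p)[v]\big)$, and a curve in $\mc{S}$ is a geodesic iff its $\psi$-image is a geodesic of $\mc{N}$. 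Differentiating $p \mapsto 2\sqrt{p}$ gives $D\psi(p)[v] = v/\sqrt{p} = v_p$; one checks that the sphere-tangency condition $\la \psi(p), v_p \ra = 0$ is exactly $\la \eins, v \ra = 0$, i.e.\ $v \in T_p\mc{S}$, and that $\|v_p\|$ is the Fisher--Rao norm of $v$, as it must be.

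Next I would write the geodesic of the sphere of radius $2$ through $s_0 = \psi(p) = 2\sqrt{p}$ with initial velocity $w_0 = v_p$ as the standard constant-speed great circle
\[
c(t) = 2\cos\!\Big(\tfrac{\|v_p\|}{2}\,t\Big)\sqrt{p} \;+\; 2\sin\!\Big(\tfrac{\|v_p\|}{2}\,t\Big)\frac{v_p}{\|v_p\|},
\]
verifying $\|c(t)\| \equiv 2$, $c(0) = s_0$, $\dot c(0) = w_0$. Applying $\psi^{-1}$, which is componentwise $s \mapsto s^2/4$, yields $\gamma_v(t)_i = \big(\cos\theta\sqrt{p_i} + \sin\theta\,(v_p)_i/\|v_p\|\big)^2$ with $\theta = \|v_p\|t/2$. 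Expanding the square and using the orthogonality $\la \sqrt{p}, v_p \ra = 0$, the identity $(v_p)_i\sqrt{p_i} = v_i$, and the double-angle formulas $2\cos^2\theta = 1 + \cos 2\theta$, $2\sin^2\theta = 1 - \cos 2\theta$, $2\sin\theta\cos\theta = \sin 2\theta$, the cross term and the squared terms collapse exactly into \eqref{eq:gamma-S}. Reading off $t = 0$ gives $\gamma_v(0) = p$, and differentiating at $t = 0$ gives $\dot\gamma_v(0) = \|v_p\|\cdot\frac{v_p}{\|v_p\|}\sqrt{p} = v$ (again by $(v_p)_i\sqrt{p_i} = v_i$); since great circles are precisely the geodesics of $\mc{N}$ and $\psi$ is an isometry, $\gamma_v$ is the geodesic of $\mc{S}$ with these initial data, so $\Exp_p(v) = \gamma_v(1)$ on the set $V_p$ of \eqref{eq:S-def-Vp}.

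The calculation is entirely elementary; only two bookkeeping points deserve care. First, the degenerate case $v = 0$, where $v_p/\|v_p\|$ is undefined: there \eqref{eq:gamma-S} must be read as the constant curve $\gamma_0 \equiv p$, which is also the limit of \eqref{eq:gamma-S} as $\|v_p\| \to 0$. Second, $\psi$ is an isometry only onto the open set $\psi(\mc{S})$, so the great circle $c(t)$ represents a geodesic of $\mc{S}$ only while it stays inside $\psi(\mc{S})$ — which is precisely the restriction encoded in $V_p$, so that $\Exp_p$ is well defined on the nonempty, open, star-shaped $V_p \subset T_p\mc{S}$ and no separate argument is needed. I expect the trigonometric collapse of the expanded square into the exact form \eqref{eq:gamma-S} to be the most computation-heavy step, but it is routine.
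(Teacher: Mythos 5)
Your proposal is correct and follows essentially the same route as the paper's proof: transport via the isometry $\psi$ of Lemma~\ref{lem:sphere-map}, write the great-circle geodesic on $\mc{N}=2\Sp^{n-1}$ through $\psi(p)$ with velocity $v/\sqrt{p}$, and pull back by $\psi^{-1}$. The only difference is that you carry out the ``elementary computations'' (the trigonometric collapse and the checks of the initial conditions and of the degenerate case $v=0$) that the paper leaves implicit.
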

\begin{proof} See Appendix \ref{sec:S-proofs}. \end{proof}
\begin{remark} \label{rem:computing-Exp}
Checking the inclusion $v \in V_{p}$ due to \eqref{eq:S-def-Vp}, for a given tangent vector $v \in T_{p}\mc{S}$, is inconvenient for applications. Therefore, the mapping $\exp$ is defined below by Eq.~\eqref{eq:def-exp-vector} which approximates the exponential mapping $\Exp$, with the feasible set $V_{p}$ replaced by the entire space $T_{p}\mc{S}$ (Lemma \ref{prop:Exp-by-exp}). 

Accordingly, geometric averaging as defined next (Section \ref{sec:Riemannian-Mean}) based on $\Exp$, can be approximated as well using the mapping $\exp$. This is discussed in Section \ref{sec:computing-Rmeans}.
\end{remark}

%%%
\subsection{Riemannian Means}
\label{sec:Riemannian-Mean}

The \emph{Riemannian center of mass} is commonly called \textit{Karcher mean} or \textit{Fr\'{e}chet mean} in the more recent literature, in particular outside the field of mathematics. We prefer -- cf.~\cite{Karcher2014} -- the former notion and use the shorter term \textit{Riemannian mean}.
\begin{definition}[Riemannian Mean, Geometric Averaging] \label{def:Riemannian-mean}
The \textit{Riemannian mean} $\ol{p}$ of a set of points $\{p^{i}\}_{i \in [N]} \subset \mc{S}$ with corresponding weights $w \in \Delta_{N-1}$ minimizes the objective function
\begin{equation} \label{eq:objective-Rmean}
p \mapsto \frac{1}{2} \sum_{i \in [N]} w_{i} d_{\mc{S}}^{2}(p,p^{i})
\end{equation}
and satisfies the optimality condition \cite[Lemma 4.8.4]{Jost2005}
\begin{equation} \label{eq:grad-Rmean}
\sum_{i \in [N]} w_{i} \Exp_{\ol{p}}^{-1}(p^{i}) = 0,
\end{equation}
with the inverse of the exponential mapping $\Exp^{-1}_{p} \colon \mc{S} \to T_{p}\mc{S}$. We denote the Riemannian mean by
\begin{equation} \label{eq:notation-Rmean}
\mrm{mean}_{\mc{S},w}(\mc{P}),\qquad w \in \Delta_{N-1},\quad
\mc{P}=\{p^{1},\dotsc,p^{N}\},
\end{equation}
and drop the subscript $w$ in the case of uniform weights $w = \frac{1}{N} \eins_{N}$.
\end{definition}
\begin{lemma} \label{lem:Rmean-unique}
The Riemannian mean \eqref{eq:notation-Rmean} defined as minimizer of \eqref{eq:objective-Rmean} is unique for any data $\mc{P} = \{p^{i}\}_{i \in [n]} \subset \mc{S}$ and weights $w \in \Delta_{n-1}$.
\end{lemma}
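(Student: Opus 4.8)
The plan is to establish uniqueness of the Riemannian mean on $\mc{S}$ by exploiting the sphere-map $\psi$ (Lemma \ref{lem:sphere-map}) to transport the problem to the round sphere, and then to invoke the standard fact that Karcher means are unique on a geodesically convex set whose diameter is bounded by a quarter of the injectivity radius (equivalently, by $\frac{\pi}{2}$ times the radius for a sphere of constant positive curvature). The key observation is that $\psi(\mc{S}) = \psi(\otop{\Delta}_{n-1})$ is the relative interior of a spherical simplex sitting in the positive orthant of $2\Sp^{n-1}$, and that this set is geodesically convex with intrinsic diameter strictly less than $\pi$ — indeed, by Lemma \ref{eq:geoDist-simplex}, $d_{\mc{S}}(p,q) = 2\arccos\big(\sum_i\sqrt{p_i q_i}\big) < \pi$ for all $p,q \in \mc{S}$, since the sum of the geometric means of two full-support distributions is strictly positive. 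Because $\psi$ is an isometry, the objective function \eqref{eq:objective-Rmean} pulls back to $\tfrac12\sum_i w_i\, d_{\mc{N}}^2(s, s^i)$ on $\psi(\mc{S})$, whose minimizer — if it is unique on the sphere — pushes back to a unique minimizer on $\mc{S}$.

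First I would record the geometric facts: the scaled sphere $\mc{N} = 2\Sp^{n-1}$ has constant sectional curvature $\kappa = 1/4 > 0$, hence injectivity radius $\pi/\sqrt{\kappa} = 2\pi$ and, more to the point, convexity radius $\pi/(2\sqrt{\kappa}) = \pi$. Then I would verify that $\psi(\mc{S})$ lies in an open geodesic ball of radius less than $\pi$: concretely, every $s = 2\sqrt{p} \in \psi(\mc{S})$ has strictly positive coordinates and unit direction $s/\|s\| = \sqrt{p}$, so the spherical distance from $s$ to the ``barycenter direction'' $\tfrac{1}{\sqrt n}\eins$ is $2\arccos\big(\tfrac{1}{\sqrt n}\sum_i\sqrt{p_i}\big) < \pi$ because $\sum_i \sqrt{p_i} > 0$; this places all the data and all candidate means inside one strongly convex ball, away from any cut locus. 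Next I would appeal to the classical uniqueness theorem for the Riemannian center of mass on such a ball — the statement used in \cite[Lemma 4.8.4]{Jost2005} / Karcher's work, also referenced as \cite{Karcher2014} — which gives that the function $s \mapsto \tfrac12\sum_i w_i d_{\mc{N}}^2(s,s^i)$ is strictly geodesically convex on the ball and therefore has a unique minimizer. Finally I would transport this back through $\psi^{-1}$, using that the isometry maps the unique spherical minimizer to a point of $\mc{S}$ and carries geodesics to geodesics, so the pulled-back objective \eqref{eq:objective-Rmean} inherits the unique minimizer. I would also note that the stated hypothesis indexes the data by $[n]$ with weights in $\Delta_{n-1}$, but the argument is insensitive to the number of points, so it covers the general case of \eqref{eq:notation-Rmean} as well.

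The main obstacle — or at least the point requiring the most care — is the curvature bookkeeping: one must make sure the combination of (i) positive curvature $1/4$, (ii) the sphere's radius $2$, and (iii) the diameter bound $d_{\mc{S}} < \pi$ lands strictly inside the regime where the classical uniqueness/convexity theorem applies, i.e.\ inside a ball of radius $< \pi = \pi/(2\sqrt{\kappa})$ with no cut points. The bound $d_{\mc{S}}(p,q) < \pi$ is exactly the diameter of $\psi(\mc{S})$, so the data could in principle be ``antipodal-ish'' within the simplex; the saving grace is that $\psi(\mc{S})$ sits in the open positive orthant, which is itself contained in a geodesic ball of radius $< \pi$ about $\tfrac1{\sqrt n}\eins$, and this containment — rather than the pairwise-distance bound alone — is what must be invoked to guarantee strict convexity of the squared-distance functional. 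Everything else (isometry invariance of the objective, bijectivity of $\psi$, pushing critical points and minimizers through $\psi^{-1}$) is routine. Details are deferred to Appendix \ref{sec:S-proofs}.
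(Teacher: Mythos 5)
Your proposal follows essentially the same route as the paper's proof: transfer the problem to the sphere via the isometry $\psi$ of Lemma \ref{lem:sphere-map}, use the constant curvature $\kappa = 1/4$ of $\mc{N}=2\Sp^{n-1}$, invoke Karcher's uniqueness theorem for the Riemannian center of mass on a domain whose size is controlled by $\pi/(2\sqrt{\kappa})=\pi$, and pull the unique minimizer back through $\psi^{-1}$. Your additional care in checking containment of $\psi(\mc{S})$ in a suitable geodesic ball (rather than relying on the pairwise-distance bound alone) is just a slightly more precise verification of the same hypothesis the paper checks via its diameter argument.
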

\begin{proof}
Using the isometry $\psi$ given by \eqref{eq:def-psi-simplex-sphere}, we may consider the scenario transferred to the domain on the 2-sphere depicted by Fig.~\ref{fig:simplexSphereMap}. Due to \cite[Thm.~1.2]{Karcher1977}, the objective \eqref{eq:objective-Rmean} is convex along geodesics and has a unique minimizer within any geodesic Ball $\B_{r}$ with diameter upper bounded by $2 r \leq \frac{\pi}{2 \sqrt{\kappa}}$, where $\kappa$ upper bounds the sectional curvatures in $\B_{r}$. For the 2-sphere $\mc{N}$, we have $\kappa = 1/4$ constant, and hence the inequality is satisfied for the domain $\psi(\mc{S}) \subset \mc{N}$ which has geodesic diameter $\pi$.
\end{proof}
We call the computation of Riemannian means \textit{geometric averaging}. The implementation of this iterative operation and its efficient approximation by a closed-form expression are adressed in Section \ref{sec:Implementation}.

%%%
\subsection{Assignment Matrices and Manifold}
\label{sec:Assignment-Matrices}

A natural question is how to extend the geometry of $\mc{S}$ to stochastic matrices $W \in \R^{m \times n}$ with $W_{i} \in \mc{S},\, i \in [m]$, so as to preserve the information-theoretic properties induced by this metric (that we do not discuss here -- cf.~\cite{Cencov1982,Amari2000}).

This problem was recently studied by \cite{Montufar2014}. The authors suggested three natural definitions of manifolds. It turned out that all of them are slight variations of taking the product of $\mc{S}$, differing only by the scaling of the resulting product metric. As a consequence, we make the following
\begin{definition}[Assignment Manifold] \label{def:MW}
The manifold of assignment matrices, called \textit{assignment manifold}, is the set
\begin{equation} \label{eq:def-MW}
\mc{W} = \{W \in \R^{m \times n} \colon W_{i} \in \mc{S},\, i \in [m]\}.
\end{equation}
According to this product structure and based on \eqref{eq:metric-simplex}, the Riemannian metric is given by
\begin{equation} \label{eq:MW-metric}
\la U, V \ra_{W} := \sum_{i \in [m]} \la U_{i}, V_{i} \ra_{W_{i}},\qquad
U, V \in T_{W}\mc{W}.
\end{equation}
\end{definition}
Note that $V \in T_{W}\mc{W}$ means $V_{i} \in T_{W_{i}} \mc{S},\, i \in [m]$.
\begin{remark}
We call stochastic matrices contained in $\mc{W}$ \textit{assignment matrices}, due to their role in the variational approach (Section \ref{sec:Approach}).
\end{remark}

%%%%%%%%%%
\newpage
%%%
\section{Variational Approach}
\label{sec:Approach}

We introduce in this section the basic components of the variational approach and the corresponding optimization task, as illustrated by Figure \ref{fig:approach-overall}. 

%%%
\subsection{Basic Components}
\label{sec:Variational-Approach}

\subsubsection{Features, Distance Function, Assignment Task}
\label{sec:features-distance}
Let 
\begin{equation} \label{eq:data}
f \colon \mc{V} \to \mc{F},\qquad
i \mapsto f_{i},\qquad i \in \mc{V}=[m],
\end{equation}
denote any given data, either raw image data or features extracted from the data in a preprocessing step. In any case, we call $f$ \textit{feature}. At this point, we do not make any assumption about the \textit{feature space} $\mc{F}$ except that a \textit{distance function}
\begin{equation}
d_{\mc{F}} \colon \mc{F} \times \mc{F} \to \R,
\end{equation}
is specified. We assume that a finite subset of $\mc{F}$
\begin{equation} \label{eq:prior-data}
\mc{P}_{\mc{F}} := \{f^{\ast}_{j}\}_{j \in [n]},
\end{equation}
additionally is given, called \textit{prior set}. 
We are interested in the assignment of the prior set to the data in terms of an \textit{assignment matrix}
\begin{equation}
W \in \mc{W} \subset \R^{m \times n},
\end{equation}
with the manifold $\mc{W}$ defined by \eqref{eq:def-MW}. Thus, by definition, every row vector $0 < W_{i} \in \mc{S}$ is a discrete distribution with full support $\supp(W_{i})=[n]$. The element 
\begin{equation}
W_{ij} = \Pr(f^{\ast}_{j}|f_{i}),\qquad i \in [m],\quad j \in [n],
\end{equation}
quantifies the assignment of prior item $f^{\ast}_{j}$ to the observed data point $f_{i}$. We may think of this number as the \textit{posterior probability} that $f^{\ast}_{j}$ generated the observation $f_{i}$.

The \textit{assignment task} asks for determining an optimal assignment $W^{\ast}$, considered as ``explanation'' of the data based on the prior data $\mc{P}_{\mc{F}}$. We discuss next the ingredients of the objective function that will be used to solve assignment tasks.

\subsubsection{Distance Matrix} \label{sec:Distance-Matrix}
Given $\mc{F}, d_{\mc{F}}$ and $\mc{P}_{\mc{F}}$, we compute the \textit{distance matrix}
\begin{equation} \label{eq:uDist}
D \in \R^{m \times n},\quad
D_{i} \in \R^{n},\quad D_{ij} = \frac{1}{\rho} d_{\mc{F}} (f_{i},f^{\ast}_{j}),\quad \rho>0, \quad i \in [m],\quad j \in [n],
\end{equation}
where $\rho$ is the first (from two) \textit{user parameters} to be set. This parameter serves two purposes. It accounts for the unknown scale of the data $f$ that depends on the application and hence cannot be known beforehand. Furthermore, its value determines what subset of the prior features $f^{\ast}_{j},\, j \in [n]$ effectively affects the process of determining the assignment matrix $W$. This will be explained in detail in Section \ref{sec:Likelihood-Matrix} in connection with the subsequent processing stage that uses $D$ as input. We call $\rho$ \textit{selectivity parameter}.

Furthermore, we set
\begin{equation}
W = W(0),\qquad W_{i}(0) := \frac{1}{n} \eins_{n},\quad i \in [m].
\end{equation}
That is, $W$ is initialized with the uninformative \textit{uniform assignment} that is not biased towards a solution in any way.

\subsubsection{Likelihood Matrix} \label{sec:Likelihood-Matrix}
The next processing step is based on the following
\begin{definition}[Lifting Map (Manifolds $\mc{S}, \mc{W}$)]
The lifting mapping is defined by
\begin{subequations}
\begin{align}
\exp &\colon T\mc{S} \to \mc{S}, &
(p,u) &\mapsto \exp_{p}(u) = \frac{p e^{u}}{\la p, e^{u} \ra}, 
\label{eq:def-exp-vector} \\ \label{eq:def-exp-matrix}
\exp &\colon T\mc{W} \to \mc{W}, &
(W,U) &\mapsto \exp_{W}(U) = \bpm \exp_{W_{1}}(U_{1}) \\ \dots \\ \exp_{W_{m}}(U_{m}) \epm,
\end{align}
\end{subequations}
where $U_{i}, W_{i}, i \in [m]$ index the row vectors of the matrices $U, W$, and where the argument decides which of the two mappings $\exp$ applies.
\end{definition}
\begin{remark}
After replacing the arbitrary point $p \in \mc{S}$ by the barycenter $\frac{1}{n} \eins_{n}$, readers will recognize the \textit{softmax function} in \eqref{eq:def-exp-vector}, i.e.~$\la \frac{1}{n} \eins_{n}, e^{u} \ra^{-1} \big(\frac{1}{n} \eins_{n} e^{u}\big) = \frac{e^{u}}{\la \eins, e^{u} \ra}$.  This function is widely used in various application fields of applied statistics (e.g.~\cite{Sutton1999}), ranging from parametrizations of distributions, e.g.~for logistic classification \cite{Bishop-PRML06}, to other problems of modelling \cite{Luce1959} not related to our approach.

The lifting mapping generalizes the softmax function through the dependency on the base point $p$. In addition, it approximates geodesics and accordingly the exponential mapping $\Exp$, as stated next. We therefore use the symbol $\exp$ as mnemomic. Unlike $\Exp_{p}$, the mapping $\exp_{p}$ is defined on the entire tangent space, cf.~Remark \ref{rem:computing-Exp}.
\end{remark}
\begin{figure}
\centering
\includegraphics[width=0.3\textwidth]{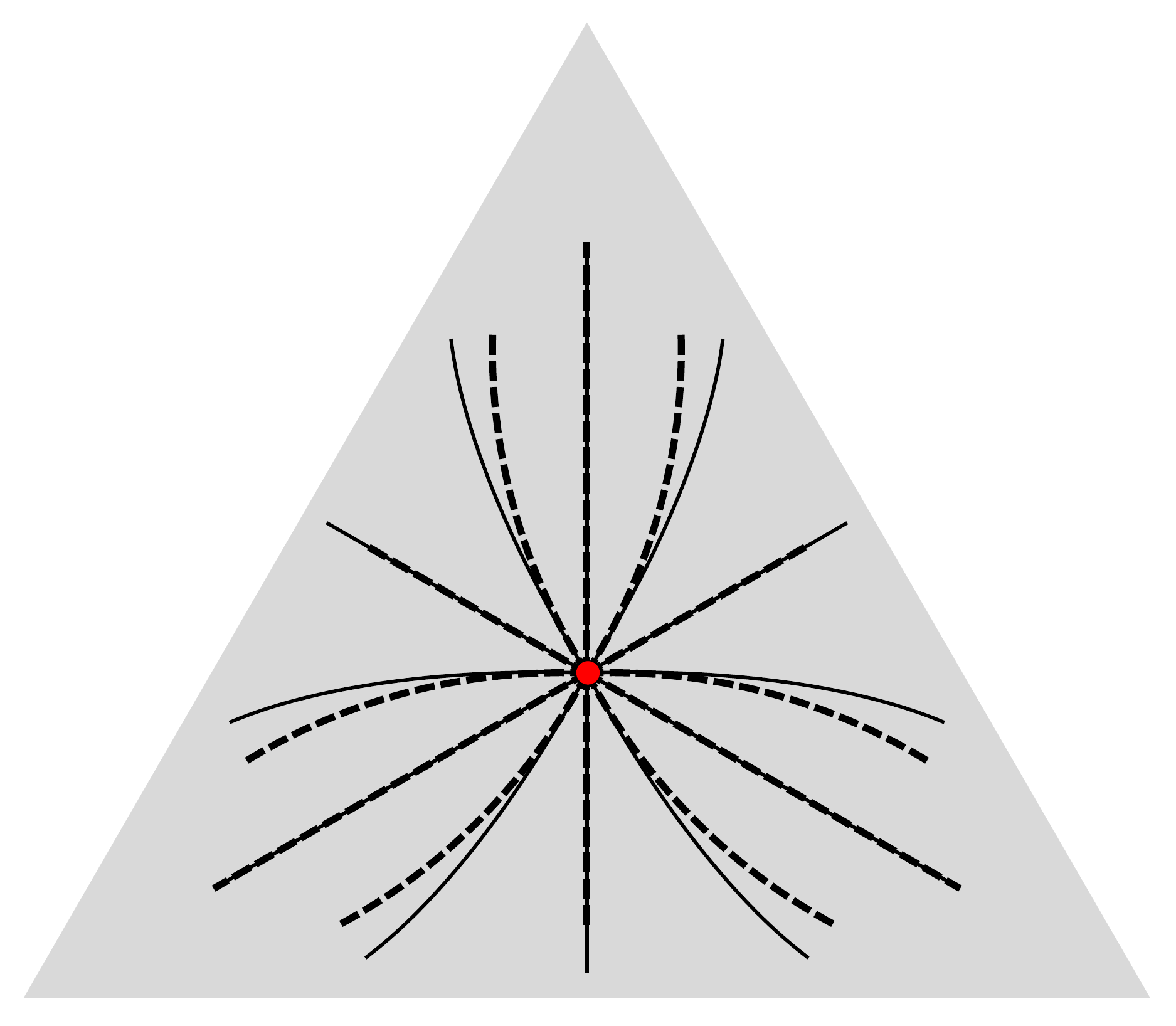}
\hspace{0.02\textwidth}
\includegraphics[width=0.3\textwidth]{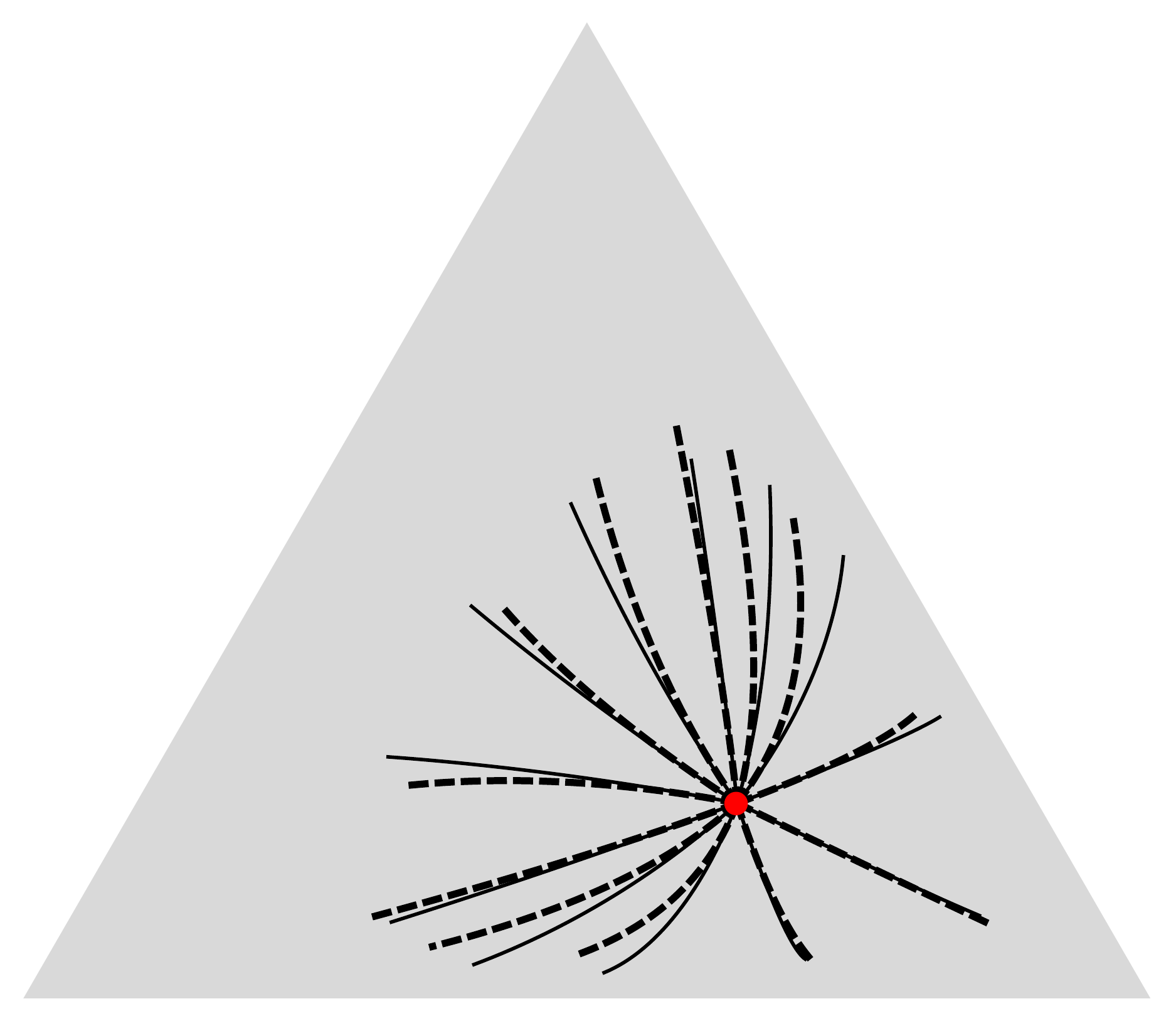}
\caption{
Illustration of Prop.~\ref{prop:Exp-by-exp}. Various geodesics $\gamma_{v^{i}}(t),\,i \in [k],\, t \in [t,t_{\max}]$ (solid lines) emanating from $p$ (red point) with the same speed $\|v^{i}\|_{p}=\|v^{j}\|_{p},\, \forall i,j$, are displayed together with the curves $\exp_{p}(u^{i} t),\,i \in [k],\, t \in [t,t_{\max}]$, where the vectors $u^{i}, v^{i},\, i \in [k]$ satisfy \eqref{eq:v-from-u}.
}
\label{fig:Exp-by-exp}
\end{figure}
\begin{proposition} \label{prop:Exp-by-exp}
Let 
\begin{equation} \label{eq:v-from-u}
v = \big(\Diag(p)-p p^{\T}\big) u,\qquad v \in T_{p}\mc{S}.
\end{equation}
Then $\exp_{p}(u t)$ given by \eqref{eq:def-exp-vector} solves
\begin{equation} \label{eq:exp-pt}
\dot p(t) = p(t) u - \la p(t), u \ra p(t),\qquad p(0)=p,
\end{equation}
and provides a first-order approximation of the geodesic $\gamma_{v}(t)$ from \eqref{eq:gamma-S}
\begin{equation} \label{eq:exp-approximates-gamma}
\exp_{p}(u t) \approx p + v t,\qquad
\|\gamma_{v}(t)-\exp_{p}(u t)\| = \mc{O}(t^{2}).
\end{equation}
\end{proposition}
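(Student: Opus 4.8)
The statement has three parts, which I would establish in the following order. First, I would verify that the explicit formula \eqref{eq:def-exp-vector}, evaluated along the ray $t \mapsto ut$, satisfies the ODE \eqref{eq:exp-pt}. Write $p(t) = \exp_p(ut) = \tfrac{p e^{tu}}{\la p, e^{tu}\ra}$, where $e^{tu}$ denotes the componentwise exponential and the product $p e^{tu}$ is componentwise. Differentiating the numerator gives $\tfrac{d}{dt}(p e^{tu}) = p u e^{tu}$ (componentwise), and differentiating the scalar denominator gives $\tfrac{d}{dt}\la p, e^{tu}\ra = \la p, u e^{tu}\ra = \la pe^{tu}, u\ra$. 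By the quotient rule,
\begin{equation}
\dot p(t) = \frac{p u e^{tu}}{\la p, e^{tu}\ra} - \frac{p e^{tu}}{\la p, e^{tu}\ra}\cdot\frac{\la p e^{tu}, u\ra}{\la p, e^{tu}\ra} = p(t) u - \la p(t), u\ra\, p(t),
\end{equation}
where in the last step I used that $p(t)u$ is the componentwise product of $p(t)$ with $u$ and that $\la p(t), u\ra = \la p e^{tu}, u\ra/\la p, e^{tu}\ra$. The initial condition $p(0) = pe^{0}/\la p, e^0\ra = p/\la p,\eins\ra = p$ holds since $p \in \mc{S}$. This is the part requiring the most care, but it is a direct computation.

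Second, for the first-order approximation $\exp_p(ut) \approx p + vt$, I would Taylor-expand $p(t)$ at $t=0$: $p(t) = p(0) + \dot p(0)\,t + \mc{O}(t^2)$. From the ODE just verified, $\dot p(0) = pu - \la p, u\ra p = (\Diag(p) - pp^\T)u = v$ by the defining relation \eqref{eq:v-from-u}. Hence $\exp_p(ut) = p + vt + \mc{O}(t^2)$, which is the claimed first-order match; note in passing that $v \in T_p\mc{S}$ since $\la \eins, v\ra = \la p, u\ra - \la p, u\ra\la p, \eins\ra = 0$.

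Third, for the error bound $\|\gamma_v(t) - \exp_p(ut)\| = \mc{O}(t^2)$, the strategy is to show that $\gamma_v$ from \eqref{eq:gamma-S} also has the Taylor expansion $\gamma_v(t) = p + vt + \mc{O}(t^2)$, so that the two curves agree to first order and their difference is $\mc{O}(t^2)$. This follows from the defining properties of the exponential map: $\gamma_v(0) = p$ and $\dot\gamma_v(0) = v$, which are recorded right after \eqref{eq:gamma-S}. (Alternatively one can expand the closed form \eqref{eq:gamma-S} directly, using $\cos(\|v_p\|t) = 1 + \mc{O}(t^2)$ and $\sin(\|v_p\|t) = \|v_p\|t + \mc{O}(t^3)$, and checking that the coefficient of $t$ reduces to $v_p\sqrt{p} = v/\sqrt{p}\cdot\sqrt{p}$, read componentwise, which equals $v$.) Subtracting the two expansions, the constant and linear terms cancel, leaving $\gamma_v(t) - \exp_p(ut) = \mc{O}(t^2)$ on any compact $t$-interval on which both curves are defined.

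\textbf{Main obstacle.} The only genuinely delicate point is bookkeeping: the formula \eqref{eq:def-exp-vector} mixes componentwise operations (the products $pe^{tu}$, $p(t)u$) with a scalar inner-product normalization, so one must be careful that $\tfrac{d}{dt}(pe^{tu}) = pue^{tu}$ componentwise and that $\la p(t), u\ra p(t)$ in \eqref{eq:exp-pt} is again a componentwise product; once this notation is pinned down the computation is routine. The approximation claims are then immediate from matching first-order Taylor data, using only $\gamma_v(0)=p$, $\dot\gamma_v(0)=v$, and the ODE verified in the first step.
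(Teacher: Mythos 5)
Your proposal is correct and follows essentially the same route as the paper's proof: a direct quotient-rule verification of the ODE \eqref{eq:exp-pt}, identification of $\dot p(0)$ with $v$ via \eqref{eq:v-from-u}, and a Taylor expansion of the closed-form geodesic \eqref{eq:gamma-S} (the alternative you mention parenthetically is exactly what the paper does, computing the quadratic term $\tfrac{1}{4}(v_p^2-\|v_p\|^2 p)t^2$ explicitly). No gaps.
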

\begin{proof} See Appendix \ref{sec:approach-proofs} \end{proof}
Figure \ref{fig:Exp-by-exp} illustrates the approximation of geodesics $\gamma_{v}$ and the exponential mapping $\Exp_{p}$, respectively, by the lifting mapping $\exp_{p}$.
\begin{remark} \label{rem:mean-of-u}
Note that adding any constant vector $c \eins,\, c \in \R$ to a vector $u$ does not change $\exp_{p}(u)$: $\frac{p e^{u+c \eins}}{\la p, e^{u+c \eins} \ra} = \frac{p (e^{c}\eins) e^{u}}{\la p, (e^{c}\eins) e^{u} \ra} = \frac{p e^{u}}{\la p, e^{u} \ra} = \exp_{p}(u)$. Accordingly, the same vector $v$ is generated by \eqref{eq:v-from-u}. While the definition \eqref{eq:def-exp-vector} removes this ambiguity, there is no need to remove the mean of the vector $u$ in numerical computations.
\end{remark}

Given $D$ and $W$ as described in Section \ref{sec:Distance-Matrix}, we lift the matrix
%\footnote{In view of Remark \ref{rem:mean-of-u}, we may regard the matrix $D$ both as vector field over $\mc{V}=[m]$ and as tangent vector field on $\mc{W}$ evaluated at $W$.} 
$D$ to the manifold $\mc{W}$ by
\begin{equation} \label{eq:def-L}
L = L(W) := \exp_{W}(-U) \in \mc{W},\qquad U_{i} = D_{i}-\frac{1}{n} \la \eins, D_{i} \ra \eins,\quad i \in [m],
\end{equation}
with $\exp$ defined by \eqref{eq:def-exp-matrix}. We call $L$ \textit{likelihood matrix} because the row vectors are discrete probability distributions which separately represent the similarity of each observation $f_{i}$ to the prior data $\mc{P}_{\mc{F}}$, as measured by the distance $d_{\mc{F}}$ in \eqref{eq:uDist}.

Note that the operation \eqref{eq:def-L} depends on the assignment matrix $W \in \mc{W}$. 

\subsubsection{Similarity Matrix} \label{sec:similarity-matrix}
Based on the likelihood matrix $L$, we define the \textit{similarity matrix} 
\begin{equation} \label{eq:def-S}
S = S(W) \in \mc{W},\qquad
S_{i} = \mrm{mean}_{\mc{S}}\{L_{j}\}_{j \in \tilde{\mc{N}}_{\mc{E}}(i)},\qquad i \in [m],
\end{equation}
where each row is the Riemannian mean \eqref{eq:notation-Rmean} (using uniform weights) of the likelihood vectors, indexed by
the neighborhoods as specified by the underying graph $\mc{G}=(\mc{V},\mc{E})$,
\begin{equation} \label{eq:def-Neps}
\tilde{\mc{N}}_{\mc{E}}(i) = \{i\} \cup \mc{N}_{\mc{E}}(i),\qquad
\mc{N}_{\mc{E}}(i) = \{j \in \mc{V} \colon ij \in \mc{E}\}.
\end{equation} 
Thus, $S$ represents the similarity of the data within a local spatial neighborhood to the prior data $\mc{P}_{\mc{F}}$.

Note that $S$ depends on $W$ because $L$ does so by \eqref{eq:def-L}. The \textit{size} of the neighbourhoods $|\tilde{\mc{N}}_{\mc{E}}(i)|$ is the \textit{second user parameter}, besides the selectivity parameter $\rho$ for scaling the distance matrix \eqref{eq:uDist}. Typically, each $\tilde{\mc{N}}_{\mc{E}}(i)$ indexes the same local ``window'' around pixel location $i$. We then call the window size $|\tilde{\mc{N}}_{\mc{E}}(i)|$ \textit{scale parameter}.
\begin{remark}
In basic applications, the distance matrix $D$ will not change once the features and the feature distance $d_{\mc{F}}$ are determined. On the other hand, the likelihood matrix $L(W)$ and the similarity matrix $S(W)$ have to be recomputed as the assignment $W$ evolves, as part of any numerical algorithm used to compute an optimal assignment $W^{\ast}$.

We point out, however, that more general scenarios are conceivable -- without essentially changing the overall approach -- where $D = D(W)$ depends on the assignment as well and hence has to be updated too, as part of the optimization process. Section \ref{sec:rectangles} provides an example.
\end{remark}

%%%
\subsection{Objective Function, Optimal Assignment}
\label{sec:Objective-Function}

We specify next the objective function as criterion for assignments and the gradient flow on the assignment manifold, to compute an optimal assignment $W^{\ast}$. Finally, based on $W^{\ast}$, the so-called assignment mapping is defined.

\subsubsection{Objective Function}
Getting back to the interpretation from Section \ref{sec:features-distance} of the assignment matrix $W \in \mc{W}$ as \textit{posterior probabilities},
\begin{equation} \label{eq:posterior-Wij}
W_{ij} = \Pr(f^{\ast}_{j}|f_{i}),
\end{equation}
of assigning prior feature $f^{\ast}_{j}$ to the observed feature $f_{i}$, a natural \textit{objective function} to be maximized is
\begin{equation} \label{eq:objective-orig}
\max_{W \in \mc{W}} J(W),\qquad J(W) := \la S(W),W \ra.
\end{equation}
The functional $J$ together with the feasible set $\mc{W}$ formalizes the following objectives:
\begin{enumerate}
\item
Assignments $W$ should \textit{maximally correlate} with the feature-induced similarities $S = S(W)$, as measured by the inner product which defines the objective function $J(W)$.
\item
Assignments of prior data to observations should be done in a \textit{spatially coherent} way. This is accomplished by \textit{geometric averaging} of likelihood vectors over local spatial neighborhoods, which turns the likelihood matrix $L(W)$ into the similarity matrix $S(W)$, \textit{depending} on $W$.
\item
Maximizers $W^{\ast}$ should define \textit{image labelings} in terms of rows $\ol{W}_{i}^{\ast} = e^{k_{i}} \in \{0,1\}^{n},\; i, k_{i} \in [m]$, that are indicator vectors. While the latter matrices are not contained in the assignment manifold $\mc{W}$ as feasible set, we compute in practice assignments $W^{\ast} \approx \ol{W}^{\ast}$ arbitrarily close to such points. It will turn out below that the \textit{geometry enforces} this approximation. 

As a consequence, in view of \eqref{eq:posterior-Wij}, such points $W^{\ast}$ \textit{maximize posterior probabilities}, akin to the interpretation of MAP-inference with discrete graphical models by minimizing corresponding energy functionals. As discussed in Section \ref{sec:Introduction}, however, the mathematical structure of the optimization task of our approach, and the way of fusing data and prior information, are quite different.
\end{enumerate}
The following statement formalizes the discussion of the form of desired maximizers $W^{\ast}$.
\begin{lemma} \label{lem:global-maxima}
We have
\begin{equation}
\sup_{W \in \mc{W}} J(W) = m,
\end{equation}
and the supremum is attained at the extreme points 
\begin{equation} \label{eq:def-extreme-points-W}
\ol{\mc{W}}^{\ast} := \big\{\ol{W}^{\ast} \in \{0,1\}^{m \times n} \colon \ol{W}^{\ast}_{i} = e^{k_{i}},\,i \in [m],\; k_{1},\dotsc,k_{m} \in [n]\big\} \subset \ol{\mc{W}},
\end{equation}
corresponding to matrices with unit vectors as row vectors.
\end{lemma}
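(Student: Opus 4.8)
The plan is to prove the two claims separately: first that $J(W) \le m$ for all $W \in \mc{W}$ with the value $m$ attained only in the limit at the extreme points $\ol{\mc{W}}^{\ast}$, and second that this bound is sharp, i.e.~$\sup_{W \in \mc{W}} J(W) = m$. For the upper bound, the key observation is that $S(W) \in \mc{W}$, so every row $S_i(W)$ is a probability vector with $\la \eins, S_i(W) \ra = 1$, and likewise $\la \eins, W_i \ra = 1$. Hence for each $i$ we have $\la S_i(W), W_i \ra \le \max_{j} (S_i(W))_j \cdot \la \eins, W_i \ra = \max_j (S_i(W))_j \le 1$, using that $W_i \ge 0$ and that a convex combination of the entries of $S_i(W)$ is bounded by the largest entry. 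Summing over $i \in [m]$ gives $J(W) = \sum_{i} \la S_i(W), W_i \ra \le m$. This shows $\sup_{W \in \mc{W}} J(W) \le m$.

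Next I would analyze the case of equality. From the chain of inequalities above, $\la S_i(W), W_i \ra = 1$ forces both $\max_j (S_i(W))_j = 1$ and $W_i$ to be supported on the arg-max set of $S_i(W)$. But $S_i(W)$ has entries summing to $1$, so $\max_j (S_i(W))_j = 1$ means $S_i(W) = e^{k_i}$ for some $k_i$, i.e.~$S_i(W)$ is itself a vertex. Then $W_i$ supported on $\{k_i\}$ forces $W_i = e^{k_i}$ as well. Thus equality $J(W) = m$ can only hold at a point $W = \ol{W}^{\ast} \in \ol{\mc{W}}^{\ast}$ — but such points lie on the boundary $\ol{\mc{W}} \setminus \mc{W}$ and are \emph{not} in the open manifold $\mc{W}$, since their rows do not have full support. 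This is consistent with the statement, which asserts the supremum is attained at the extreme points of the \emph{closure} $\ol{\mc{W}}$, not in $\mc{W}$ itself.

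It remains to show the supremum equals $m$, i.e.~that the bound is approached. Here I would take any fixed target vertex configuration $\ol{W}^{\ast} \in \ol{\mc{W}}^{\ast}$ and construct a sequence $W^{(t)} \in \mc{W}$ with $W^{(t)} \to \ol{W}^{\ast}$; for instance $W^{(t)}_i = (1-\tfrac1t) e^{k_i} + \tfrac1t \cdot \tfrac1n \eins_n$, which lies in $\mc{W}$ for $t > 1$. The subtlety is that $S(W)$ depends on $W$ through the Riemannian mean of the likelihood vectors $L_j(W)$, and one must check that $S(W^{(t)})$ behaves well. Since $\exp$ and the Riemannian mean operation are continuous on their domains, $L(W^{(t)}) \to L(\ol{W}^{\ast})$ (interpreting $\exp$ at a boundary point via its formula, which still makes sense since $e^u > 0$) and hence $S(W^{(t)}) \to S(\ol{W}^{\ast})$, so $J(W^{(t)}) = \la S(W^{(t)}), W^{(t)} \ra \to \la S(\ol{W}^{\ast}), \ol{W}^{\ast} \ra$. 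To make this limit equal $m$ one should choose $\ol{W}^{\ast}$ so that each row $S_i(\ol{W}^{\ast}) = e^{k_i}$ agrees with $\ol{W}^{\ast}_i = e^{k_i}$; the cleanest choice is a \emph{constant} labeling $k_i \equiv k$, for which $L_j(\ol{W}^{\ast}) = e^k$ for all $j$ (the lifting map sends the degenerate row to $e^k$ when that coordinate dominates), so the Riemannian mean is $e^k$ and $S_i = e^k = \ol{W}^{\ast}_i$, giving $\la S(\ol{W}^{\ast}), \ol{W}^{\ast} \ra = m$.

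The main obstacle I anticipate is the second part, namely carefully justifying the continuity/limit argument for $S(W)$ at the boundary: the Riemannian mean and the map $\exp_W$ are only defined a priori on the open manifold, so one must either argue by continuity of the closed-form expressions (which extend to the boundary since they only involve $e^u$, never $\log$ or division by zero as long as the reference row $W_i$ has been replaced by a strictly positive sequence) or restrict attention to sequences and use that $J$ is continuous in $W$ on $\mc{W}$ together with an explicit computable limit. Everything else — the upper bound and the equality analysis — is a short direct computation using only $\la \eins, W_i\ra = \la \eins, S_i \ra = 1$ and nonnegativity.
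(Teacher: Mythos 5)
Your proof is correct and rests on the same key fact as the paper's: $S(W)\in\mc{W}$, so each row $S_{i}(W)$ is a probability vector and $\la S_{i}(W),W_{i}\ra\le 1$ with equality impossible on the open manifold. The only differences are cosmetic and one genuine addition. Where you bound $\la S_{i}(W),W_{i}\ra$ by the maximal entry of $S_{i}(W)$ (a convex-combination argument), the paper uses Cauchy--Schwarz, $\la S_{i}(W),W_{i}\ra\le\|S_{i}(W)\|\,\|W_{i}\|<1$, exploiting that a probability vector has Euclidean norm $<1$ unless it is a vertex; both routes give the strict bound $J(W)<m$ on $\mc{W}$ and the same equality characterization ($W_{i}=S_{i}(W)=e^{k_{i}}$). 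What you add that the paper only asserts implicitly is the attainability half, i.e.\ that the supremum actually equals $m$: your sequence $W^{(t)}_{i}=(1-\tfrac1t)e^{k}+\tfrac1t\cdot\tfrac1n\eins_{n}$ with a \emph{constant} labeling, together with $L_{j}(W^{(t)})\to e^{k}$ and continuity of the geometric averaging (the mean of points in a small geodesic ball around $e^{k}$ stays in that ball, by the convexity underlying Lemma \ref{lem:Rmean-unique}), is exactly the right construction, and your observation that a constant labeling is needed is apt -- for a vertex matrix whose labels vary within a neighborhood, $S_{i}$ is a nontrivial mean of distinct vertices and the value falls short of $m$, so the lemma's phrase ``attained at the extreme points'' should be read with the paper's qualification that $W_{i}$ and $S_{i}(W)$ must coincide with the \emph{same} unit vector. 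Your sketch of the boundary-continuity step is informal but fixable exactly as you indicate, since the closed-form expressions for $\exp_{W}$ and the distance \eqref{eq:geodist} extend continuously to the closure.
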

\begin{proof} See Appendix \ref{sec:approach-proofs} \end{proof}

\subsubsection{Assignment Mapping}
Regarding the feature space $\mc{F}$, no assumptions were made so far, except for specifying a distance function $d_{\mc{F}}$. We have to be more specific about $\mc{F}$ only if we wish to \textit{synthesize} the approximation to the given data $f$, in terms of an assignment $W^{\ast}$ that optimizes \eqref{eq:objective-orig} and the prior data $\mc{P}_{\mc{F}}$. We denote the corresponding approximation by 
\begin{equation} \label{eq:def-u-map}
u \colon \mc{W} \to \mc{F}^{|\mc{V}|},\qquad
W \mapsto u(W),\qquad
u^{\ast} := u(W^{\ast}),
\end{equation}
and call it \textit{assignment mapping}. 

A trivial example of such a mapping concerns cases where prototypical feature vectors $f^{\ast j},\, j\in [n]$ are assigned to data vectors $f^{i},\, i \in [m]$: the mapping $u(W^{\ast})$ then simply replaces each data vector by the convex combination of prior vectors assigned to it,
\begin{equation} \label{eq:trivial-assignment-map}
u^{\ast i} = \sum_{j \in [n]} W_{ij}^{\ast} f^{\ast j},\qquad i \in [m].
\end{equation}
And if $W^{\ast}$ approximates a global maximum $\ol{W}^{\ast}$ as characterized by Lemma \ref{lem:global-maxima}, then each $f_{i}$ is (almost) uniquely replaced by some $u^{\ast k_{i}} = f^{\ast k_{i}}$.

A less trivial example is the case of prior information in terms of patches. We specify the mapping $u$ for this case and further concrete scenarios in Section \ref{sec:Basic-Applications}. 

%%%
\subsubsection{Optimization Approach}
\label{sec:Optimization}

The optimization task \eqref{eq:objective-orig} does not admit a closed-form solution. We therefore compute the assignment by the \textit{Riemannian gradient ascent flow} on the manifold $\mc{W}$,
\begin{subequations} \label{eq:W-gradient-flow}
\begin{align} \label{eq:W-gradient-flow-a}
\dot W_{ij} = \big(\nabla_{\mc{W}} J(W)\big)_{ij}
&= W_{ij} \Big(\big(\nabla_{i}J(W)\big)_{j} - \big\la W_{i},\nabla_{i}J(W) \big\ra \big), \quad W_{i}(0) = \frac{1}{n} \eins, \quad j \in [n],
\intertext{with}
\nabla_{i}J(W) &:= \pdv{W_{i}} J(W) = \Big(\pdv{W_{i1}} J(W),\dotsc,\pdv{W_{in}} J(W)\Big), \qquad i \in [m],
\end{align}
\end{subequations}
which results from applying \eqref{eq:simplex-gradient} to the objective \eqref{eq:objective-orig}. The flows \eqref{eq:W-gradient-flow}, for $i \in [m]$, are \textit{not} independent as the product structure of $\mc{W}$ (cf.~Section \ref{sec:Assignment-Matrices}) might suggest. Rather, they are coupled through the gradient $\nabla J(W)$ which reflects the interaction of the distributions $W_{i},\,i \in [m]$, due to the geometric averaging which results in the similarity matrix \eqref{eq:def-S}. 

Observe that, by \eqref{eq:W-gradient-flow-a} and $\la \eins, W_{i} \ra=1$,
\begin{equation}
\la \eins, \dot W_{i} \ra = \la \eins, W_{i} \nabla_{i} J(W) \ra - \la W_{i}, \nabla_{i} J(W) \ra \la \eins, W_{i} \ra = 0,\qquad i \in [m],
\end{equation} 
that is $\nabla_{\mc{W}} J(W) \in T_{W}\mc{W}$, and thus the flow \eqref{eq:W-gradient-flow-a} evolves on $\mc{W}$. Let $W(t) \in \mc{W},\, t \geq 0$ solve \eqref{eq:W-gradient-flow-a}. Then, with the Riemannian metric \eqref{eq:MW-metric},
\begin{equation} \label{eq:dot-J}
\dv{t} J\big(W(t)\big) = \big\la \nabla_{\mc{W}}J\big(W(t)\big), \dot W(t) \big\ra_{W(t)} \overset{\eqref{eq:W-gradient-flow-a}}{=}
\big\|\nabla_{\mc{W}}J\big(W(t)\big)\big\|_{W(t)}^{2} \geq 0,
\end{equation}
that is, the objective function value \textit{increases} until a stationary point is reached where the Riemannian gradient vanishes. Clearly, we expect $W(t)$ to approximate a global maximum due to Lemma 
\ref{lem:global-maxima}, which all satisfy the condition for stationary points $\ol{W}$,
\begin{equation} \label{eq:stationary-points}
0 = \dot {\ol{W}}_{i} = \ol{W}_{i}\big(\nabla_{i} J(\ol{W}) - \la \ol{W}_{i}, \nabla_{i} J(\ol{W}) \ra \eins\big),\qquad i \in [m],
\end{equation}
because replacing $\ol{W}_{i}$ in \eqref{eq:stationary-points} by $\ol{W}_{i}^{\ast} = e^{k_{i}}$ for some $k_{i} \in [n]$ makes the bracket vanish for the $k_{i}$-th equation, whereas all other equations indexed by $j \neq k_{i},\, j \in [n]$ are satisfied due to $\ol{W}_{ij}^{\ast}=0$.

Regarding \textit{interior} stationary points $\ol{W} \in \mc{W}$ with $\ol{W} \geq 0$ due to the definition of $\mc{W}$, all brackets on the r.h.s.~of \eqref{eq:stationary-points} must vanish, which can only happen if the Euclidean gradient satisfies
\begin{equation} \label{eq:stationary-cond-interior}
\nabla_{i} J(\ol{W}) = \la \ol{W}_{i}, \nabla_{i} J(\ol{W}) \ra \eins,\qquad
i \in [m]
\end{equation}
including the case $\nabla J(\ol{W})=0$. Inspecting the gradient of the objective function \eqref{eq:objective-orig}, we get
\begin{subequations} \label{eq:nabla-J-explicit}
\begin{align}
\pdv{W_{ij}} J(W)
&= \pdv{W_{ij}} \la S(W), W \ra
= \sum_{k,l} \pdv{W_{ij}} \big(S_{kl}(W) W_{kl}\big) 
\label{eq:nabla-J-1} \\ \label{eq:nabla-J-2}
&= \sum_{k,l} \Big(\pdv{W_{ij}} S_{kl}(W)\Big) W_{kl}
+ S_{ij}(W) =: \la T^{ij}(W), W \ra + S_{ij}(W),
\end{align}
\end{subequations}
where both matrices $S(W)$ and $T^{ij}(W) = \pdv{W_{ij}} S(W)$ depend in a smooth but involved way on the data \eqref{eq:data} and \eqref{eq:prior-data} through the distance matrix \eqref{eq:uDist}, the likelihood matrix \eqref{eq:def-L} and the geometric averaging \eqref{eq:def-S} which forms the similarity matrix $S(W)$. Regarding the second term on the r.h.s.~of \eqref{eq:nabla-J-2}, a computation relegated to Appendix \ref{sec:approach-proofs} yields
\begin{equation} \label{eq:Tij-matrices}
\la T^{ij}(W), W \ra 
= \sum_{k,l} -\Big(\big(H^{k}(W)\big)^{-1} h^{k,ij}(W)\Big)_{l} W_{kl}.
\end{equation}
The way to compute the somewhat unwieldy explicit form of the r.h.s.~is explained by \eqref{eq:def-H-h} and the corresponding appendix. In terms of these quantities, condition \eqref{eq:stationary-cond-interior} for stationary interior points translates to
\begin{equation} \label{eq:Tij-Sij-condition}
\la T^{ij}(\ol{W}), \ol{W} \ra + S_{ij}(\ol{W}) = 
\sum_{j} \big(\la T^{ij}(\ol{W}), \ol{W} \ra + S_{ij}(\ol{W})\big) \ol{W}_{ij},\qquad
\forall i \in [m], \quad \forall j \in [n]
\end{equation}
including the special case $S_{ij}(W) = -\la T^{ij}(W), W \ra,\; \forall i \in [m],j \in [n]$, corresponding to $\nabla J(\ol{W})=0$. Note that condition \eqref{eq:Tij-Sij-condition} requires that for every $i \in [m]$, the l.h.s.~takes the \textit{same} value for every $j \in [n]$, such that averaging with respect to $W_{i}$ on the r.h.s.~causes no change. 

We do not have evidence for the non-existence of specific data configurations, for which the flow \eqref{eq:W-gradient-flow} may reach such very specific stationary interior points. Any such point, however, will not be a maximum and be isolated, by virtue of the local strict convexity of the objective function \eqref{eq:objective-Rmean} for Riemannian means (cf.~Lemma \ref{lem:Rmean-unique} below), which determines the similarity matrix \eqref{eq:def-S}. Consequently, any perturbation (e.g.~by numerical computation) will let the flow escape from such a point, in order to maximize the objective due to \eqref{eq:dot-J}.

We summarize this reasoning by the
\begin{conjecture} \label{conj:convergence}
For any data \eqref{eq:data}, \eqref{eq:prior-data}, up to a subset of $\mc{W}$ of measure zero, the flow $W(t)$ generated by \eqref{eq:W-gradient-flow} approximates a global maximum as defined by \eqref{eq:def-extreme-points-W} in the sense that, for any $0 < \veps \ll 1$, there is a $t=t(\veps)$ such that 
\begin{equation} \label{eq:maxima-approximation}
\big\|W\big(t(\veps)\big) - \ol{W}^{\ast}\big\| \leq \veps,\qquad
\text{for some}\quad \ol{W}^{\ast} \in \ol{\mc{W}}^{\ast}.
\end{equation}
\end{conjecture}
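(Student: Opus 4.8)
The plan is to prove Conjecture~\ref{conj:convergence} in three stages: first establish that the flow is well-defined and convergent to a stationary point, then classify the stationary points, and finally argue that non-extreme stationary points are avoided for almost all data. For the first stage, I would note that $J$ is smooth and bounded above by $m$ (Lemma~\ref{lem:global-maxima}), and that along the flow $\dv{t} J(W(t)) = \|\nabla_{\mc{W}} J(W(t))\|_{W(t)}^{2} \geq 0$ by \eqref{eq:dot-J}. Since $J$ is real-analytic in $W$ (the distance matrix is fixed data, the lifting map $\exp$ is analytic, the Riemannian mean depends analytically on its arguments away from cut loci, and the inner product is polynomial), a \L{}ojasiewicz-gradient-inequality argument would give that $W(t)$ converges to a single stationary point $\ol{W}$ as $t \to \infty$; alternatively one can invoke that bounded trajectories of an analytic gradient flow have finite length. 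The subtlety here is that the flow lives on the open manifold $\mc{W}$, so a priori $W(t)$ could approach $\bd\mc{W}$; but the extreme points $\ol{\mc{W}}^{\ast} \subset \bd\mc{W}$ are precisely where $J$ attains its supremum, so accumulation at the boundary is exactly the desired behavior. One must rule out accumulation at non-extreme boundary points by the same stationarity analysis applied on the relevant face.

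For the second stage, I would use the stationarity characterization already derived in the excerpt: any limit point $\ol{W}$ satisfies \eqref{eq:stationary-points}, and interior limit points must satisfy the restrictive condition \eqref{eq:Tij-Sij-condition}. The argument in the text observes that interior stationary points, if they exist, are isolated because the Riemannian-mean objective \eqref{eq:objective-Rmean} is locally strictly geodesically convex (Lemma~\ref{lem:Rmean-unique}), so the map $W \mapsto S(W)$ is real-analytic with non-degenerate structure, forcing the solution set of \eqref{eq:Tij-Sij-condition} to be a proper analytic subvariety of $\mc{W}$, hence of measure zero. The same reasoning on each face of $\ol{\mc{W}}$ (a face corresponds to fixing which entries of certain rows are zero) shows that the only faces on which $J$ has an interior-to-the-face local maximum are the zero-dimensional ones, i.e.\ the points of $\ol{\mc{W}}^{\ast}$: on a positive-dimensional face, the restricted objective is again of the form $\la S, W\ra$ with the same non-degeneracy, so its critical points form a measure-zero analytic set within that face.

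For the third and decisive stage, I would invoke the stable-manifold theorem for gradient flows: the set of initial conditions whose $\omega$-limit is a given non-maximal stationary point $\ol{W}$ is contained in the stable manifold of $\ol{W}$, which has positive codimension (equal to the number of non-negative eigenvalues of the Hessian of $-J$ at $\ol{W}$, at least one since $\ol{W}$ is not a local max), hence measure zero; taking the countable union over the (at most countably many, by isolatedness) non-maximal stationary points still gives a measure-zero set. However — and this is the genuine gap that forces the statement to remain a conjecture rather than a theorem — the flow starts from the single fixed point $W_i(0) = \frac{1}{n}\eins_n$, not from a positive-measure set of initializations, so "almost every initialization" cannot be the right randomization. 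The correct reading, and the one I would pursue, is to randomize over the \emph{data}: the measure-zero exceptional set in the statement is a subset of data-space (equivalently of distance-matrix space $\R^{m\times n}$), and one must show that for data outside a measure-zero set, the barycenter $\frac{1}{n}\eins_n$ does not lie on any stable manifold of a non-maximal stationary point. This requires a transversality/Sard-type argument showing that as the data vary, the single initial point generically misses these positive-codimension stable manifolds — the main obstacle, because it demands quantitative control of how the stable manifolds of $-J$ move as the distance matrix $D$ varies, and the highly nonlinear dependence of $S(W)$ on $D$ through the Riemannian mean makes an explicit verification of the required transversality condition delicate. I would therefore present the first two stages rigorously and the third as the heuristic justification recorded in the conjecture, exactly as the surrounding text does.
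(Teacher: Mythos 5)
The statement you were asked to prove is, in the paper itself, only a \emph{conjecture}: the authors give no proof, only the informal discussion preceding it (monotone increase of $J$ along the flow by \eqref{eq:dot-J}, identification of the global maxima with the extreme points $\ol{\mc{W}}^{\ast}$ via Lemma~\ref{lem:global-maxima}, the observation that interior stationary points must satisfy the restrictive condition \eqref{eq:Tij-Sij-condition}, would not be maxima, would be isolated by the local strict convexity behind Lemma~\ref{lem:Rmean-unique}, and would be escaped under perturbation). Your proposal tracks exactly this reasoning in its first two stages and is therefore aligned with the paper's stance; your refusal to claim a complete proof is the correct reading of the situation.

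Two points of comparison are worth recording. First, your stages~1--2 promise more rigor than the paper actually delivers, and closing them would require work the paper does not do: the \L{}ojasiewicz/finite-length argument needs care because $J$, $L(W)$ and the Riemannian-mean map $W \mapsto S(W)$ are only defined and smooth on the \emph{open} manifold $\mc{W}$, while the limit points you want lie on $\bd\ol{\mc{W}}$, where the construction \eqref{eq:def-L}--\eqref{eq:def-S} degenerates (rows with vanishing entries, means of distributions with shrinking support); and your claim that the interior critical set \eqref{eq:Tij-Sij-condition} is a \emph{proper} analytic subvariety is precisely what the authors decline to assert -- they state explicitly that they have no evidence for the non-existence of data configurations realizing such points, only that such points would be non-maximal and isolated. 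Second, your third stage identifies the genuine obstruction more sharply than the paper does: the flow \eqref{eq:W-gradient-flow} has the single fixed initialization $W_{i}(0)=\frac{1}{n}\eins$, so a stable-manifold/measure-zero argument over initial conditions does not by itself apply, and the ``up to a subset of $\mc{W}$ of measure zero'' phrasing must either be read as a statement about generic initializations (at odds with the fixed start) or be converted, as you suggest, into a transversality statement over the data $D$, which nobody has established. In short: your proposal is a faithful and somewhat more precise elaboration of the paper's heuristic, it does not (and could not, from the text) close the conjecture, and its added value is the explicit identification of where the argument actually breaks -- boundary regularity, properness of the critical variety, and genericity with respect to the data rather than the initialization.
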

\begin{remark} $\text{ }$
\begin{enumerate}
\item Since $\ol{\mc{W}}^{\ast} \not\in \mc{W}$, the flow $W(t)$ cannot converge to a global maximum, and numerical problems arise when \eqref{eq:maxima-approximation} holds for $\veps$ very close to zero. Our strategy to avoid such problems is described in Section \ref{sec:W-normalization}.
\item Although global maxima are not attained, we agree to call a point $W^{\ast}=W(t)$ \textit{maximum} and \textit{optimal assignment}, that satisfies \eqref{eq:maxima-approximation} for some fixed small $\veps$. The criterion which terminates our algorithm is specified in Section \ref{sec:termination-criterion}.
\item Our numerical approximation of the flow \eqref{eq:W-gradient-flow} is detailed in Section \ref{sec:optimization-algorithm}.
\end{enumerate}
\end{remark}

%\vspace{0.5cm}
%%
\subsection{Implementation}
\label{sec:Implementation}

We discuss in this section specific aspects of the implementation of the variational approach.

\subsubsection{Assignment Normalization} \label{sec:W-normalization}
Because each vector $W_{i}$ approaches some vertex $\ol{W}^{\ast} \in \ol{\mc{W}}^{\ast}$ by construction, and  because the numerical computations are designed to evolve on $\mc{W}$, we avoid numerical issues by checking for each $i \in [m]$ every entry $W_{ij},\, j \in [n]$, after each iteration of the algorithm \eqref{eq:def-algorithm} below. Whenever an entry drops below $\veps=10^{-10}$, we rectify $W_{i}$ by
\begin{equation}
W_{i} \quad\leftarrow\quad \frac{1}{\la \eins, \tilde W_{i} \ra} \tilde W_{i},\qquad \tilde W_{i} = W_{i} - \min_{j \in [n]} W_{ij} + \veps,\qquad \veps = 10^{-10}.
\end{equation}
In other words, the number $\veps$ plays the role of $0$ in our impementation. Our numerical experiments (Section \ref{sec:Basic-Applications}) showed that this operation removed any numerical issues without affecting convergence in terms of the criterion specified in Section \ref{sec:termination-criterion}.

\subsubsection{Computing Riemannian Means} 
\label{sec:computing-Rmeans}
Computation of the similarity matrix $S(W)$ due to Eq.~\eqref{eq:def-S} involves the computation of Riemannian means. In view of Definition \eqref{def:Riemannian-mean}, we compute the Riemannian mean $\mrm{mean}_{\mc{S}}(\mc{P})$ of given points $\mc{P}=\{p^{i}\}_{i \in [N]} \subset \mc{S}$, using uniform weights, as fixed point $p^{(\infty)}$ by iterating the following steps.
\begin{subequations} \label{eq:Rmean-iteration}
\begin{align}
(1)\quad&
\text{Set}\; p^{(0)}=\frac{1}{n} \eins.
\intertext{Given $p^{(k)},\; k \geq 0$, compute (cf.~the explicit expressions \eqref{eq:InvExp-pdfLi-1} and \eqref{eq:Exp-explicit})}
(2)\quad&
v^{i}= \Exp_{p^{(k)}}^{-1}(p^{i}),\quad i \in [N], \\
(3)\quad&
v = \frac{1}{N} \sum_{i \in [N]} v^{i}, \\
(4)\quad&
p^{(k+1)} = \Exp_{p^{(k)}}(v),
\end{align}
\end{subequations}
and continue with step (2) until convergence. In view of the optimality condition \eqref{eq:grad-Rmean}, our implementation returns $p^{(k+1)}$ as result if after carrying out step (3) the condition $\|v\|_{\infty} \leq 10^{-3}$ holds.

We point out that numerical problems arise at step (2) if \textit{identical} vectors are averaged, as the expression \eqref{eq:InvExp-pdfLi-1} shows. Such situations may occur e.g.~when computer-generated images are processed. Setting $\veps=1-\la \sqrt{p},\sqrt{q}\ra$ for two vectors $p, q \in \mc{S}$, we replace the expression \eqref{eq:InvExp-pdfLi-1} by
\begin{equation}
\Exp_{p}^{-1}(q) \approx
\frac{9 \veps^{2}+40 \veps + 480}{240 \sqrt{1-\veps/2}} 
(\sqrt{p q}-(1-\veps) p) \qquad\text{if}\quad \veps < 10^{-3}.
\end{equation}
Although the iteration \eqref{eq:Rmean-iteration} converges quickly, carrying out such iterations as a subroutine, at each pixel and iterative step of the outer iteration \eqref{eq:def-algorithm}, increases runtime (of non-parallel implementations) noticeably. In view of the approximation of the exponential map $\Exp_{p}(v) = \gamma_{v}(1)$ by \eqref{eq:exp-approximates-gamma}, it seems natural to approximate the Riemannian mean as well by modifying steps (2) and (4) above accordingly.
\begin{lemma} \label{eq:Rmean-approximation}
Replacing in the iteration \eqref{eq:Rmean-iteration} above the exponential mapping $\Exp_{p}$ by the lifting map $\exp_{p}$ \eqref{eq:def-exp-vector} yields the closed-form expression
\begin{equation}
\frac{\mrm{mean}_{g}(\mc{P})}{\la \eins, \mrm{mean}_{g}(\mc{P}) \ra},\qquad
\mrm{mean}_{g}(\mc{P}) = \Big(\prod_{i \in [N]} p^{i}\Big)^{\frac{1}{N}}
\end{equation}
as approximation of the Riemannian mean $\mrm{mean}_{\mc{S}}(\mc{P})$, with the geometric mean $\mrm{mean}_{g}(\mc{P})$ applied componentwise to the vectors in $\mc{P}$.
\end{lemma}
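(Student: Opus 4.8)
The plan is to compute explicitly what the iteration \eqref{eq:Rmean-iteration} becomes once $\Exp_{p^{(k)}}$ and $\Exp_{p^{(k)}}^{-1}$ are replaced throughout by $\exp_{p^{(k)}}$ and $\exp_{p^{(k)}}^{-1}$, and then to observe that the resulting recursion collapses to its fixed point after a single step. First I would derive the inverse of the lifting map. From $q = \exp_{p}(u) = \frac{p e^{u}}{\la p, e^{u}\ra}$ one obtains, componentwise, $\log q_{j} - \log p_{j} = u_{j} - \log\la p, e^{u}\ra$, so $u = \log(q/p) + c\eins$ with the scalar $c = \log\la p, e^{u}\ra$; imposing $u \in T_{p}\mc{S}$, i.e.\ $\la\eins,u\ra = 0$ (only a harmless normalization by Remark \ref{rem:mean-of-u}), fixes $c = -\frac{1}{n}\la\eins,\log(q/p)\ra$, giving $\exp_{p}^{-1}(q) = \log(q/p) - \frac{1}{n}\la\eins,\log(q/p)\ra\eins$, with logarithm and division taken componentwise.

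Next I would feed this into the modified steps (2)--(4). Step (2) produces $u^{i} = \log(p^{i}/p^{(k)}) + c_{i}\eins$ for suitable scalars $c_{i}$; the average in step (3) is then $u = \frac{1}{N}\sum_{i\in[N]}\log(p^{i}/p^{(k)}) + \bar{c}\,\eins = \log\!\big((\prod_{i\in[N]} p^{i})^{1/N}/p^{(k)}\big) + \bar{c}\,\eins$, with product and power componentwise and $\bar{c} = \frac1N\sum_i c_i$. Therefore in step (4)
\[
p^{(k)} e^{u} = p^{(k)} \cdot \frac{(\prod_{i\in[N]} p^{i})^{1/N}}{p^{(k)}} \cdot e^{\bar{c}} = e^{\bar{c}}\,\mrm{mean}_{g}(\mc{P}),
\]
so $p^{(k+1)} = \exp_{p^{(k)}}(u) = \frac{p^{(k)} e^{u}}{\la\eins,p^{(k)} e^{u}\ra} = \frac{\mrm{mean}_{g}(\mc{P})}{\la\eins,\mrm{mean}_{g}(\mc{P})\ra}$, the factor $e^{\bar{c}}$ cancelling between numerator and denominator, in accordance with Remark \ref{rem:mean-of-u}.

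The decisive point is that this expression no longer depends on $p^{(k)}$: the modified iteration lands on $\frac{\mrm{mean}_{g}(\mc{P})}{\la\eins,\mrm{mean}_{g}(\mc{P})\ra}$ at the very first update and stays there, hence this is trivially the (unique) fixed point $p^{(\infty)}$, which is the claimed approximation. It also lies in $\mc{S}$: since each $p^{i}$ has strictly positive entries so does the componentwise geometric mean, and the renormalization makes the entries sum to one.

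There is essentially no hard step here — the argument is a short manipulation of logarithms. The only thing one must be careful about is the indeterminacy of $\exp_{p}^{-1}$ up to additive multiples of $\eins$: one has to check, as above, that the additive constants accumulated along steps (2)--(4) drop out after the final renormalization in step (4), which is precisely the content of Remark \ref{rem:mean-of-u}, and that the resulting fixed point genuinely lies in the open simplex $\mc{S}$.
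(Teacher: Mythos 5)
Your proposal is correct and follows essentially the same route as the paper: invert the lifting map via componentwise logarithms, average, apply $\exp_{p^{(k)}}$, and observe that the result $\frac{\mrm{mean}_{g}(\mc{P})}{\la \eins, \mrm{mean}_{g}(\mc{P}) \ra}$ no longer depends on $p^{(k)}$. The only cosmetic difference is that you average the $u$-vectors directly and track the additive multiples of $\eins$ explicitly, whereas the paper averages the tangent vectors $v^{i}=\big(\Diag(p)-pp^{\T}\big)u^{i}$ in accordance with \eqref{eq:v-from-u}; since this linear map is fixed at each step and annihilates $\eins$, the two computations coincide.
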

\begin{proof} See Appendix \ref{sec:approach-proofs} \end{proof}

\subsubsection{Optimization Algorithm} 
\label{sec:optimization-algorithm}

A thorough analysis of various discrete schemes for numerically integrating the gradient flow \eqref{eq:W-gradient-flow}, including stability estimates, is beyond the scope of this paper and will be separately addressed in follow-up work (see Section \ref{sec:Conclusion} for a short discussion). 

Here, we merely adopted the following basic strategy from \cite{Losert1983}, that has been widely applied in the literature and performed remarkably well in our experiments. Approximating the flow \eqref{eq:W-gradient-flow} for each vector $W_{i},\, i \in [m]$, by the time-discrete scheme 
\begin{equation}
\frac{W_{i}^{(k+1)}-W_{i}^{(k)}}{t_{i}^{(k+1)}-t_{i}^{(k)}} = W_{i}^{(k)} \big(\nabla_{i}J(W^{(k)})-\la W_{i}^{(k)}, \nabla_{i} J(W^{(k)}) \ra \eins\big),\quad W_{i}^{(k)} := W_{i}(t_{i}^{(k)}), 
\end{equation}
and choosing the adaptive step-sizes $t_{i}^{(k+1)}-t_{i}^{(k)} = \frac{1}{\la W_{i}^{(k)}, \nabla_{i} J(W^{(k)})\ra}$, yields the multiplicative updates
\begin{equation}
W_{i}^{(k+1)} = \frac{W_{i}^{(k)} \big(\nabla_{i}J(W^{(k)})\big)}{\la W_{i}^{(k)}, \nabla_{i} J(W^{(k)}) \ra},\qquad i \in [m].
\end{equation}
We further simplify this update in view of the explicit expression \eqref{eq:nabla-J-explicit} of the gradient $\nabla_{i} J(W)$ of the objective function, that comprises two terms. The first one contributes the derivative of $S(W)$ with respect to $W_{i}$, which is significantly smaller than the second term $S_{i}(W)$ of \eqref{eq:nabla-J-explicit}, because $S_{i}(W)$ results from \textit{averaging} \eqref{eq:def-S} the likelihood vectors $L_{j}(W_{j})$ over spatial neighborhoods and hence changes slowly. As a consequence, we simply drop this first term which, as a byproduct, avoids the numerical evaluation of the expensive expressions \eqref{eq:Tij-matrices} specifying the first term. 

Thus, for computing the numerical results reported in this paper, we used the fixed-point iteration
\begin{equation} \label{eq:def-algorithm}
W_{i}^{(k+1)} = \frac{W_{i}^{(k)} \big(S_{i}(W^{(k)})\big)}{\la W_{i}^{(k)}, S_{i}(W^{(k)}) \ra},\qquad W_{i}^{(0)} = \frac{1}{n} \eins, \qquad i \in [m]
\end{equation}
together with the approximation due to Lemma \ref{eq:Rmean-approximation} for computing Riemannian means, which define by \eqref{eq:def-S} the similarity matrices $S(W^{(k)})$. Note that this requires to recompute the likelihood matrices \eqref{eq:def-L} as well, at each iteration $k$ (see Fig.~\ref{fig:approach-overall}).

\subsubsection{Termination Criterion} \label{sec:termination-criterion}
Algorithm \eqref{eq:def-algorithm} was terminated if the average entropy
\begin{equation} \label{eq:average-entropy}
-\frac{1}{m} \sum_{i \in [m]} \sum_{j \in [n]} W_{ij}^{(k)} \log W_{ij}^{(k)}
\end{equation}
dropped below a threshold. For example, a threshold value $10^{-3}$ means in practice that, up to a tiny fraction of indices $i \subset [m]$ that should not matter for a subsequent further analysis, all vectors $W_{i}$ are very close to unit vectors, thus indicating an almost unique assignment of prior items $f_{j}^{\ast},\, j \in [n]$ to the data $f_{i},\, i \in [m]$. Note that this termination criterion conforms to Conjecture \ref{conj:convergence} and was met in all experiments.

%\newpage
%%%
\section{Illustrative Applications and Discussion}
\label{sec:Basic-Applications}

We focus in this section on few academical, yet non-trivial numerical examples, to illustrate and discuss basic properties of the approach. Elaborating any specific application is outside the scope of this paper.

\subsection{Parameters, Empirical Convergence Rate}
Figure \ref{fig:rgb-diffusion} shows a color image and a noisy version of it. The latter image was used as input data of a labeling problem. Both images comprise $31$ color vectors forming the prior data set $\mc{P}_{\mc{F}} = \{f^{1\ast},\dotsc,f^{31\ast}\}$. The labeling task is to assign these vectors in a spatially coherent way to the input data so as to recover the ground truth image. 

Every color vector was encoded by the vertices of the simplex $\Delta_{30}$, that is by the unit vectors $\{e^{1},\dotsc,e^{31}\} \subset \{0,1\}^{31}$. Choosing the distance $d_{\mc{F}}(f^{i},f^{j}) := \|f^{i}-f^{j}\|_{1}$, this results in unit distances between all pairs of data points and hence enables to assess most clearly the impact of geometric spatial averaging and the influence of the two parameters $\rho$ and $|\mc{N}_{\veps}|$, introduced in Sections \ref{sec:Distance-Matrix} and \ref{sec:similarity-matrix}, respectively. We refer to the caption for a brief discussion of the selectivity parameter $\rho$ and the spatial scale in terms of $|\mc{N}_{\veps}|$. 

The reader familiar with total variation based denoising, where a \textit{single} parameter is only used to control the influence of regularization, may ask why \textit{two} parameters are used in the present approach and if they are necessary. We refer again to Figure \ref{fig:rgb-diffusion} and the caption where the separation of the physical and spatial scale based on different parameter choices is demonstrated and discussed. The total variation measure couples these scales as the co-area formula explicitly shows. As a consequence, a single parameter is only needed. On the other, larger values of this parameter lead to the well-known loss-of-contrast effect, which in the present approach can be avoided by properly choosing the parameters $\rho, |\mc{N}_{\veps}|$ corresponding to these two scales.

Figure \ref{fig:convergence} shows how convergence of the iterative algorithm \eqref{eq:def-algorithm} is affected by these two parameters. It also demonstrates that few tens of massively parallel outer iterations suffice to reach the termination criterion of Section \ref{sec:termination-criterion}.
\begin{figure}[h]
\centering
\includegraphics[width=0.4\textwidth]{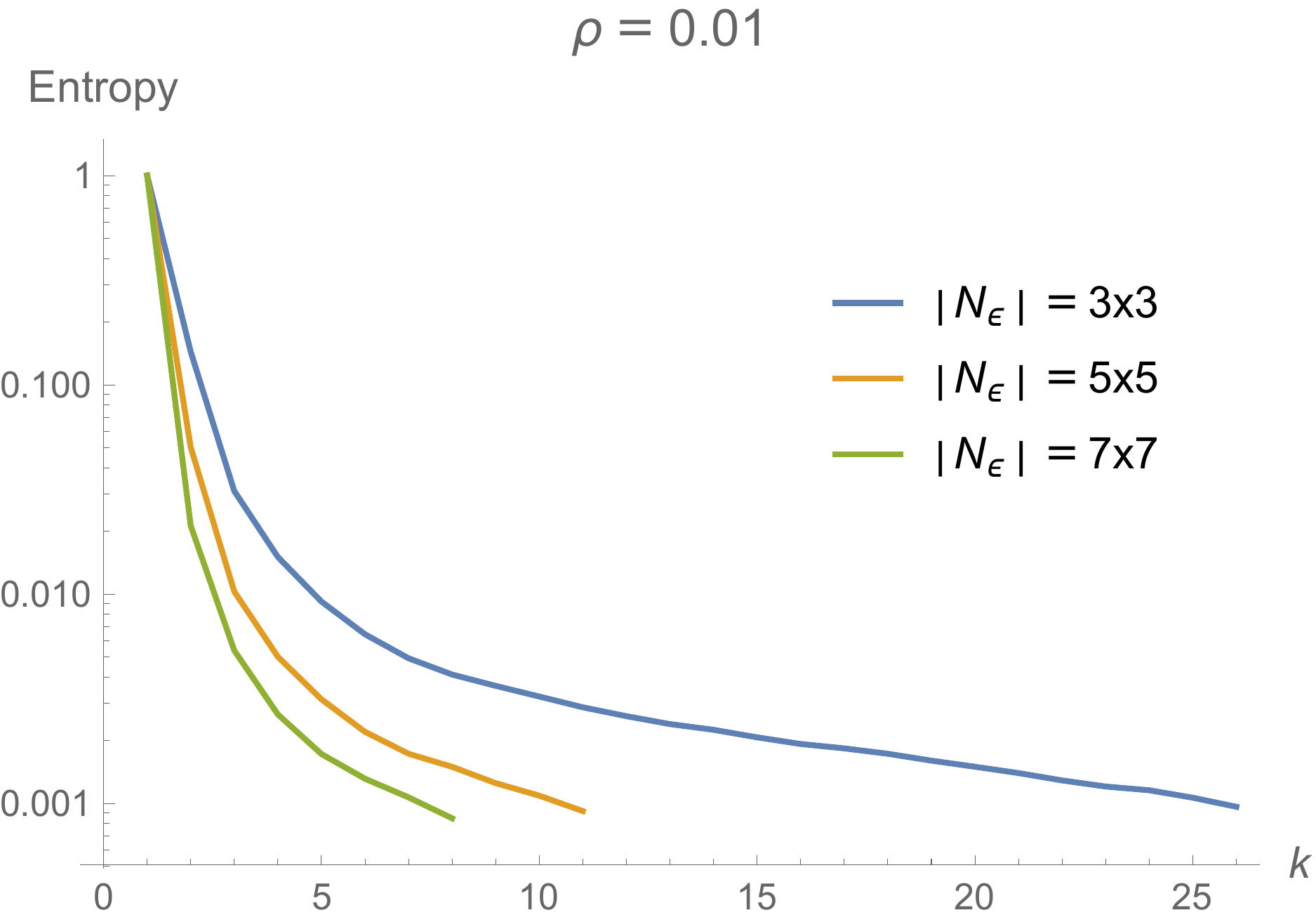}
\hspace{0.025\textwidth}
\includegraphics[width=0.4\textwidth]{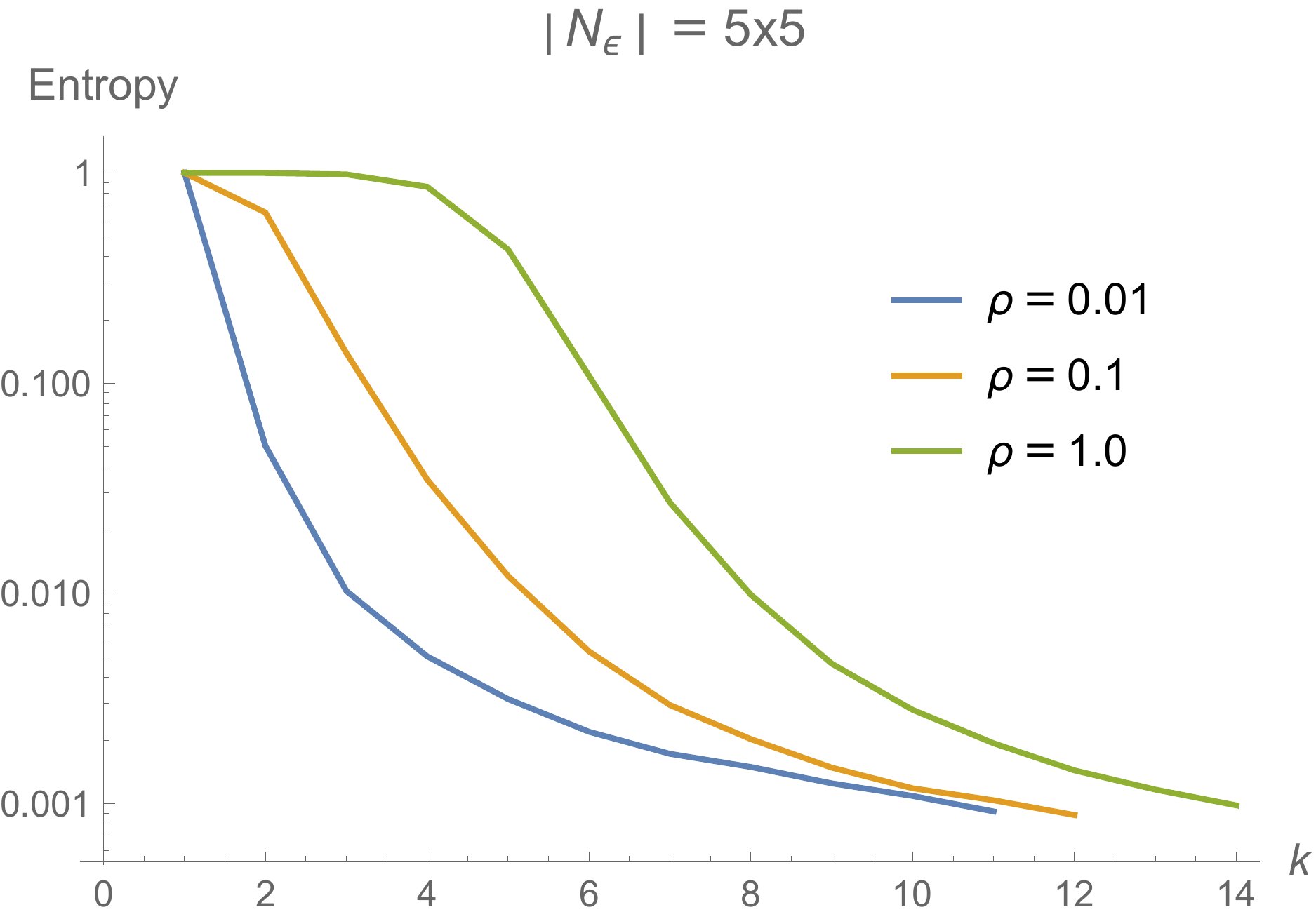}
\caption{\textbf{Parameter values and convergence rate.}
Average entropy \eqref{eq:average-entropy} of the assignment vectors $W_{i}^{(k)}$ as a function of the iteration counter $k$ and the two parameters $\rho$ and $|\mc{N}_{\veps}|$, for the labeling task illustrated by Figure \ref{fig:rgb-diffusion}. The left panel shows that despite high selectivity in terms of a small value of $\rho$, small spatial scales necessitate to resolve more conflicting assignments through propagating information by geometric spatial averaging. As a consequence, more iterations are needed to achieve convergence and a labeling. The right panel, on the other hand, shows that at a fixed spatial scale $|\mc{N}_{\veps}|$ higher selectivity leads to faster convergence, because outliers are simply removed from the averaging process, whereas low selectivity leads to an assignment (labeling) taking all data into account.
}
\label{fig:convergence}
\end{figure}

\begin{figure}
\centering
\begin{subfigure}[c]{0.175\textwidth}
\begin{subfigure}[b]{\textwidth}
\includegraphics[width=\textwidth]{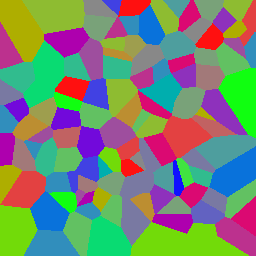} 
\caption{
Ground truth image.
}
\label{fig:denoising-dat}
\end{subfigure}% 
\\[0.1\textwidth]%%%%%%
\begin{subfigure}[b]{\textwidth}
\includegraphics[width=\textwidth]{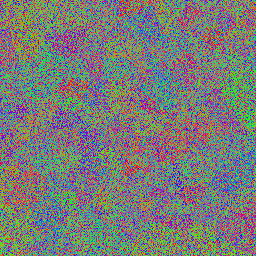}
\caption{
Noisy input image.
}
\label{fig:denoising-NoisyDat}
\end{subfigure}
\end{subfigure}%
%%%%%%%%%%%%%%
\hspace{0.05\textwidth}%
%%%%%%%%%%%%%%
\begin{subfigure}[c]{0.6\textwidth}
\centering
\begin{subfigure}[b]{0.3\textwidth}
\includegraphics[width=\textwidth]{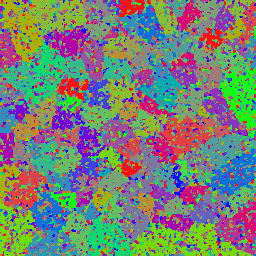}
\caption{
$\rho=0.01$, $|\mc{N}_{\mc{E}}|=3 \times 3$
}
\label{fig:denoisingResult=11}
\end{subfigure}%
\hfill%
\begin{subfigure}[b]{0.3\textwidth}
\includegraphics[width=\textwidth]{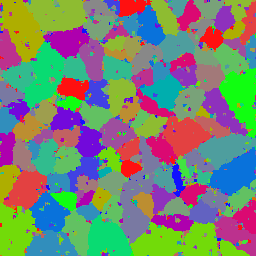}
\caption{
$\rho=0.01$, $|\mc{N}_{\mc{E}}|=5 \times 5$
}
\label{fig:denoisingResult=12}
\end{subfigure}%
\hfill%
\begin{subfigure}[b]{0.3\textwidth}
\includegraphics[width=\textwidth]{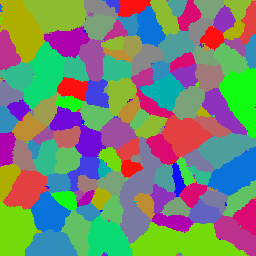}
\caption{
$\rho=0.01$, $|\mc{N}_{\mc{E}}|=7 \times 7$
}
\label{fig:denoisingResult=13}
\end{subfigure}
\\%%%%%%%%%%%
\begin{subfigure}[b]{0.3\textwidth}
\includegraphics[width=\textwidth]{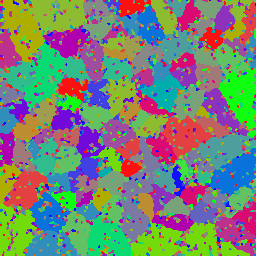}
\caption{
$\rho=0.1$, $|\mc{N}_{\mc{E}}|=3 \times 3$
}
\label{fig:denoisingResult=21}
\end{subfigure}%
\hfill%
\begin{subfigure}[b]{0.3\textwidth}
\includegraphics[width=\textwidth]{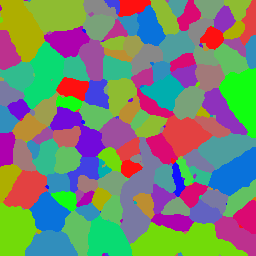}
\caption{
$\rho=0.1$, $|\mc{N}_{\mc{E}}|=5 \times 5$
}
\label{fig:denoisingResult=22}
\end{subfigure}%
\hfill%
\begin{subfigure}[b]{0.3\textwidth}
\includegraphics[width=\textwidth]{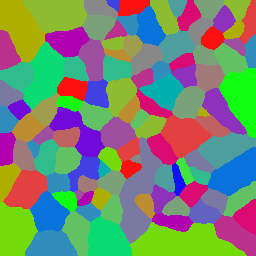}
\caption{
$\rho=0.1$, $|\mc{N}_{\mc{E}}|=7 \times 7$
}
\label{fig:denoisingResult=23}
\end{subfigure}
\\%%%%%%%%%%%
\begin{subfigure}[b]{0.3\textwidth}
\includegraphics[width=\textwidth]{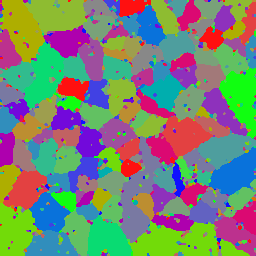}
\caption{
$\rho=1.0$, $|\mc{N}_{\mc{E}}|=3 \times 3$
}
\label{fig:denoisingResult=31}
\end{subfigure}%
\hfill%
\begin{subfigure}[b]{0.3\textwidth}
\includegraphics[width=\textwidth]{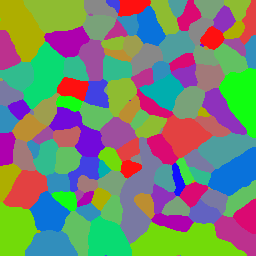}
\caption{
$\rho=1.0$, $|\mc{N}_{\mc{E}}|=5 \times 5$
}
\label{fig:denoisingResult=32}
\end{subfigure}%
\hfill%
\begin{subfigure}[b]{0.3\textwidth}
\includegraphics[width=\textwidth]{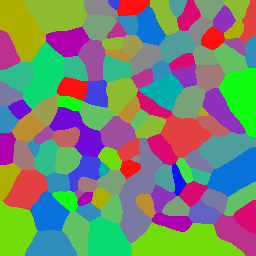}
\caption{
$\rho=1.0$, $|\mc{N}_{\mc{E}}|=7 \times 7$
}
\label{fig:denoisingResult=33}
\end{subfigure}
\end{subfigure}
\caption{
\textbf{Parameter influence on labeling.} Panels (a) and (b) show a ground-truth image and noisy input data. Both images and the prior data set $\mc{P}_{\mc{F}}$ are composed of 31 color vectors. Each color vector is encoded as a vertex of the simplex $\Delta_{30}$. This results in unit distances between all colors and thus enables an unbiased assessment of the impact of geometric averaging and the two parameter values $\rho, |\mc{N}_{\veps}|$. Panels (c)-(k) show the assignments $u(W^{\ast})$ for various parameter values where $W^{\ast}$ maximizes the objective function \eqref{eq:objective-orig}. The spatial scale $|\mc{N}_{\mc{E}}|$ increases from left to right. The results illustrate the compromise between sensitivity to noise and to the geometry of signal transitions. The selectivity parameter $\rho$ increases from top to bottom. If $\rho$ is chosen too small, then there is a tendency to noise-induced oversegmentation, in particular at small spatial scales $|\mc{N}_{\mc{E}}|$. Note, however, that depending on the application, the ability to separate the physical and the spatial scale in order to recognize outliers with small spatial support, while performing diffusion at a larger spatial scale as in panels (c),(d),(f),(i), may be beneficial. We point out that this separation of the physical and spatial scales (image range vs.~image domain) is not possible with total variation based regularization where these scales are coupled through the co-area formula. All results were computed using the assignment mapping \eqref{eq:trivial-assignment-map} \textit{without} rounding. This shows that the termination criterion of Section \ref{sec:termination-criterion}, illustrated by Figure \ref{fig:convergence} leads to (almost) unique assignments.
}
\label{fig:rgb-diffusion}
\end{figure}

\subsection{Vector-Valued Data}
\label{sec:vector-supervised}

Let $f^{i} \in \R^{d}$ denote vector-valued image data or extracted feature vectors at locations $i \in [m]$, and let
\begin{equation} \label{eq:PF-color-supervised}
\mc{P}_{\mc{F}} = \{f^{\ast 1},\dotsc,f^{\ast n}\}
\end{equation}
denote the prior information given by prototypical feature vectors. In the example that follows below, $f^{i}$ will be a RGB-color vector. It should be clear, however, that \textit{any} feature vector of arbitrary dimension $d$ could be used instead, depending on the application at hand. We used the distance function
\begin{equation} \label{eq:df-vector-l1}
d_{\mc{F}}(f^{i},f^{\ast j}) = \frac{1}{d} \|f^{i}-f^{\ast j}\|_{1},
\end{equation}
with the normalizing factor $1/d$ to make the choice of the parameter $\rho$ insensitive with respect to the dimension $d$ of the feature space.
Given an optimal assignment matrix $W^{\ast}$ as solution to \eqref{eq:objective-orig}, the prior information assigned to the data is given by the assignment mapping
\begin{equation} \label{eq:u(W)-vectorValued}
u^{i} = u^{i}(W^{\ast}) = \EE_{W_{i}^{\ast}}[\mc{P}_{\mc{F}}],\qquad i \in [m],
\end{equation}
which merely replaces each data vector $f^{i}$ by the prior vector $f^{\ast j}$ assigned to it through $W_{i}^{\ast}$.

\vspace{0.25cm}
Figure \ref{fig:mandrill} shows the assignment of 20 prototypical color vectors to a color image for various values of the spatial scale parameter $|\mc{N}_{\veps}|$, while keeping the selectivity parameter $\rho$ fixed. As a consequence, the induced assignments and image partitions exhibit a natural coarsening effect in the spatial domain.
%
%%%%%%%%%%%%%%%%%%%%%%%%%%
\begin{figure}
\centering
\begin{subfigure}[b]{0.7\textwidth}
\centering
\includegraphics[width=0.24\textwidth]{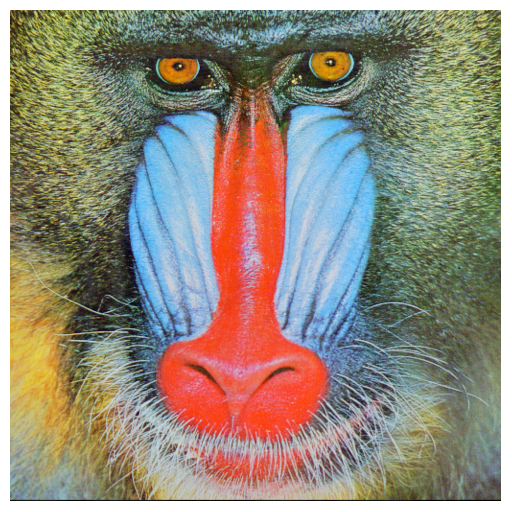}
%\hspace{0.005\textwidth}
\includegraphics[width=0.24\textwidth]{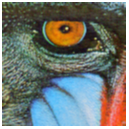}
\hfill
\includegraphics[width=0.24\textwidth]{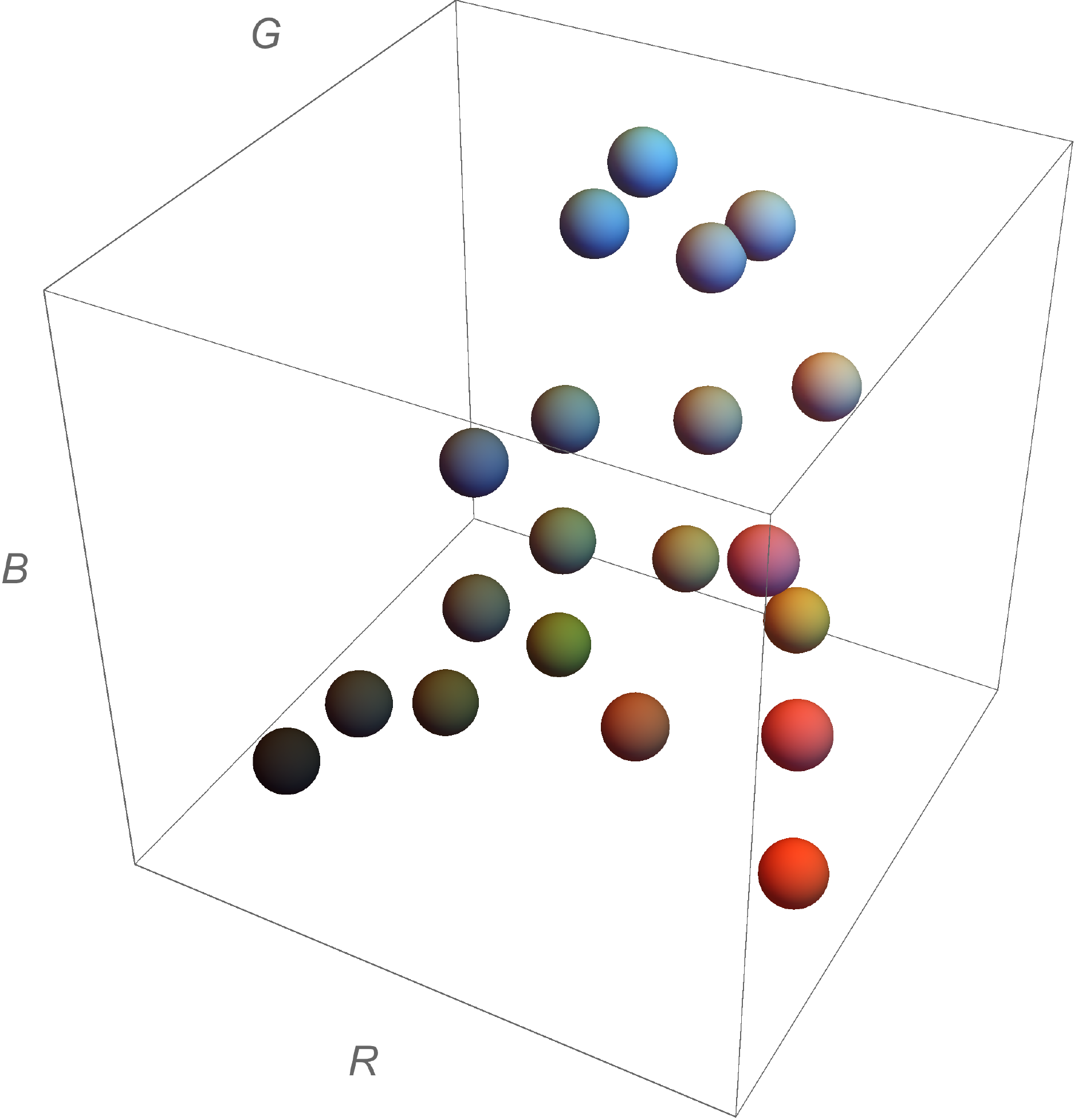}
\caption{
Input image (left) and a section of it. 20 color vectors (right) forming the set prior data set $\mc{P}_{\mc{F}}$ according to Eq.~\eqref{eq:PF-color-supervised}.
}
\label{fig:mandrill-data}
\end{subfigure}
%%%%%%%%%
\begin{subfigure}[b]{0.7\textwidth}
\centering
\includegraphics[width=0.24\textwidth]{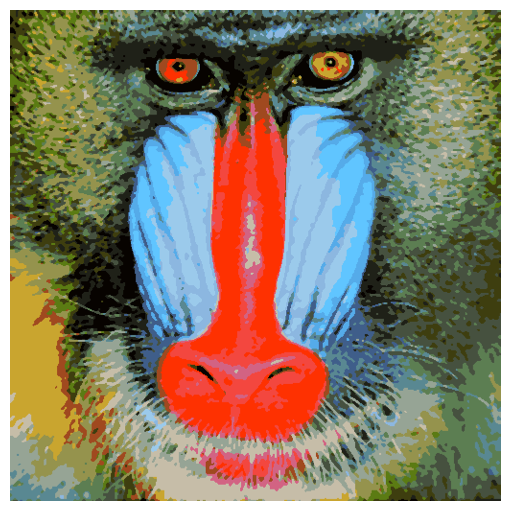}
\hfill
\includegraphics[width=0.24\textwidth]{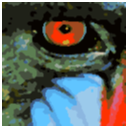}
\hfill
\includegraphics[width=0.24\textwidth]{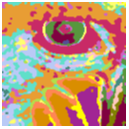}
\hfill
\includegraphics[width=0.24\textwidth]{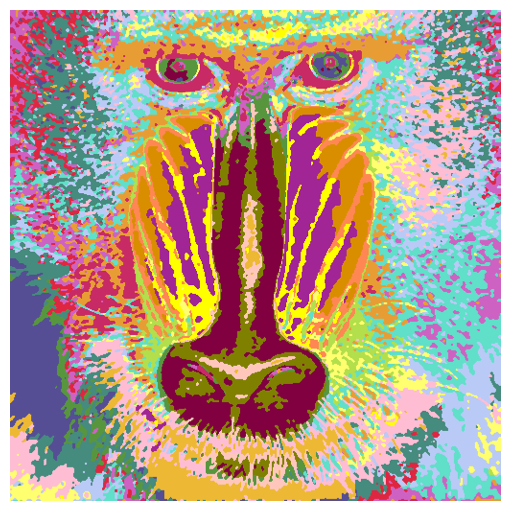}
\caption{
Assignment $u(W^{\ast})$, $|\mc{N}_{\veps}|=3 \times 3,\, \rho=0.01$.
}
\label{fig:mandrill-n1}
\end{subfigure}
%%%%%%%%%%%%%%%%%%%%
%%%%%%%%%
\begin{subfigure}[b]{0.7\textwidth}
\centering
\includegraphics[width=0.24\textwidth]{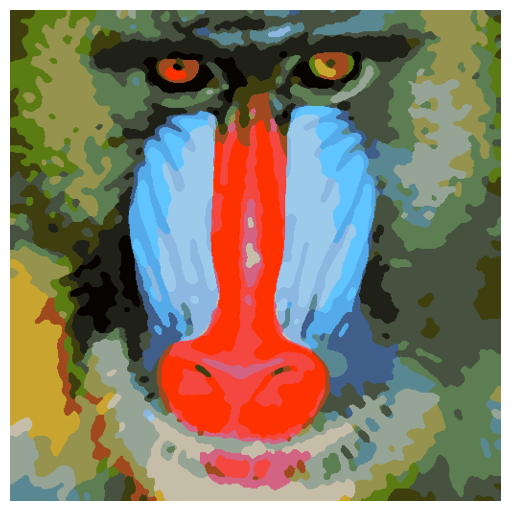}
\hfill
\includegraphics[width=0.24\textwidth]{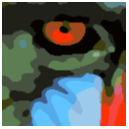}
\hfill
\includegraphics[width=0.24\textwidth]{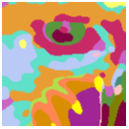}
\hfill
\includegraphics[width=0.24\textwidth]{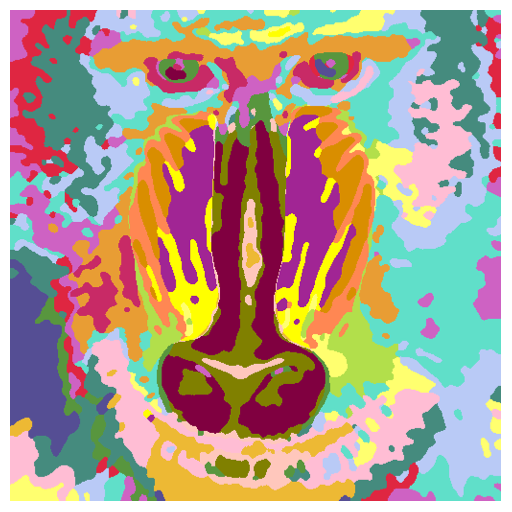}
\caption{
Assignment $u(W^{\ast})$, $|\mc{N}_{\veps}|=7 \times 7,\, \rho=0.01$.
}
\label{fig:mandrill-n3}
\end{subfigure}
%%%%%%%%%%%%%%%%%
\begin{subfigure}[b]{0.7\textwidth}
\centering
\includegraphics[width=0.24\textwidth]{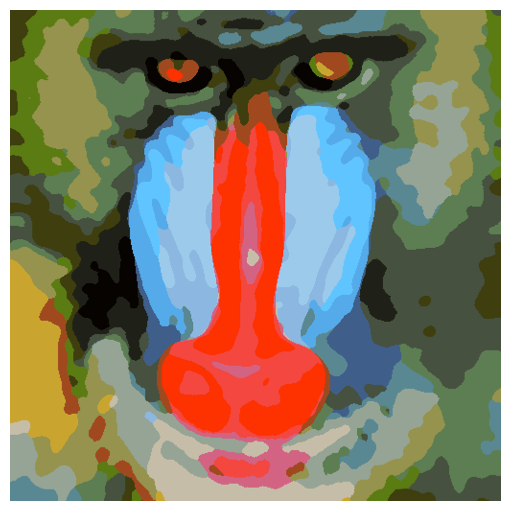}
\hfill
\includegraphics[width=0.24\textwidth]{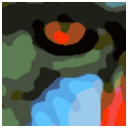}
\hfill
\includegraphics[width=0.24\textwidth]{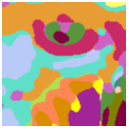}
\hfill
\includegraphics[width=0.24\textwidth]{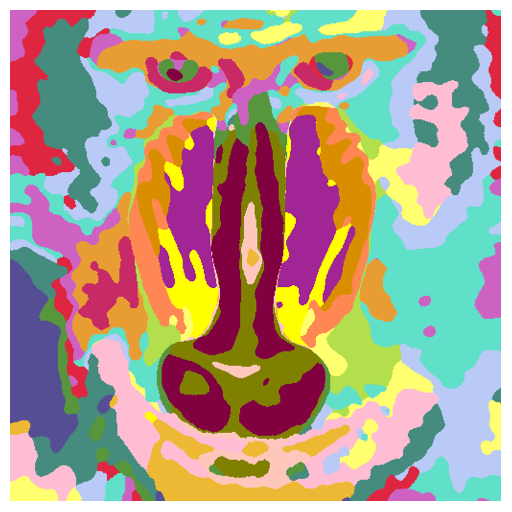}
\caption{
Assignment $u(W^{\ast})$, $|\mc{N}_{\veps}|=11 \times 11,\, \rho=0.01$.
}
\label{fig:mandrill-n5}
\end{subfigure}
%%%%%%%%%%%%%%%%%%%%%%%%%%%
\caption{
\textbf{Image labeling at different spatial scales.} The two rightmost columns show the same information using a random color code for the assignment of the 20 prior vectors to pixel locations, to highlight the induced image partitions. Increasing the spatial scale $|\mc{N}_{\veps}|$ for a \textit{fixed} value of the selectivity parameter $\rho$ induces a natural coarsening of the assignments and the corresponding image partitions along the spatial scale.
}
\label{fig:mandrill}
\end{figure}
%%%%%%%%%%%%%%%%%%%%%%%%%%
\begin{figure}
\centering
\begin{subfigure}[b]{0.1\textwidth}
\includegraphics[width=\textwidth]{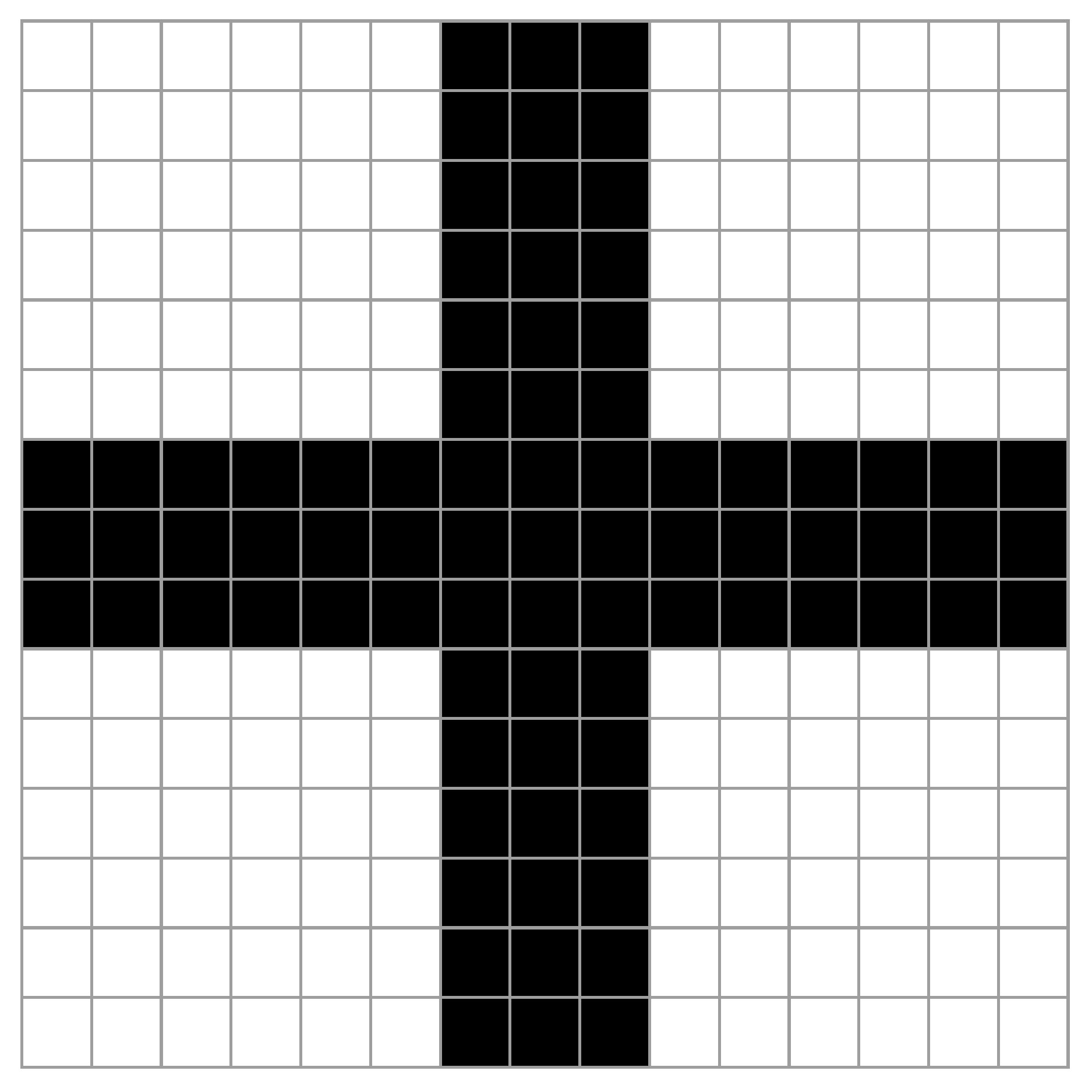}
\caption{ 
Patch generating the dictionary by translation.
}
\label{fig:roof-uPrototype}
\end{subfigure}
\hspace{0.01\textwidth}
\begin{subfigure}[b]{0.2\textwidth}
\includegraphics[width=\textwidth]{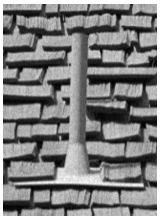}
\caption{ 
Input image $f$. \\ $\text{ }$
}
\label{fig:roof-dat}
\end{subfigure}
\hspace{0.01\textwidth}
\begin{subfigure}[b]{0.2\textwidth}
\includegraphics[width=\textwidth]{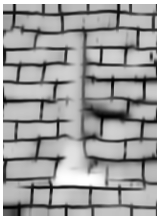}
\caption{ 
Patch assignment $u(W^{\ast})$.
}
\label{fig:roof-u}
\end{subfigure}
\hspace{0.01\textwidth}
\begin{subfigure}[b]{0.2\textwidth}
\includegraphics[width=\textwidth]{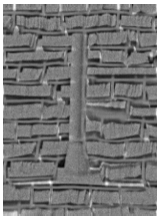}
\caption{ 
Residual image $v(W^{\ast})=f-u(W^{\ast})$.
}
\label{fig:roof-v}
\end{subfigure}
\caption{
A patch (a) supposed to represent prior knowledge about the structure of an image $f$ (b). The dictionary $\mc{P}_{\mc{F}}$ of Eq.~\eqref{eq:PF-patches-general} was generated by all translations of (a) and assigned to the image (b), using a distance $d_{\mc{F}}$ that adapts the two grayvalues of each template to the data -- see Eqns.~\eqref{eq:df-vector-l1-2adapted} and \eqref{eq:roof-fi-values}. The resulting assignment $u(W^{\ast})$ is depicted by (c). Panel (d) shows the residual image $v(W^{\ast}) := f-u(W^{\ast})$ by substracting (c) from (b) (rescaled for better visibility). The result (c) illustrates how the approximation of $f$ is restricted by the prior knowledge, leading to normalized signal transitions regarding both the spatial geometry and the signal values. By maximizing the objective \eqref{eq:objective-orig}, a patch-consistent and dense cover of the image is computed. It induces a strong nonlinear image filtering effect by fusing through assignment for each single pixel value more than 200 predictions of possible values based on the patch dictionary $\mc{P}_{\mc{F}}$. The approach enables to model additive image decompositions $f = u+v$ (i.e.~image = geometry + texture \& noise) for  specific image classes, which are implicitly represented by the dictionary $\mc{P}_{\mc{F}}$.
}
\label{fig:roof}
\end{figure}

\subsection{Patches}
\label{sec:patches-supervised}

Let $f^{i}$ denote a patch of raw image data (or, more generally, a patch of features vectors)
\begin{equation} \label{eq:def-Nf}
f^{ij} \in \R^{d},\qquad j \in \mc{N}_{p}(i),\qquad i \in [m],
\end{equation}
centered at location $i \in [m]$ and indexed by $\mc{N}_{p}(i) \subset \mc{V}$ (subscript $p$ indicates neighborhoods for \underline{p}atches). With each entry $j \in \mc{N}_{p}(i)$, we associate the Gaussian weight 
\begin{equation} \label{eq:Gaussian-weight}
w^{p}_{ij} := G_{\sigma}(\|x^{i}-x^{j}\|),\qquad i,j \in \mc{N}_{p}(i),
\end{equation}
where the vectors $x^{i}, x^{j} \in \R^{d}$ correspond to the locations in the image domain indexed by $i, j \in \mc{V}$. Specifically, $w^{p}$ is chosen to be the discrete impulse response of a Gaussian lowpass filter supported on $\mc{N}_{p}(i)$, so that the scale $\sigma$ directly depends on the patch size and does not need to be chosen by hand. Such downweighting of values, that are less close to the center location of a patch, is an established elementary technique for reducing boundary and ringing effects of patch (``window'')-based image processing.

The prior information is given in terms of $n$ prototypical patches
\begin{equation} \label{eq:PF-patches-general}
\mc{P}_{\mc{F}} = \{f^{\ast 1},\dotsc,f^{\ast n}\},
\end{equation}
and a corresponding distance
\begin{equation} \label{eq:df-patches}
d_{\mc{F}}(f^{i},f^{\ast j}),\qquad i \in [m],\quad j \in [n].
\end{equation}
There are many ways to choose this distance depending on the application at hand. We refer to the Examples \ref{ex:roof} and \ref{ex:fprint} below. Expression \eqref{eq:df-patches} is based on the tacit assumption that patch $f^{\ast j}$ is centered at $i$ and indexed by $\mc{N}_{p}(i)$ as well.

Given an optimal assignment matrix $W^{\ast}$, it remains to specify how prior information is assigned to every location $i \in \mc{V}$, resulting in a vector  $u^{i} = u^{i}(W^{\ast})$ that is the overall result of processing the input image $f$. Location $i$ is affected by patches that overlap with $i$. Let us denote the indices of these patches by 
\begin{equation}
\mc{N}_{p}^{i \leftarrow j} := \{ j \in \mc{V} \colon i \in \mc{N}_{p}(j) \}.
\end{equation}
Every such patch is centered at location $j$ to which prior patches are assigned by
\begin{equation} \label{eq:Wj-prior-patch-mean}
\EE_{W_{j}^{\ast}}[\mc{P}_{\mc{F}}] = \sum_{k \in [n]} W_{jk}^{\ast} f^{\ast k}.
\end{equation}
Let location $i$ be indexed by $i_{j}$ in patch $j$ (local coordinate inside patch $j$). Then, by summing over all patches indexed by $\mc{N}_{p}^{i \leftarrow j}$ whose supports include location $i$, and by weighting the contributions to location $i$ by the corresponding weights \eqref{eq:Gaussian-weight}, we obtain the vector
\begin{equation} \label{eq:u(W)-patches-a}
u^{i} = u^{i}(W^{\ast})
= \frac{1}{\sum_{j' \in \mc{N}_{p}^{i \leftarrow j}} w^{p}_{j' i_{j}}}
\sum_{j \in \mc{N}_{p}^{i \leftarrow j}} w^{p}_{j i_{j}} 
\sum_{k \in [n]} W_{jk}^{\ast} f^{\ast ki_{j}} 
\quad\in\quad \R^{d},
\end{equation}
that is assigned by $W^{\ast}$ to location $i$. This expression looks more clumsy than it actually is. In words, the vector $u^{i}$ assigned to location $i$ is the convex combination of vectors contributed from patches overlapping with $i$, that itself are formed as convex combinations of prior patches. In particular, if we consider the common case of \emph{equal} patch supports $\mc{N}_{p}(i)$ for every $i$, that additionally are \emph{symmetric} with respect to the center location $i$, then $\mc{N}_{p}^{i \leftarrow j} = \mc{N}_{p}(i)$. As a consequence, due to the symmetry of the weights \eqref{eq:Gaussian-weight}, the first sum of \eqref{eq:u(W)-patches-a} sums up all weights $w^{p}_{ij}$. Hence, the normalization factor on the right-hand side of \eqref{eq:u(W)-patches-a} equals $1$, because the low-pass filter $w^{p}$ preserves the zero-order moment (mean) of signals. Furthermore, it then makes sense to denote by $(-i)$ the location $i_{p}$ corresponding to $i$ in patch $j$. Thus \eqref{eq:u(W)-patches-a} becomes
\begin{equation}
u^{i} = u^{i}(W^{\ast}) = \sum_{j \in \mc{N}_{p}(i)} w^{p}_{j (-i)} 
\sum_{k \in [n]} W_{jk}^{\ast} f^{\ast k(-i)}.
\end{equation}
Introducing in view of \eqref{eq:Wj-prior-patch-mean} the shorthand
\begin{equation}
\EE^{i}_{W_{j}^{\ast}}[\mc{P}_{\mc{F}}] := \sum_{k \in [n]} W_{jk}^{\ast} f^{\ast k(-i)}
\end{equation}
for the vector assigned to $i$ by the convex combination of prior patches assigned to $j$, we finally rewrite \eqref{eq:u(W)-patches-a} due the symmetry $w^{p}_{j(-i)} = w^{p}_{ji} = w^{p}_{ij}$ in the more handy form\footnote{For locations $i$ close to the boundary of the image domain where patch supports $\mc{N}_{p}(i)$ shrink, the definition of the vector $w^{p}$ has to be adapted accordingly.}
\begin{equation} \label{eq:u(W)-patches-b}
u^{i} = u^{i}(W^{\ast}) = \EE_{w^{p}}\big[\EE^{i}_{W_{j}^{\ast}}[\mc{P}_{\mc{F}}]\big].
\end{equation}
The inner expression represents the assignment of prior vectors to location $i$ by fitting prior patches to all locations $j\in \mc{N}(i)$. The outer expression fuses the assigned vectors. If they were all the same, the outer operation would have no effect, of course.

We discuss further properties of this approach by concrete examples. 
\begin{example}[Patch Assignment]\label{ex:roof} 
Figure \ref{fig:roof} shows an image $f$ and the corresponding assignment $u(W^{\ast})$ based on a patch dictionary $\mc{P}_{\mc{F}}$ that was formed as explained in the caption.

We chose the distance $d_{\mc{F}}$ of Eq.~\eqref{eq:df-vector-l1}, 
\begin{equation} \label{eq:df-vector-l1-2adapted}
d_{\mc{F}}(f^{i},f^{\ast j}) = \frac{1}{|\mc{N}_{p}(i)|} \|f^{i}-f^{\ast j(i)}\|_{1},
\end{equation}
where here the arguments $f^{i}, f^{\ast j}$ stand for the vectorized scalar-valued patches centered at location $i$, after adapting each prior template $f^{\ast j}$ at each pixel location $i$ to the data $f$, denoted by $f^{\ast j}=f^{\ast j(i)}$ in \eqref{eq:df-vector-l1-2adapted}. Each such template takes two values that were adapted to the template $f^{i}$ to which it is compared, i.e. 
\begin{subequations} \label{eq:roof-fi-values}
\begin{align}
f^{\ast j(i)}_{k} &\in \{f^{i}_{\text{low}}, f^{i}_{\text{high}}\},
\quad \forall k, 
\intertext{where}
f^{i}_{\text{low}} &= \mrm{median}\big\{f^{i}_{j} \colon j \in \mc{N}_{p}(i),\; f^{i}_{j} < \mrm{median}\{f^{i}_{j}\}_{j \in \mc{N}_{p}(i)}\big\}, \\
f^{i}_{\text{high}} &= \mrm{median}\big\{f^{i}_{j} \colon j \in \mc{N}_{p}(i),\; f^{i}_{j} \geq \mrm{median}\{f^{i}_{j}\}_{j \in \mc{N}_{p}(i)}\big\}.
\end{align}
\end{subequations}

The result $u^{\ast}=u(W^{\ast})$ demonstrates
\begin{itemize}
\item
the ``best explanation'' of the given image $f$ in term of the (rudimentary) prior knowledge,
\item
a pronounced nonlinear filtering effect due to the consistent assignment of more than 200 patches at each pixel location and fusing the corresponding predicted values, and
\item 
a corresponding normalization of the irregular signal structure of $f$ regarding both the spatial geometry and the signal values.
\end{itemize}
It is also evident that the approach enables additive image decompositions
\begin{equation}
f = u(W^{\ast}) + v(W^{\ast}),
\end{equation}
that are more discriminative, with respect to image classes modelled by the prior data $\mc{P}_{\mc{F}}$ and a corresponding distance $d_{\mc{F}}$, than additive image decompositions achieved by convex variational approaches (see, e.g., \cite{Aujol-et-al-Decomp06}) that employ various regularizing norms, for this purpose.
\end{example}

%\begin{gather}
%\min_{s,h} \|f-(s h + c \eins)\|_{1} 
%= \min_{s,h} \sum_{i}|f_{i}-s h_{i}-c| \\
%= \min_{s,c,t}(\la \eins, t^{+} \ra + \la \eins, t^{-} \ra),\quad 
%t^{+} \geq f-s h-c \eins,\quad 
%t^{-} \geq c \eins + s h - f,\quad t^{+},t^{-},s \geq 0 \\
%= \min_{s,c,t}(\la \eins, t^{+} \ra + \la \eins, t^{-} \ra),\quad
%\bpm
%I & 0 & h & \eins \\
%0 & I & -h & -\eins
%\epm
%\bpm t^{+} \\ t^{-} \\ s \\ c \epm
%\geq 
%\bpm f \\ -f \epm
%\end{gather}

%%%
\begin{figure}
\centering
\subcaptionbox{Input image $f$.}[0.2\textwidth]{
\includegraphics[height=0.2\textwidth]{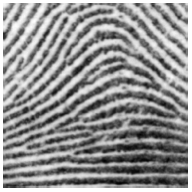}
} \hfill
\subcaptionbox{
Contourplot of a smooth image computed and subtracted from $f$ as a preprocessing step.
}[0.2\textwidth]{
\includegraphics[height=0.2\textwidth]{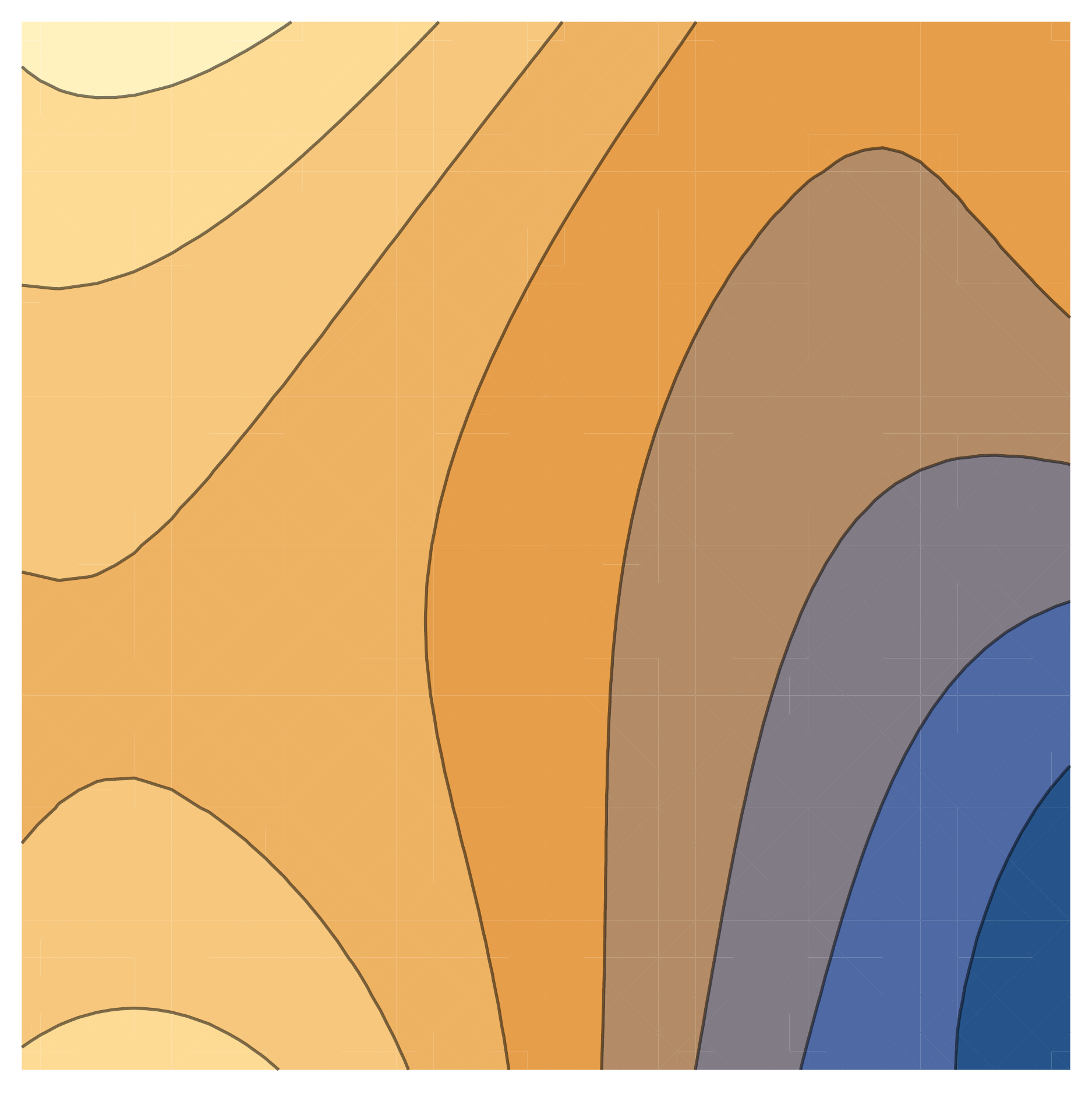}
} \hfill
\subcaptionbox{ \label{fig:fprint-dictionary}
Prior patches representing binary signal transitions at orientations $0^{\circ}, 30^{\circ},\dotsc$ (top row), and the corresponding translation invariant dictionary (bottom row). Each row of patches constitutes an \textit{equivalence class} of patches.
}[0.4\textwidth]{
\includegraphics[width=0.25\textwidth]{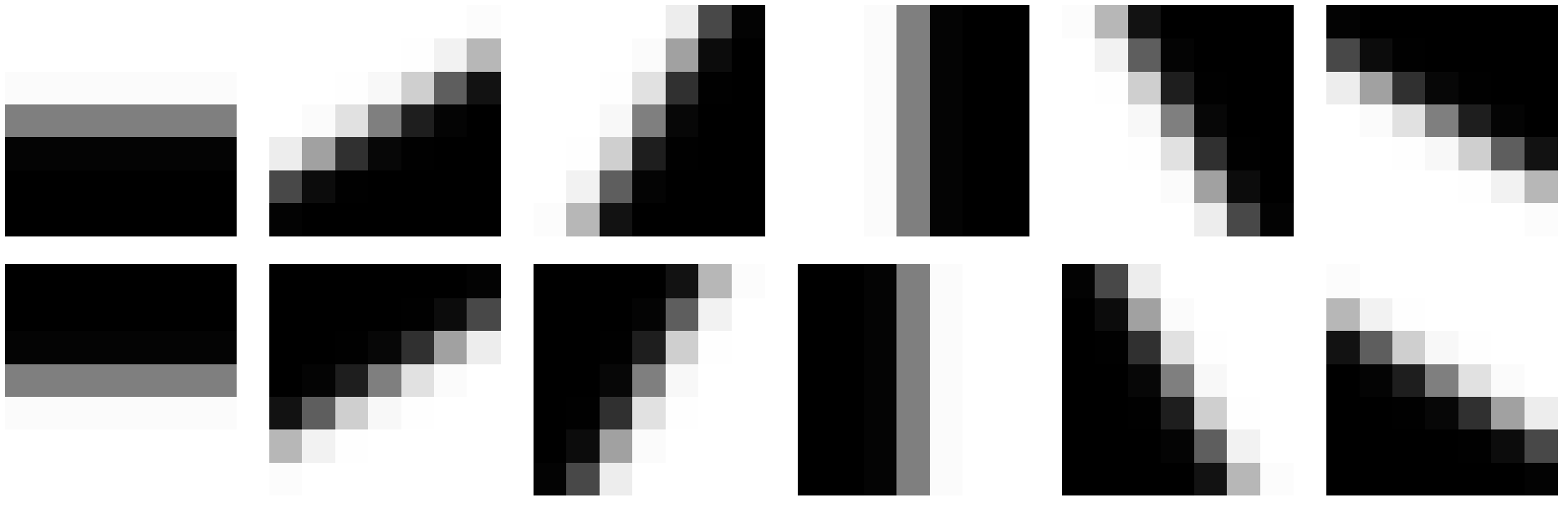} 
\includegraphics[width=0.25\textwidth]{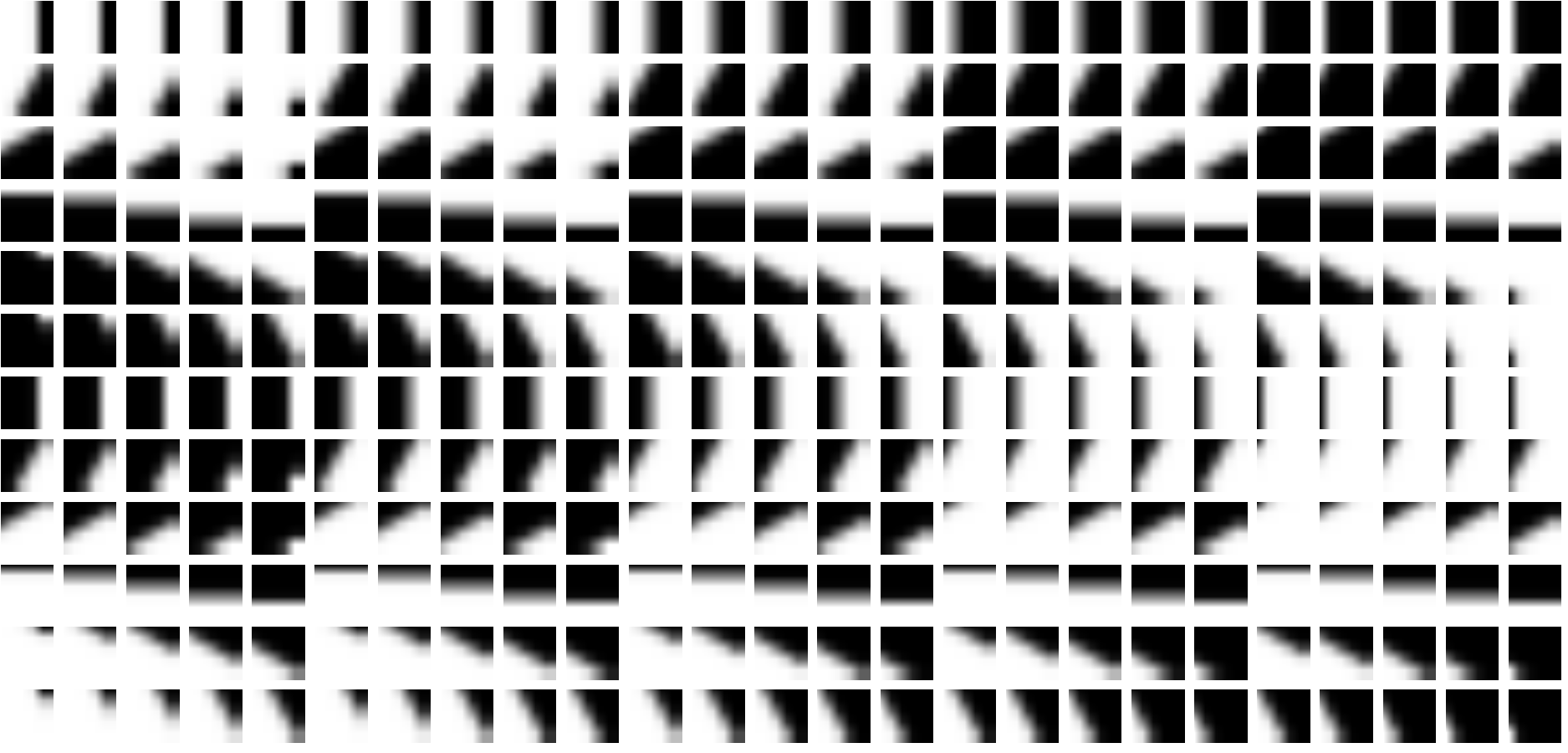}
} \hfill
\subcaptionbox{
Color code indicating oriented bright-to-dark signal transitions.
}[0.15\textwidth]{
\includegraphics[height=0.1\textwidth]{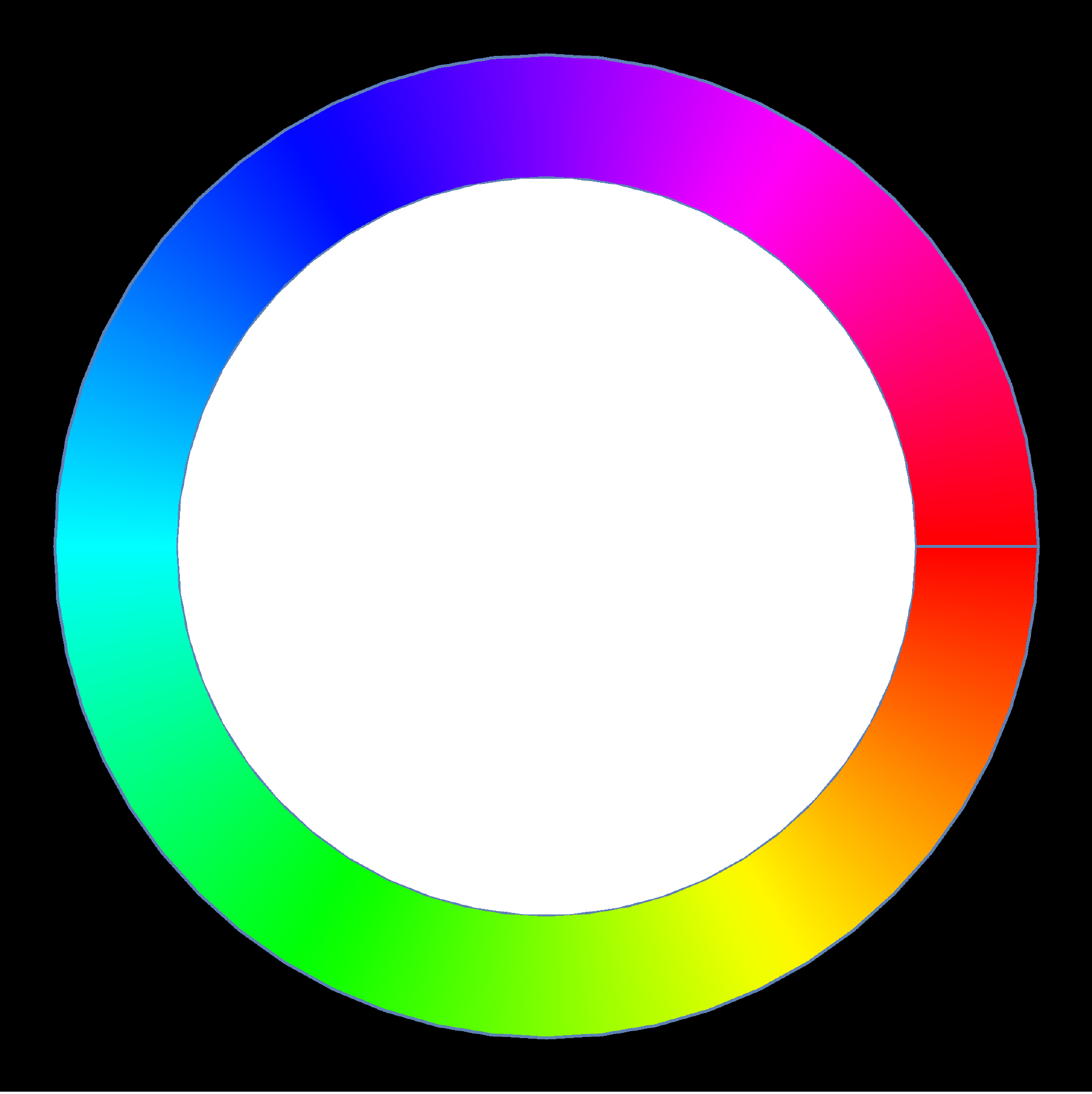}
}
%%%
\\[0.02\textwidth]
%%%
\subcaptionbox{
Assignment $u(W^{\ast})$ of $3 \times 3$ patches to image $f$ from (a). \\ ($\rho=0.02$)
}[0.3\textwidth]{
\includegraphics[height=0.2\textwidth]{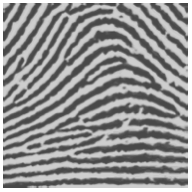}
} \hfill
\subcaptionbox{
Class label of assigned patches encoded due to (d). Black means assignment of the constant template that was added to the dictionary (c).
}[0.35\textwidth]{
\includegraphics[height=0.2\textwidth]{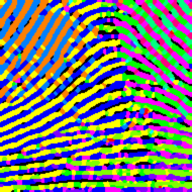}
} \hfill
\subcaptionbox{
Residual image $v(W^{\ast})=f-u(W^{\ast})$ (rescaled for visualization).
}[0.3\textwidth]{
\includegraphics[height=0.2\textwidth]{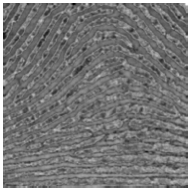}
}
%%%
\\[0.02\textwidth]
%%%
\subcaptionbox{
Assignment $u(W^{\ast})$ of $7 \times 7$ patches to image $f$ from (a). \\ ($\rho=0.02$)
}[0.3\textwidth]{
\includegraphics[height=0.2\textwidth]{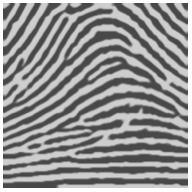}
} \hfill
\subcaptionbox{
Class label of assigned patches encoded due to (d).
}[0.3\textwidth]{
\includegraphics[height=0.2\textwidth]{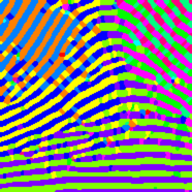}
} \hfill
\subcaptionbox{
Residual image $v(W^{\ast})=f-u(W^{\ast})$ (rescaled for visualization).
}[0.3\textwidth]{
\includegraphics[height=0.2\textwidth]{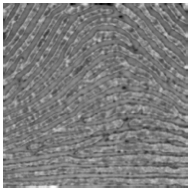}
}
\caption{
Analysis of the local signal structure of image (a) by patch assignment. This process is twofold non-local: (i) through the assignment of $3 \times 3$ patches (center row) and $7 \times 7$ patches, respectively, and (ii) due to the gradient flow \eqref{eq:W-gradient-flow} that promotes the spatially coherent assignment of patches corresponding to different orientations of signal transitions, in order to maximize the similarity objective \eqref{eq:objective-orig}. 
}
\label{fig:fprint}
\end{figure}

%%%
\begin{example}[Patch Assignment] \label{ex:fprint}
Figure \ref{fig:fprint} shows 
a fingerprint image characterized by two grey values $f^{\ast}_{\text{dark}}, f^{\ast}_{\text{bright}}$, that were extracted from the histogram of $f$ after removing a smooth function of the spatially varying mean value (panel (b)). The latter was computed by interpolating the median values for each patch of a coarse $16 \times 16$ partition of the entire image. 

Figure \ref{fig:fprint-dictionary} shows the dictionary of patches modelling the remaining binary signal transitions.
An essential difference to Example \ref{ex:roof} is the \textit{subdivision of the dictionary into classes of equivalent patches} corresponding to each orientation. The averaging process was set-up to distinguish only the assignment of patches of \textit{different} patch classes and to treat patches of the same class equally. This makes geometric averaging particularly effective if signal structures conform to a single class on larger spatial  connected supports. Moreover, it reduces the problem size to merely 13 class labels: 12 orientations at $k \cdot 30^{\circ},\, k \in [12]$ degrees, together with the single constant patch complementing the dictionary. 
 
The distance $d_{\mc{F}}(f^{i},f^{\ast j})$ between the image patch centered at $i$ and the $j$-th prior patch was chosen depending on both the prior patch and the data patch it was compared to: For the constant prior patch, the distance was
\begin{equation} \label{eq:distance-fingerprint}
d_{\mc{F}}(f^{i},f^{\ast j}) = \frac{1}{|\mc{N}_{p}(i)|}
\|f^{i}-f^{\ast}_{i} f^{\ast j}\|_{1}
\quad\text{with}\quad
f^{\ast}_{i} = \begin{cases}
f^{\ast}_{\text{dark}} & \text{if}\; \mrm{med}\{f^{i}_{j}\}_{j \in \mc{N}_{p}(i)} \leq \frac{1}{2}(f^{\ast}_{\text{dark}} + f^{\ast}_{\text{bright}}), \\
f^{\ast}_{\text{bright}} & \text{otherwise.}
\end{cases}
\end{equation}
For all other prior patches, the distance was
\begin{equation} \label{eq:df-ell1}
d_{\mc{F}}(f^{i},f^{\ast j}) = \frac{1}{|\mc{N}_{p}(i)|}
\|f^{i}-f^{\ast j}\|_{1}.
\end{equation}

The center and bottom row of Figure \ref{fig:fprint}, respectively, show the assignment $u(W^{\ast})$ of the dictionary of $3 \times 3$ patches (center row) and of $7 \times 7$ patches (bottom row). The center panels (f) and (i) depict the class labels of these assignments according to the color code of panel (d). These images display the interpretation of the image structure of $f$ from panel (a). While the assignment of patches of size $3 \times 3$ is slightly noisy, which becomes visible through the assignment of the constant template marked by black in panel (f), the assignment of $5 \times 5$ or $7 \times 7$ patches results in a robust and spatially coherent, accurate representation of the local image structure. The corresponding pronounced nonlinear filtering effect is due to the consistent assignment of a large number of patches at each pixel location and fusing the corresponding predicted values.

Panels (g) and (j) show the resulting additive image decompositions
\begin{equation}
f = u(W^{\ast}) + v(W^{\ast}),
\end{equation}
that seem difficult to achieve when using established convex variational approaches (see, e.g., \cite{Aujol-et-al-Decomp06}) that employ various regularizing norms and duality, for this purpose.

Finally, we point out that it would be straighforward to add to the dictionary further patches modelling minutiae and other features relevant to fingerprint analysis. We do not consider in this paper any application-specific aspects, however.
\end{example}

\begin{figure}
\centering
\subcaptionbox{
Uniform noise.
}[0.2\textwidth]{
\includegraphics[height=0.2\textwidth]{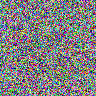}
} \hfill
\subcaptionbox{ \label{fig:u-noise-unsupervised}
Sparse assignment $u(W^{\ast})$ (displayed after rescaling) of $6^{3}$ color vectors corresponding to a uniform discretization of the rgb-cube $[0,1]^{3}$ to the image (a) yields a noise-induced random piecewise constant partition through geometric averaging (parameters: $|\mc{N}_{\veps}|=7 \times 7, \rho=0.01$). 
}[0.79\textwidth]{
\includegraphics[height=0.2\textwidth]{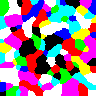}
}
%%%%%%%%%%%%%%%%%%
\\[0.05\textwidth]
%%%%%%%%%%%%%%%%%%
\subcaptionbox{ \label{fig:uPrior-statistics-noise-unsupervised}
Relative frequencies of assignment of the prior color vectors $f^{\ast j},\, j \in [6^3]$. The 8 non-zero frequencies correspond to vectors indicated in the color cube (d).
}[0.49\textwidth]{
\includegraphics[height=0.25\textwidth]{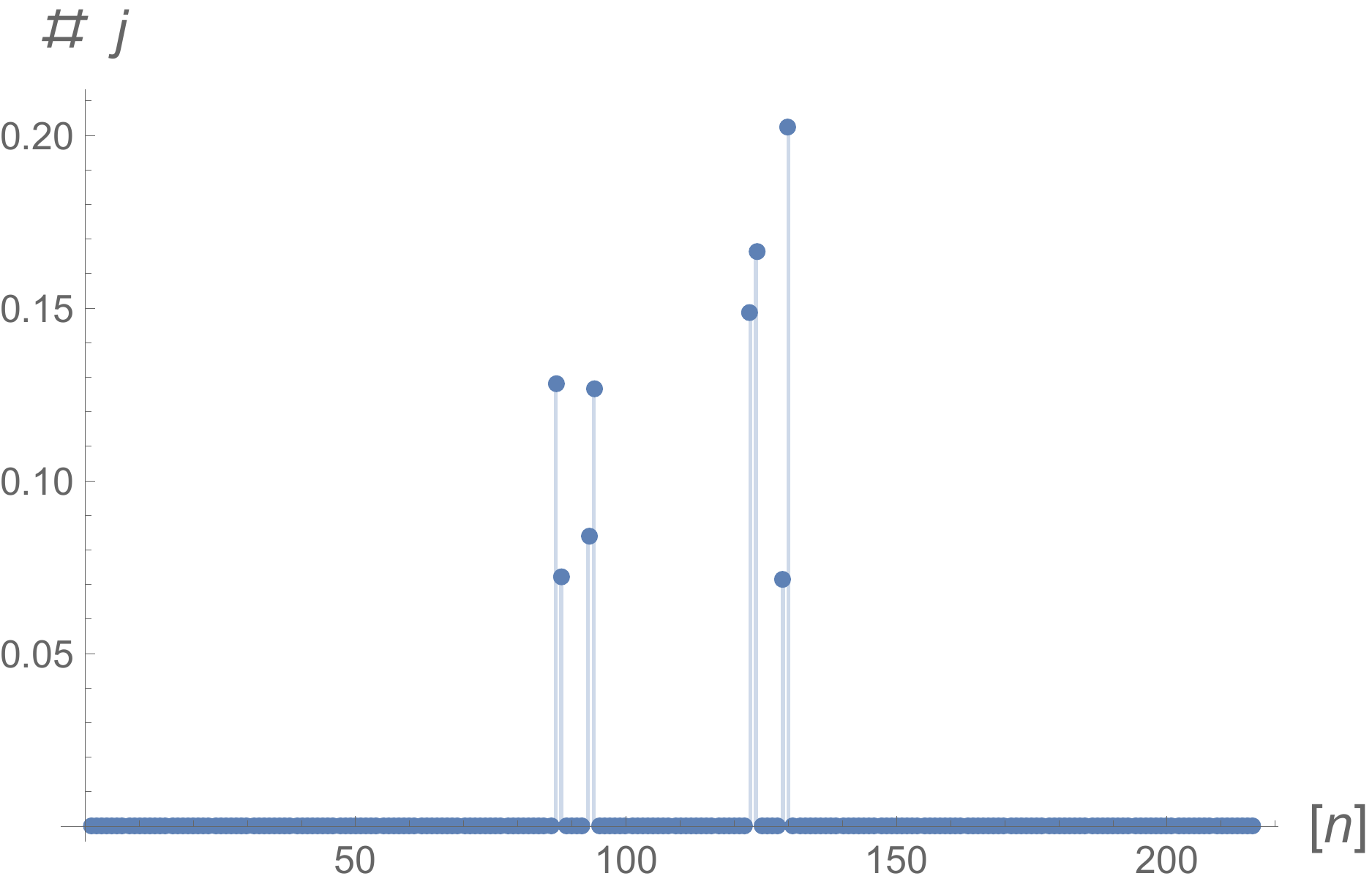}
} \hfill
\subcaptionbox{ \label{fig:uPrior-selection-noise-unsupervised}
8 color vectors (out of $6^3$) closest to grey (with equal distance) only were assigned to (a), resulting in (b). These colors look differently in (b) due to rescaling the image $u(W^{\ast})$ to $[0,1]^{3}$ for better visibility.
}[0.49\textwidth]{
\includegraphics[height=0.25\textwidth]{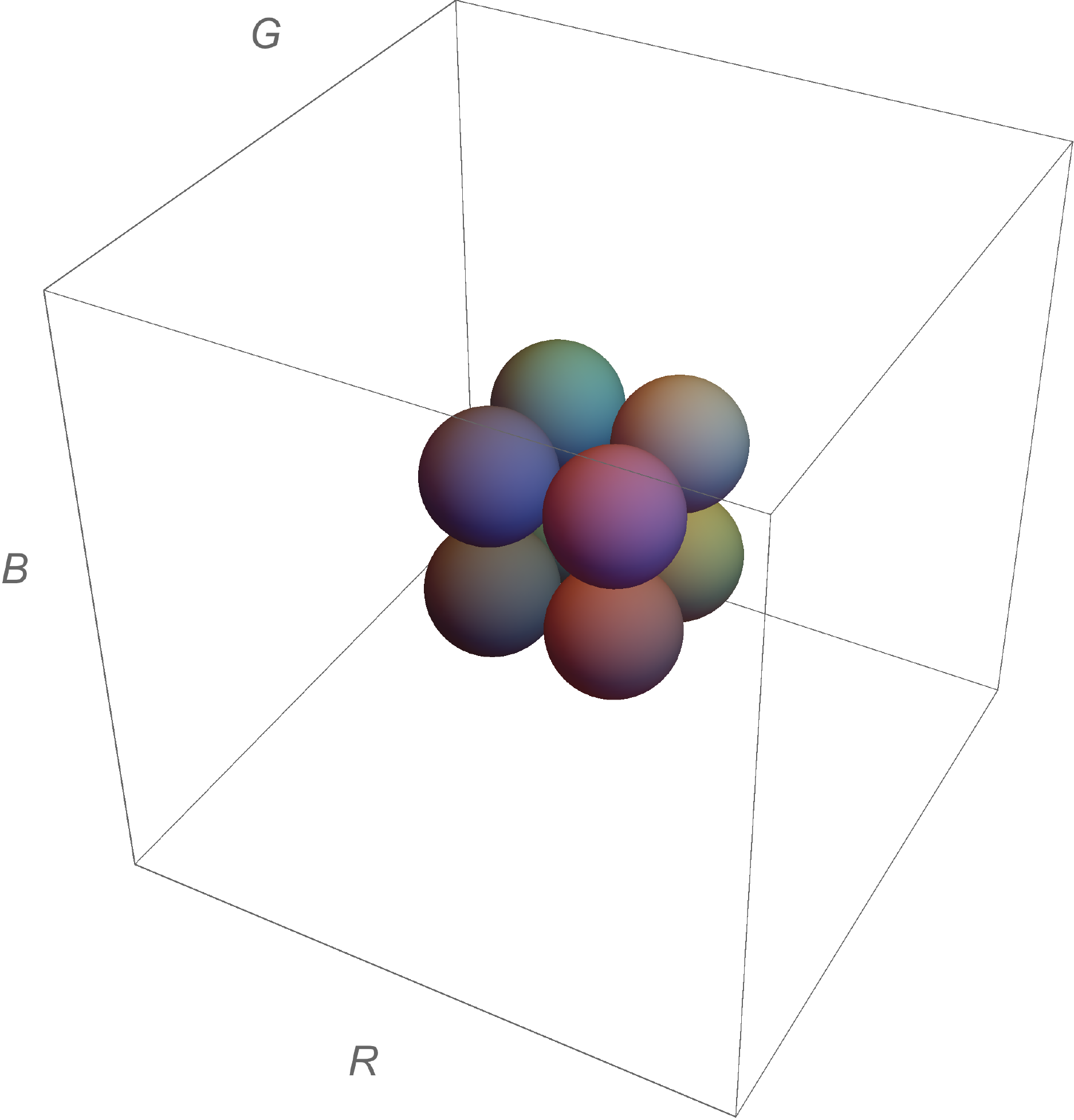}
}
\caption{
Unsupervised assignment of uniform noise (a) to itself in terms of a uniform discretization of the rgb-color cube $[0,1]^{3}$ that does not include the color grey $0.5 (1,1,1)^{\T}$. The assignment selects the 8 colors (d) closest to grey with random frequencies (c) and a spatially random partition (b) (rescaled to highlight the partition).
}
\label{fig:noise-unsupervised}
\end{figure}
%%%

%%%%%%%%%%
\subsection{Unsupervised Assignment}
\label{sec:Unsupervised-Assignment}

We consider the case that no prior information is available.

The simplest way to handle the absence of prior information is to use the given data themselves as prior information along with a suitable constraint, to enforce selection of the most important parts by \textit{self-assignment}.

In order to illlustrate this mechanism clearly, Figure \ref{fig:noise-unsupervised} shows as example the assignment of uniform noise to itself. As prior data $\mc{P}_{\mc{F}}$, we uniformly discretized the rgb-color cube $[0,1]^{3}$ at $0, 0.2, 0.4, \dotsc, 1$ along each axis, resulting in $|\mc{P}_{\mc{F}}| = 6^{3} = 216$ color vectors. Because there is no preference for any of these vectors, spatial diffusion of uniform noise at any spatial scale will inherently end up with the average color grey, which however is excluded from the prior set, by construction. Accordingly, the process terminated with a spatially random assignment of the 8 color vectors closest to grey (Figs.~\ref{fig:u-noise-unsupervised} rescaled and \ref{fig:uPrior-selection-noise-unsupervised}) solely induced by the input noise and geometric averaging at a certain scale. Figure \ref{fig:uPrior-statistics-noise-unsupervised} depicts the relative frequencies each prior vector is assigned to some location. Except for the 8 afore-mentioned vectors, all others are ignored.

A detailed elaboration of unsupervised scenarios based on our approach, for both vector- and patch-valued data, will be studied in our follow-up work (Section \ref{sec:Conclusion}).

\begin{figure}
\centering
\subcaptionbox{ \label{fig:rectangles-fgbg}
Collection of rectangular areas that result in (e) after uniform point sampling.
}[0.24\textwidth]{
\includegraphics[width=0.24\textwidth]{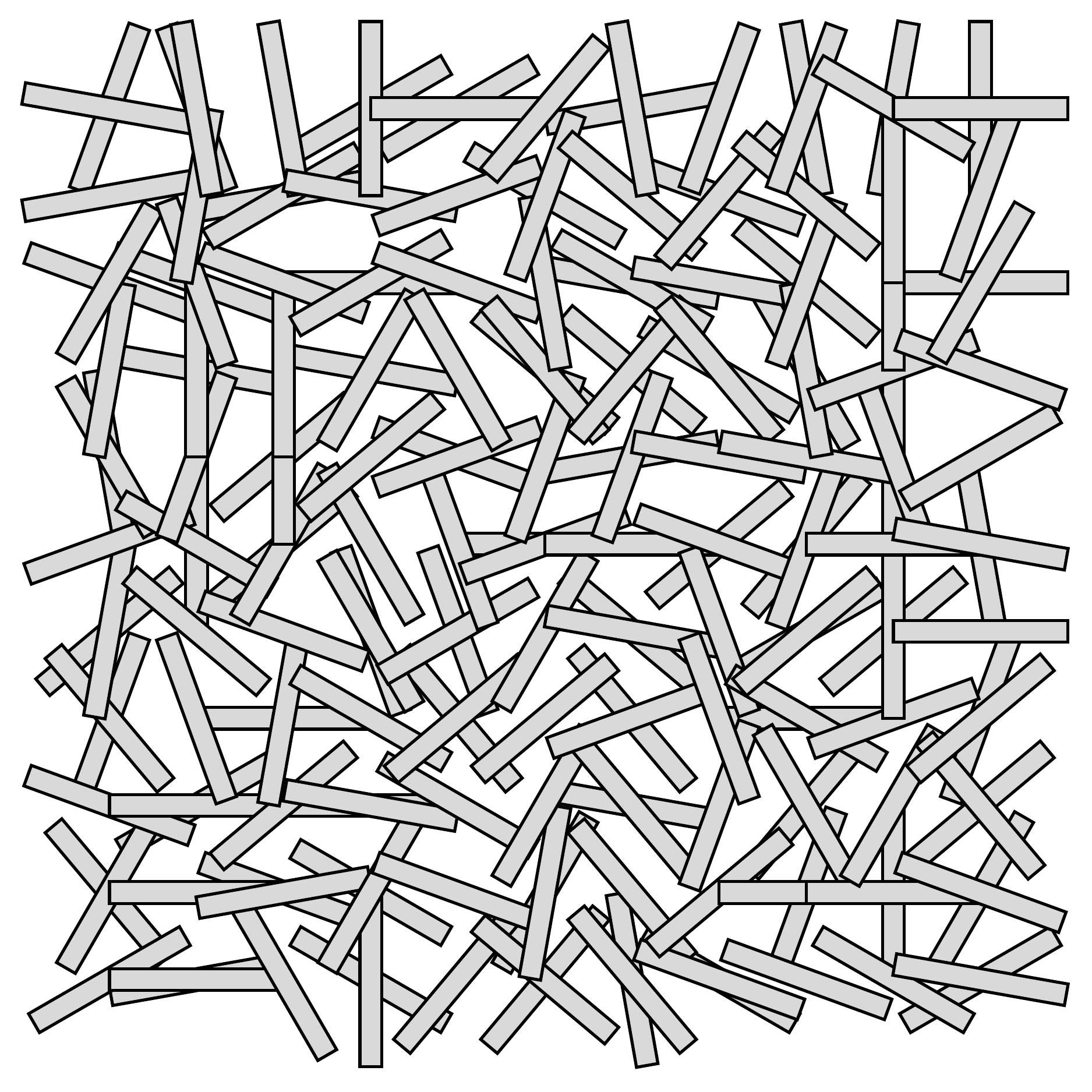}
} \hfill
\subcaptionbox{
Decomposition of the rectangles (a) into foreground (dark, cf.~(c)), and background (light, cf.~(d)).
}[0.24\textwidth]{
\includegraphics[width=0.24\textwidth]{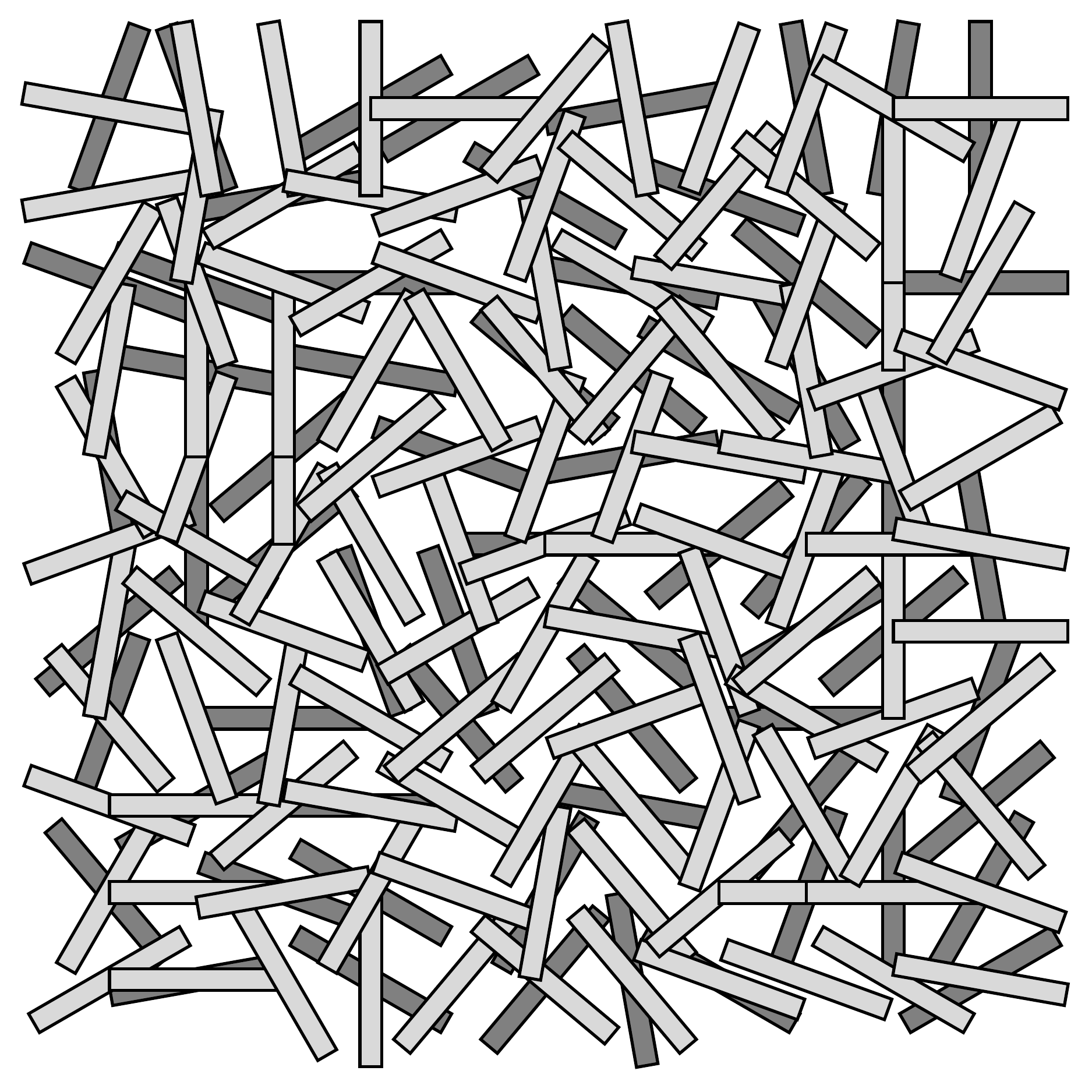}
} \hfill
\subcaptionbox{ \label{fig:rectangles-fg}
Randomly oriented foreground rectangles that do not intersect.
}[0.24\textwidth]{
\includegraphics[width=0.24\textwidth]{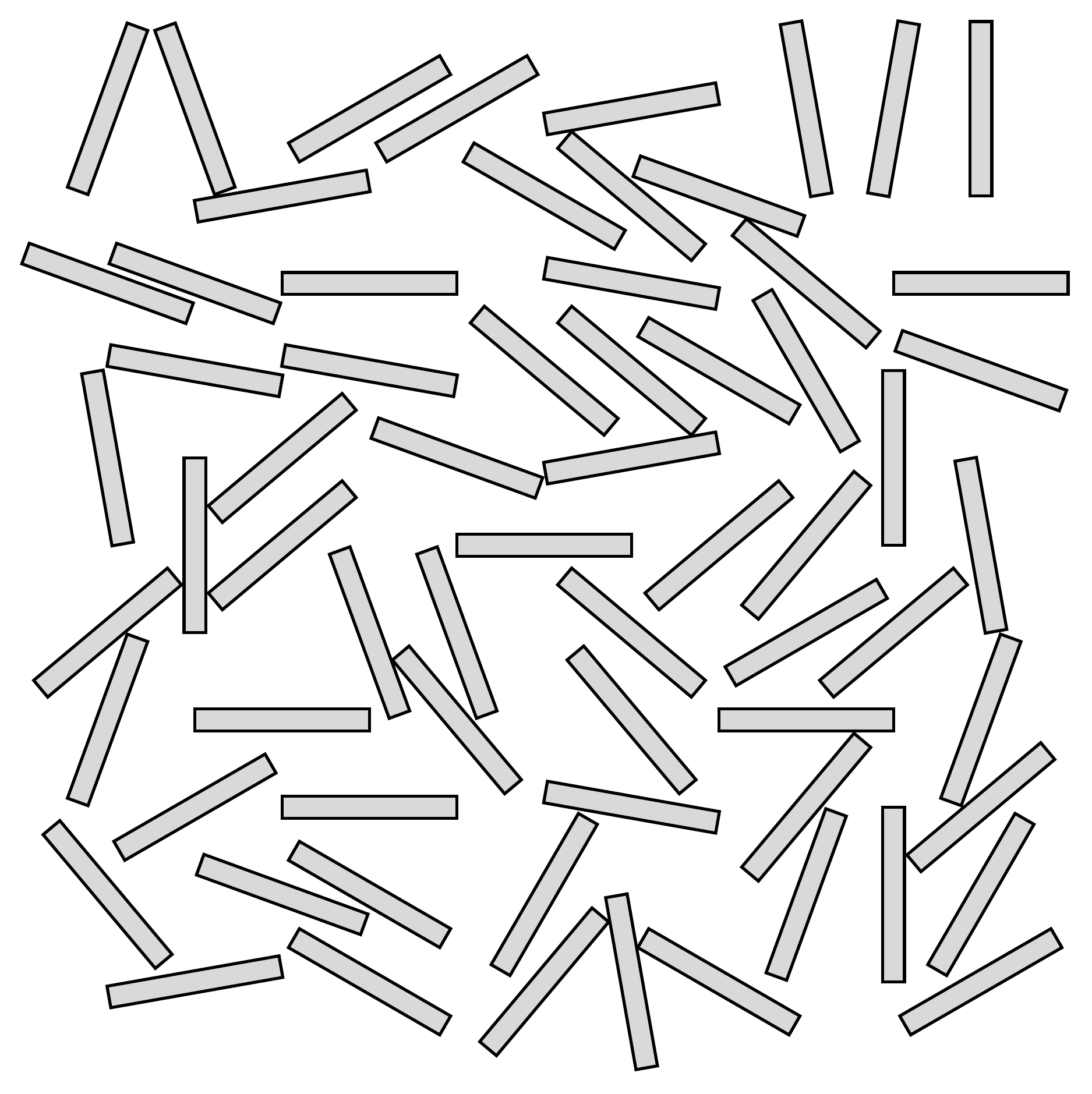}
} \hfill
\subcaptionbox{
Arbitrary sample of background rectangles from (f).
}[0.24\textwidth]{
\includegraphics[width=0.24\textwidth]{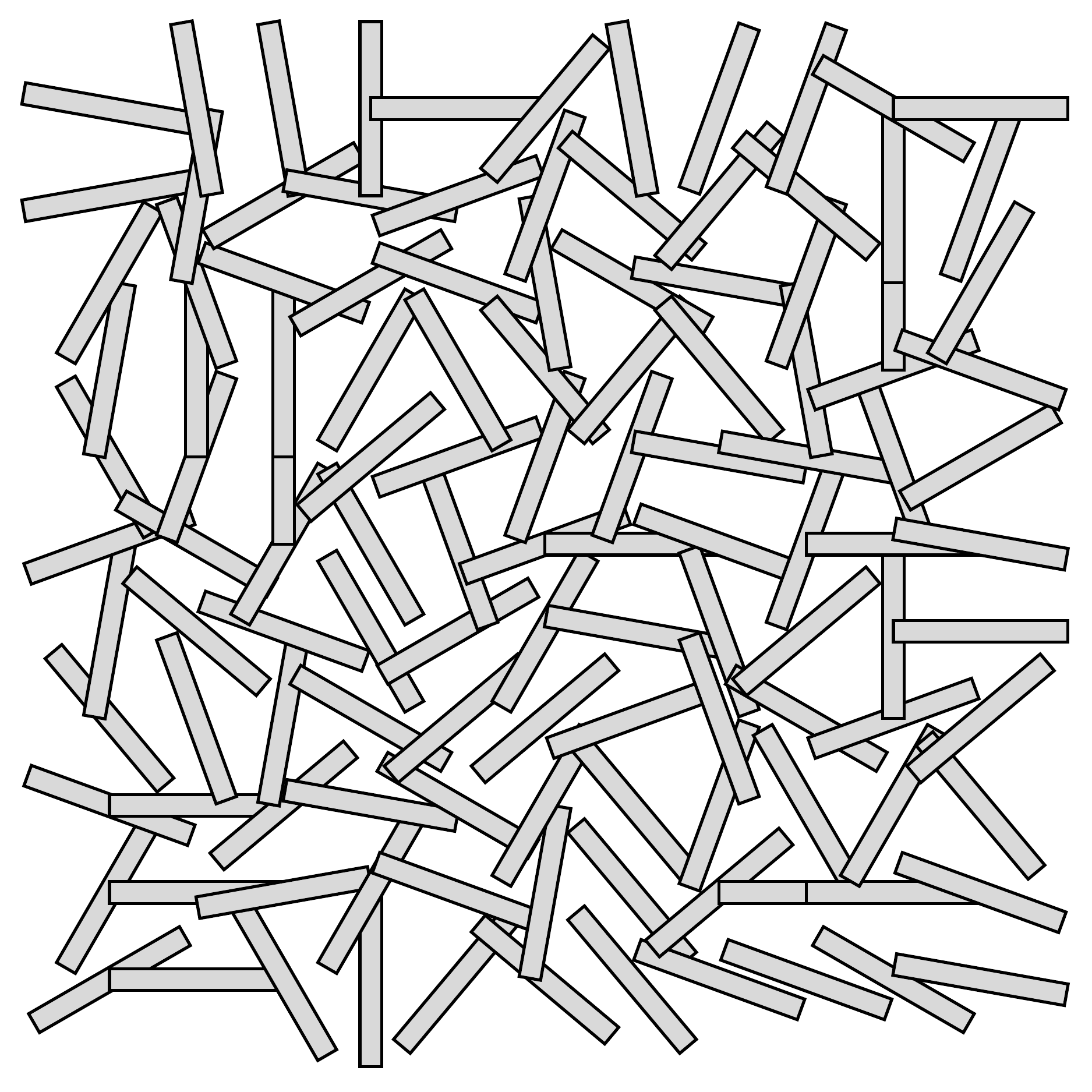}
}
%%%%%%%%%%%%%%%%%%
\\%[0.05\textwidth]
%%%%%%%%%%%%%%%%%%
\subcaptionbox{ \label{fig:rectangles-input}
Input data: point pattern resulting from uniformly sampling the rectangles (a).
}[0.24\textwidth]{
\includegraphics[width=0.24\textwidth]{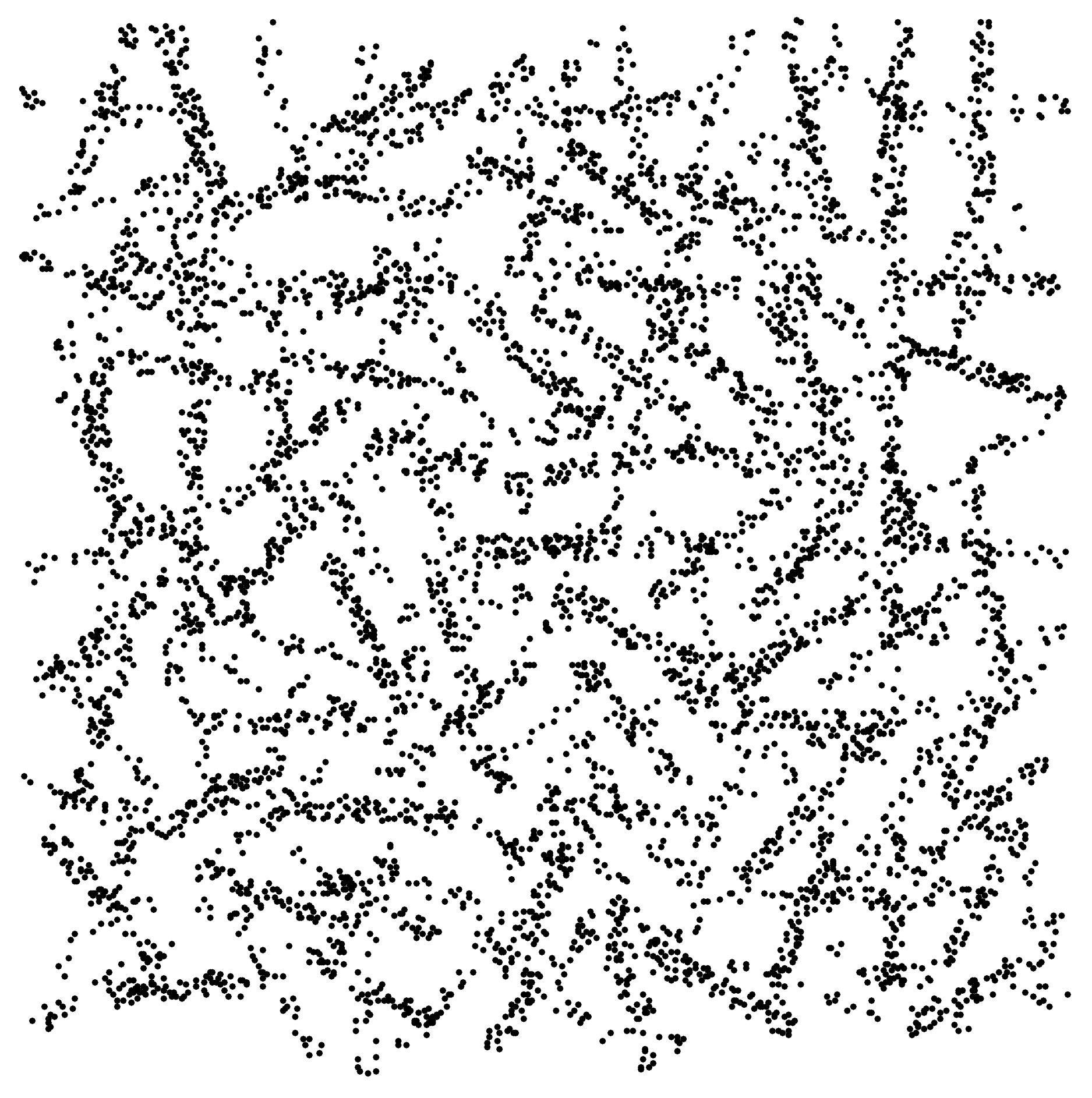}
} \hfill
\subcaptionbox{ \label{fig:rectangles-all}
All possible rectangles densely cover the domain as indicated in the center region (not completely shown for better visibility).
}[0.24\textwidth]{
\includegraphics[width=0.24\textwidth]{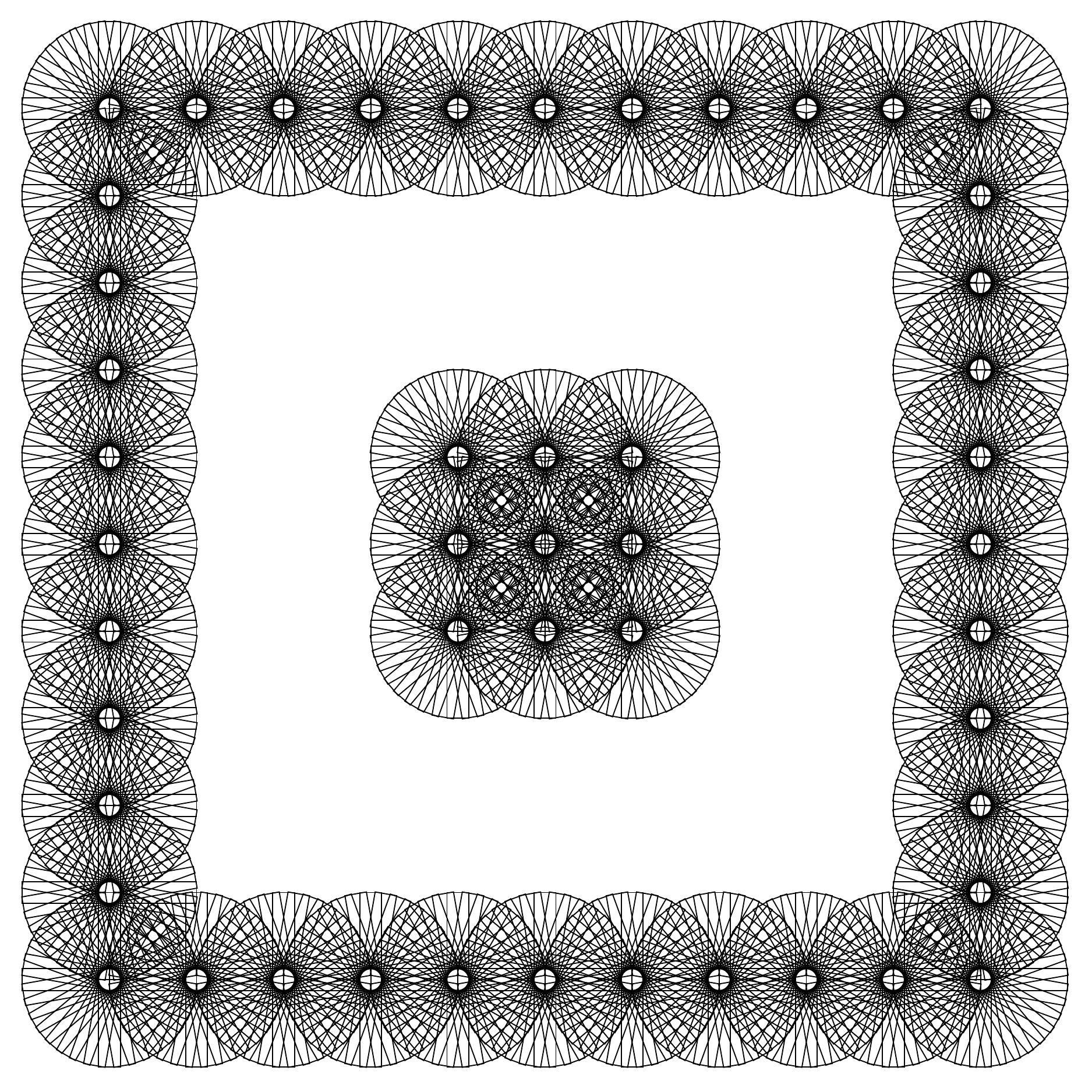}
} \hfill
\subcaptionbox{ \label{fig:rectangles-result}
Assignment (labeling) of the rectangles (f) based on the data (e): recognized foreground objects from (c) (black) and recognized background objects from (d) (dashed). Two foreground objects were erroneously labeled as background (gray). All remaining rectangles from (f) also belong to the background, four of which were erroneously labelled as foreground (white).
}[0.48\textwidth]{
\includegraphics[width=0.24\textwidth]{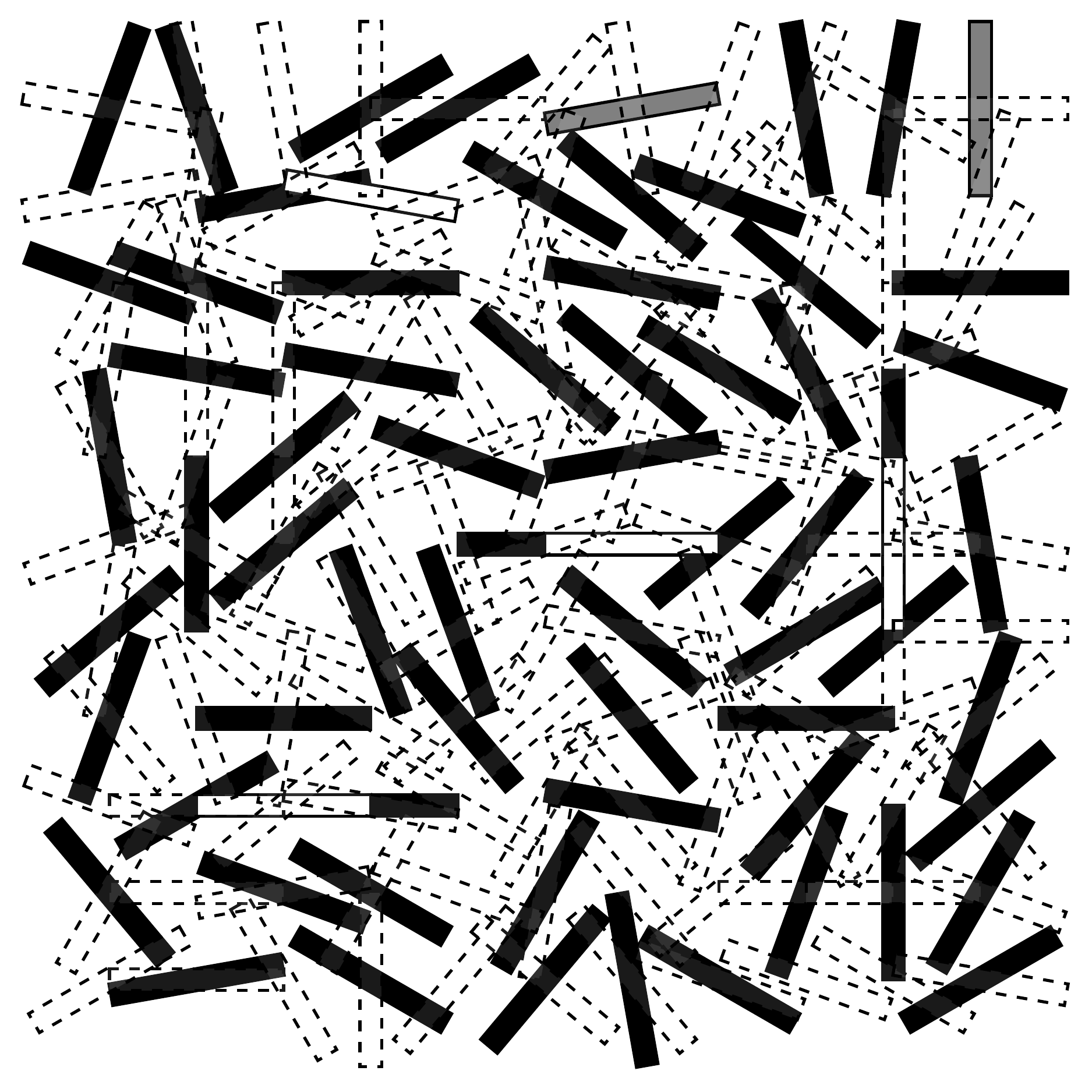}
} 
%%%
\caption{
Scenario for evaluating the approach of Section \ref{sec:rectangles}. (f) illustrates the set of all rectangles and corresponding subsets (c), (d). Unlike (d), the rectangles (c) do not intersect. Sampling the rectangles from both (c) and (d), shown together by (a) and (b), produced the input data (e). The task is to recognize among (f) all foreground objects (c) based on unary features (coverage of points) and disjunctive constraints (rectangles should not intersect). Panel (g) shows and discusses the result.
}
\label{fig:rectangles}
\end{figure}
\subsection{Labeling with Adaptive Distances}
\label{sec:rectangles}

In this section, we consider a simple instance of the more general class of scenarios where the distance matrix \eqref{eq:uDist} $D = D(W)$ depends on the assignment matrix $W$, in addition to the likelihood matrix $L(W)$ and the similarity matrix $S(W)$.

Figure \ref{fig:rectangles-input} displays a point pattern that was generated by sampling a foreground and background process of randomly oriented rectangles, as explained by the remaining panels of Figure \ref{fig:rectangles}. The task is to recover the foreground process among all possible rectangles (Fig.~\ref{fig:rectangles-all}) based on (i) unary features given by the fraction of points covered by each rectangle, and on (ii) the prior knowledge that unlike background rectangles, elements of the foreground process do \textit{not} intersect. Rectangles of the background process were slightly less densely sampled than foreground rectangles so as to make the unary features indicative. Due to the overlap of many rectangles (Fig.~\ref{fig:rectangles-fgbg}), however, these unary features are noisy (``weak''). 

As a consequence, exploiting the prior knowledge that foreground rectangles do not intersect becomes decisive. This is done by determining the intersection pattern of all rectangles (Fig.~\ref{fig:rectangles-all}) in terms of boolean values that are arranged into matrices $R_{ij}$, for each edge $ij$ of the grid graph whose vertices correspond to the centroids of the rectangles of Fig.~\ref{fig:rectangles-all}: $(R_{ij})_{k,l}=1$ if rectangle $k$ at position $i$ intersects with rectangle $l$ at position $j$, and $(R_{ij})_{k,l}=0$ otherwise. Due to the geometry of the rectangles, a rectangle at position $i$ may only intersect with $8 \times 18 = 144$ rectangles located within a 8-neighborhood $j \in \mc{N}_{\veps}(i)$. Generalizations to other geometries are straighforward. 

The inference task to recover the foreground rectangles (Fig.~\ref{fig:rectangles-fg}) from the point pattern (Fig.~\ref{fig:rectangles-input}) may be seen as a multi-labeling problem based on an asymmetric Potts-like model: labels correspond to equally oriented rectangles and have to be determined so as to maximize the coverage of points, subject to the pairwise constraints that selected rectangles do not intersect. Alternatively, we may think of binary ``off-on'' variables that are assigned to each rectangle of Fig.~\ref{fig:rectangles-all}, which have to be determined subject to \textit{disjunctive} constraints: at each location, at most a single variable may become active, and pairwise active variables have to satisfy the intersection constraints. Note that in order to suppress intersecting rectangles, penalizing costs are only encountered if (a subset of) pairs of variables receive the \textit{same} value 1 (= active and intersecting). This violates the submodularity constraint \cite[Eq.~(7)]{EnergyGraphCuts-PAMI04} and hence rules out global optimization using graph cuts.

Taking all ingredients into account, we define the distance vector field
\begin{equation} \label{eq:distance-rectangles}
D_{i} = D_{i}(W) = \frac{1}{\rho} \bpm \tilde D_{i}(W) \\ \sigma \epm,\quad
\tilde D_{i}(W) = -p^{i} + \frac{\lambda}{|\mc{N}_{\veps}(i)|} \sum_{j \in \mc{N}_{\veps}(i)} R_{ij} W_{j}, \quad
\lambda,\sigma > 0,
\end{equation}
where $\rho > 0$ is the selectivity parameter from \eqref{eq:uDist}, $\sigma > 0$ represents the cost of the additional label: ``none rectangle'', vector $p^{i}$ collects the fractions of points covered by the rectangles at position $i$, and $\lambda > 0$ weights the influence of the intersection prior. This latter term is defined by the matrices $R_{ij}$ discussed above and given by the gradient with respect to $W$ of the penalty $(\lambda/|\mc{N}_{\veps}(i)|) \sum_{ij \in \mc{E}} \la W_{i}, R_{ij} W_{j} \ra$.

In \cite{Kappes-et-al-08}, a continuous optimization approach using DC (difference of convex functions) programming was proposed to compute local minimizers of non-convex functionals similar to $\la D(W), W \ra$, with $D$ given by \eqref{eq:distance-rectangles}. This ``Euclidean approach'' -- in contrast to the geometric approach proposed here -- entails to provide a DC-decomposition of the intersection penalty just discussed and to \textit{explicitly} take into account the affine constraints $W_{i} \in \Delta_{n-1}$. As a result, the DC-approach computes a local minimizer by solving a \textit{sequence} of convex quadratic programs. 

In order to apply our present approach instead, we bypass the averaging step \eqref{eq:def-S} because labels will most likely be different at adjacent vertices $i$ in our random scenario, and we thus set $S(W) = L(W)$ with $L(W)$ given by \eqref{eq:def-L} based on \eqref{eq:distance-rectangles}. Applying then algorithm \eqref{eq:def-algorithm} \textit{implicitly} handles all constraints through the geometric flow and computes a local minimizer by multiplicative updates, within a small fraction of the runtime that the DC approach would need, and without compromising the quality of the solution (Fig.~\ref{fig:rectangles-result}).

\begin{figure}
\centering
\begin{subfigure}[b]{0.35\textwidth}
\centering
\includegraphics[width=0.45\textwidth]{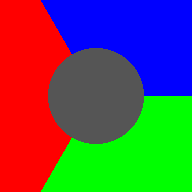}
\hspace{0.02\textwidth}%%%
\includegraphics[width=0.45\textwidth]{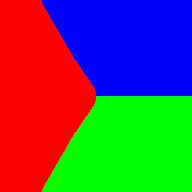}
\caption{
Inpainting of the regions marked by grey through assignment leads to the result on the right.
}
\label{fig:triplePoint}%
\end{subfigure}
\hspace{0.05\textwidth}%%%%%%%%%%
\begin{subfigure}[b]{0.35\textwidth}
\centering
\includegraphics[width=0.45\textwidth]{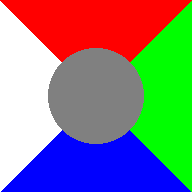}
\hspace{0.02\textwidth}%%%
\includegraphics[width=0.45\textwidth]{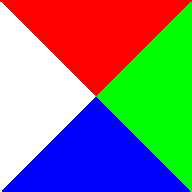}
\caption{
Inpainting of the regions marked by grey through assignment leads to the result on the right.
}
\label{fig:quadPoint}%
\end{subfigure}
%\hspace{0.02\textwidth}
%
\caption{
Two instances shown on the left in (a) and (b), adopted from \cite{Lellmann-Schnoerr-10a}, \cite{Chambolle2012} to study the tightness of convex \textit{outer} relaxations of the image labeling problem. The task is both to inpaint and to label the grey regions. Our smooth non-convex approach constitutes an \textit{inner} approximation that yields the labeling results shown on the right in (a) and (b), without the need of a separate rounding post-processing step that projects the solution of convex relaxations onto the feasible set of label assignments (parameters: $\rho=1$, $|\mc{N}_{\mc{E}}(i)|=3\times 3$).
}
\label{fig:rgb-inpainting}
\end{figure}

\subsection{Image Inpainting}
\textit{Inpainting} denotes the task to fill in a known region where no image data were observed or are known to be corrupted, based on the surrounding region and prior information. 

Once the feature metric $d_{\mc{F}}$ is fixed, we assign to each pixel in the region to be inpainted \textit{as datum} the \textit{uninformativ feature vector} $f$ which has the \textit{same} distance $d_{\mc{F}}(f,f^{\ast}_{j})$ to \textit{every} prior feature vector $f^{\ast}_{j} \in \mc{P}_{\mc{F}}$. Note that there is not need to explicitly compute this data vector $f$. It merely represents the rule for evaluating the distance $d_{\mc{F}}$ if one of its arguments belongs to a region to be inpainted.

Figure \ref{fig:rgb-inpainting} shows two basic examples that were used by the authors of \cite{Lellmann-Schnoerr-10a} and \cite{Chambolle2012}, respectively, to examine numerically the tightness of convex relaxations of the image labeling problem. Unlike convex relaxations that constitute \textit{outer} approximations of the combinatorically complex feasible set of assignments, our smooth non-convex approach may be considered as an \textit{inner} approximation that yields results without the need of further rounding, i.e.~the need of a post-processing step for projecting the solution of a convex relaxed problem onto the feasible set.

%\clearpage
%%%
\section{Conclusion and Further Work}
\label{sec:Conclusion}
We presented a novel approach to image labeling, formulated in a smooth geometric setting. The approach contrasts with etablished convex and non-convex relaxations of the image labeling problem through smoothness and geometric averaging. The numerics boil down to parallel sparse updates, that maximize the objective along an interior path in the feasible set of assignments and finally return a labeling. Although an elementary first-order approximation of the gradient flow was only used, the convergence rate seems competitive. In particular, a large number of labels, like in Section \ref{sec:Unsupervised-Assignment}, does not slow down convergence as is the case of convex relaxations. All aspects specific to an application domain are represented by a single distance matrix $D$ and a single user parameter $\rho$. This flexibility and the absence ad-hoc tuning parameters should promote applications of the approach to various image labeling problems.

\vspace{0.25cm}
\noindent
Aspects and open points to be addressed in future work include the following.
\begin{description}
\item[Numerics]
Many alternatives exist to the simple algorithm detailed in Section \ref{sec:optimization-algorithm}. An alternative first-order example are exponential multiplicative updates \cite{Cabrales1992}, that result from an explicit Euler discretization of the flow \eqref{eq:W-gradient-flow} rewritten in the form
\begin{equation}
\dv{t}\log\big(W_{i}(t)\big) = \nabla_{i} J(W) - \la W_{i}, \nabla_{i} J(W) \ra \eins,\qquad i \in [m].
\end{equation}
Of course, higher-order schemes respecting the geometry are conceivable as well. We point out that the inherent \textit{smoothness} of our problem formulation paves the way for \textit{systematic} progress.
\item[Nonuniform geometric averaging] So far, we did not exploit the degrees of freedom offered by the weights $w_{i},\, i \in [N]$, that define the Riemannian means by the objective \eqref{eq:objective-Rmean}. Possible enhancements of the solution-driven adaptivity of the assignment process in this connection need further investigation.
\item[Connection to nonlinear diffusion] Referring to the discussion of neighborhood filters and nonlinear diffusion in Section \ref{sec:Further-Related-Work}, research making these connections explicit is attractive  because, apparantly, our approach  is not covered by existing work.
\item[Unsupervised scenarios] The nonexistence of a prior data set $\mc{P}_{\mc{F}}$ in applications was only briefly addressed in Section \ref{sec:Unsupervised-Assignment}. In particular, the emergence of labels along with assignments and a corresponding generalization of our approach, deserves attention.
\item[Learning and updating prior information] This fundamental problem ties in with the preceding point: How can we learn and evolve prior information from many assignments over time?
\end{description}
We hope for a better mathematical understanding of corresponding models and that our work will stimulate corresponding research.

%%%%%%%%%%%%%%%%%%%%
\clearpage

\appendix
\section{Basic Notation}
\label{sec:Basic-Notation}

For $n \in \N$, we set $[n] = \{1,2,\dotsc,n\}$. 
$\eins = (1,1,\dotsc,1)^{\T}$ denotes the vector with all components equal to $1$, whose dimension can either be inferred from the context or is indicated by a subscript, e.g.~$\eins_{n}$. Vectors $v^{1}, v^{2},\dotsc$ are indexed by lower-case letters and superscripts, whereas subscripts $v_{i},\, i \in [n]$, index vector components. $e^{1},\dotsc,e^{n}$ denotes the canonical orthonormal basis of $\R^{n}$. 

We assume data to be indexed by a graph $\mc{G}=(\mc{V},\mc{E})$ with nodes $i \in \mc{V}=[m]$ and associated locations $x^{i} \in \R^{d}$, and with edges $\mc{E}$. A regular grid graph and $d=2$ is the canonical example. But $\mc{G}$ may also be irregular due to some preprocessing like forming super-pixels, for instance, or correspond to 3D images or videos ($d=3$). For simplicity, we call $i$ \textit{location} although this actually is $x^{i}$.

If $A \in \R^{m \times n}$, then the row and column vectors are denoted by $A_{i} \in \R^{n},\, i \in [m]$ and $A^{j} \in \R^{m},\, j \in [n]$, respectively, and the entries by $A_{ij}$. This notation of row vectors $A_{i}$ is the only exception from our rule of indexing vectors stated above.

The \textit{component-wise application of functions $f \colon \R \to \R$ to a vector} is simply denoted by $f(v)$, e.g.~
\begin{equation} \label{eq:def-function-componentwise}
\forall v \in \R^{n},\qquad
\sqrt{v} := (\sqrt{v_{1}},\dotsc,\sqrt{v_{n}})^{\T},\qquad
\exp(v) := \big(e^{v_{1}},\dotsc,e^{v_{n}}\big)^{\T} \qquad \text{etc.}
\end{equation}
Likewise, binary relations between vectors apply component-wise, e.g.~$u \geq v \;\gdw\; u_{i} \geq v_{i},\; i \in [n]$, and binary component-wise operations are simply written in terms of the vectors. For example,
\begin{equation}
p q := (\dotsc, p_{i} q_{i},\dotsc)^{\T},\qquad
\frac{p}{q} := \Big(\dotsc,\frac{p_{i}}{q_{i}},\dotsc\Big)^{\T},
\end{equation}
where the latter operation is only applied to strictly positive vectors $q > 0$. The \textit{support} $\supp(p) = \{p_{i} \neq 0 \colon i \in \supp(p)\} \subset [n]$ of a vector $p \in \R^{n}$ is the index set of all non-nonvanishing components of $p$.

$\la x, y \ra$ denotes the standard Euclidean inner product and $\|x\| = \la x, x \ra^{1/2}$ the corresponding norm. Other $\ell_{p}$-norms, $1 \leq p \neq 2 \leq \infty$, are indicated by a corresponding subscript,
%\begin{equation}
$
\|x\|_{p} = \big(\sum_{i \in [d]} |x_{i}|^{p}\big)^{1/p},
$
%\end{equation}
except for the case $\|x\| = \|x\|_{2}$. For matrices $A, B \in \R^{m \times n}$, the canonical inner product is 
%\begin{equation}
$
\la A, B \ra = \tr(A^{\T} B)
$
%\end{equation}
with the corresponding Frobenius norm $\|A\| = \la A, A \ra^{1/2}$. $\Diag(v) \in \R^{n \times n},\, v \in \R^{n}$, is the diagonal matrix with the vector $v$ on its diagonal.

\vspace{0.25cm} 
Other basic sets and their notation are
\begin{subequations}
\begin{align}
&\text{the positive orthant} &
\R_{+}^{n} &= \{ p \in \R^{n} \colon p \geq 0 \}, \\
&\text{the set of strictly positive vectors}, &
\R_{++}^{n} &= \{p \in \R^{n} \colon p > 0\}, \\
&\text{the ball of radius $r$ centered at $p$}, &
\B_{r}(p) &= \{p \in \R^{n} \colon \|p\| \leq r\}, \\
&\text{the unit sphere} &
\Sp^{n-1} &= \{p \in \R^{n} \colon \|p\|=1\}, \\
&\text{the probability simplex} &
\Delta_{n-1} &= \{p \in \R_{+}^{n} \colon \la \eins, p \ra = 1 \}, \\
\label{eq:rint-delta}
&\text{and its relative interior} &
\mc{S} &= \otop\Delta_{n-1} = \Delta_{n-1} \cap \R_{++}^{n}, \\
& &
\mc{S}_{n} &= \mc{S}\;\text{with concrete value of $n$ 
(e.g.~$\mc{S}_{3}$)}, \\
& \text{closure (not regarded as manifold)} &
\ol{\mc{S}} &= \Delta_{n-1}, \\
&\text{the sphere with radius $2$} &
\mc{N} &= 2 \Sp^{n-1}, \\
&\text{and the assignment manifold} &
\mc{W} &= \mc{S} \times \dotsb \times \mc{S},
\quad\text{($m$ times)}, \\
&\text{closure (not regarded as manifold)} &
\ol{\mc{W}} &= \ol{\mc{S}} \times \dotsb \times \ol{\mc{S}},
\quad\text{($m$ times)}.
\end{align}
\end{subequations}
For a discrete distribution $p \in \Delta_{n-1}$ and a finite set $S=\{s^{1},\dotsc,s^{n}\}$ vectors, we denote by
\begin{equation}
\EE_{p}[S] := \sum_{i \in [n]} p_{i} s^{i}
\end{equation}
the mean of $S$ with respect to $p$.

\vspace{0.25cm}
Let $\mc{M}$ be a any differentiable manifold. Then $T_{p}\mc{M}$
denotes the tangent space at base point $p \in \mc{M}$ and $T\mc{M}$ the total space of the tangent bundle of $\mc{M}$. If $F \colon \mc{M} \to \mc{N}$ is a smooth mapping between differentiable manifold $\mc{M}$ and $\mc{N}$, then the differential of $F$ at $p \in \mc{M}$ is denoted by
\begin{equation} \label{eq:differential-notation}
DF(p) \colon T_{p}\mc{M} \to T_{F(p)}\mc{N},\qquad
DF(p) \colon v \mapsto DF(p)[v].
\end{equation}
If $F \colon \R^{m} \to \R^{n}$, then $DF(p) \in \R^{n \times m}$ is the Jacobian matrix at $p$, and the application $DF(p)[v]$ to a vector $v \in \R^{m}$ means matrix-vector multiplication. We then also write $DF(p)v$. If $F = F(p,q)$, then $D_{p}F(p,q)$ and $D_{q}F(p,q)$ are the Jacobians of the functions $F(\cdot,q)$ and $F(p,\cdot)$, respectively.

The gradient of a differentiable function $f \colon \R^{n} \to \R$ is denoted by $\nabla f(x) = \big(\partial_{1} f(x),\dotsc,\partial_{n} f(x)\big)^{\T}$, whereas the Riemannian gradient of a function $f \colon \mc{M} \to \R$ defined on Riemannian manifold $\mc{M}$ is denoted by $\nabla_{\mc{M}} f$. Eq.~\eqref{eq:def-nabla_S} recalls the formal definition.

The \textit{exponential mapping} \cite[Def.~1.4.3]{Jost2005}
\begin{equation} \label{eq:def-Exp}
\Exp_{p} \colon T_{p}\mc{M} \to \mc{M},\quad
v \mapsto \Exp_{p}(v) = \gamma_{v}(1),\qquad \gamma_{v}(0)=p,\; \dot \gamma_{v}(0)=\dv{t}\gamma_{v}(t)\big|_{t=0} = v
\end{equation}
maps the tangent vector $v$ to the point $\gamma_{v}(1) \in \mc{M}$, uniquely defined by the \textit{geodesic curve} $\gamma_{v}(t)$ emanating at $p$ in direction $v$. $\gamma_{v}(t)$ is the shortest path on $\mc{M}$ between the points $p, q \in \mc{M}$ that $\gamma_{v}$ connects. This miminal length equals the \textit{Riemannian distance} $d_{\mc{M}}(p,q)$ induced by the 
\textit{Riemannian metric}, denoted by 
\begin{equation} \label{eq:def-Rmetric}
\la u, v \ra_{p},
\end{equation}
i.e.~the inner product on the tangent spaces $T_{p}\mc{M},\,p \in \mc{M}$, that smoothly varies with $p$. Existence and uniqueness of geodesics will not be an issue for the manifolds $\mc{M}$ considered in this paper.
\begin{remark}
The exponential mapping $\Exp_{p}$ should not be confused with 
\begin{itemize}
\item
the exponential function $e^{v}$ used e.g.~in \eqref{eq:def-function-componentwise};
\item
the mapping $\exp_{p} \colon T_{p}\mc{S}$ defined by Eq.~\eqref{eq:def-exp-vector}.
\end{itemize}
\end{remark}
\noindent
The abbreviations `l.h.s.'~and `r.h.s.'~mean \textit{left-hand side} and \textit{right-hand side} of some equation, respectively. We abbreviate \textit{with respect to} by `wrt.'.

%%%%%%%%%
\newpage

\section{Proofs and Further Details} \label{sec:App-Proofs}
\subsection{Proofs of Section \ref{sec:Assignment-Manifold}}
\label{sec:S-proofs}

\begin{proof}[Proof of Lemma \ref{lem:sphere-map}]
Let $p \in \mc{S}$ and $v \in T_{p}\mc{S}$. We have 
\begin{equation} \label{eq:Dpsi-simplex}
D\psi(p) = \Diag(p)^{-1/2}
\end{equation}
and $\big\la \psi(p), D\psi(p)[v] \big\ra = \la 2 \sqrt{p}, \frac{v}{\sqrt{p}} \ra = 2 \la \eins, v \ra = 0$,
that is $D\psi(p)[v] \in T_{\psi(p)}\mc{N}$. Furthermore,
\begin{equation}
\big\la D\psi(p)[u], D\psi(p)[v] \big\ra
= \big\la u/\sqrt{p}, v/\sqrt{p} \ra \overset{\eqref{eq:metric-simplex}}{=} \la u, v \ra_{p},
\end{equation}
i.e.~the Riemannian metric is preserved and hence also the length $L(s)$ of curves $s(t) \in \mc{N},\, t \in [a,b]$: Put $\gamma(t) = \psi^{-1}\big(s(t)\big) = \frac{1}{4} s^{2}(t) \in \mc{S},\, t \in [a,b]$. Then $\dot\gamma(t)=\frac{1}{2} s(t) \dot s(t) = \frac{1}{2} \psi\big(\gamma(t)\big) \dot s(t) = \sqrt{\gamma(t)} \dot s(t)$ and
\begin{equation}
L(s) = \int_{a}^{b} \|\dot s(t)\| dt
= \int_{a}^{b} \bigg\la \frac{\dot\gamma(t)}{\sqrt{\gamma(t)}}, \frac{\dot\gamma(t)}{\sqrt{\gamma(t)}} \bigg\ra^{1/2} dt
\overset{\eqref{eq:metric-simplex}}{=} \int_{a}^{b} \|\dot\gamma(t)\|_{\gamma(t)} dt = L(\gamma).
\end{equation}
\end{proof}
\begin{proof}[Proof of Prop.~\ref{prop:simplex-gradient}]
Setting $g\colon\mc{N} \to \R$, $q \mapsto g(s) := f\big(\psi^{-1}(s)\big)$ with $s = \psi(p) = 2 \sqrt{p}$ from \eqref{eq:def-psi-simplex-sphere}, we have 
\begin{equation} \label{eq:g-N-gradient}
\nabla_{\mc{N}} g(s) = \bigg(I - \frac{s}{\|s\|} \frac{s^{\T}}{\|s\|}\bigg) \nabla g(s),
\end{equation}
because the 2-sphere $\mc{N}=2\Sp^{n-1}$ is an embedded submanifold, and hence the Riemannian gradient equals the orthogonal projection of the Euclidean gradient onto the tangent space. Pulling back the vector field $\nabla_{\mc{N}} g$ by $\psi$ using 
\begin{equation}
\nabla g(s) = \nabla f\big(\psi^{-1}(s)\big)
= \nabla f\Big(\frac{1}{4} s^{2}\Big)
= \frac{1}{2} s \big(\nabla f(p)\big),
\end{equation}
we get with \eqref{eq:Dpsi-simplex}, \eqref{eq:g-N-gradient} and $\|s\|=2$ and hence $s/\|s\| = \frac{1}{2} \psi(p)=\sqrt{p}$
\begin{subequations}
\begin{align}
\nabla f_{\mc{S}}(p) &= \big(D\psi(p)\big)^{-1}\big(\nabla_{\mc{N}} g(\psi(p))\big) \\
&= \Diag(\sqrt{p}) \Big(\big(I - \sqrt{p} \sqrt{p}^{\T}\big)
\sqrt{p} \big(\nabla f(p)\big)\Big) \\
&= p \big(\nabla f(p)\big) - \la p, \nabla f(p) \ra p,
\end{align}
\end{subequations}
which equals \eqref{eq:simplex-gradient}. We finally check that $\nabla f_{\mc{S}}(p)$ satisfies \eqref{eq:def-nabla_S} (with $\mc{S}$ in place of $\mc{M}$). Using \eqref{eq:metric-simplex}, we have
\begin{subequations}
\begin{align}
\la \nabla f_{\mc{S}}(p), v \ra_{p}
&= \Big\la \sqrt{p} \big(\nabla f(p)\big) - \la p, \nabla f(p) \ra \sqrt{p}, \frac{v}{\sqrt{p}} \Big\ra \\
&= \la \nabla f(p), v \ra - \la p, \nabla f(p) \ra \la \eins, v \ra
\overset{\eqref{eq:def-TSimplex}}{=} \la \nabla f(p), v \ra,\qquad \forall v \in T_{p}\mc{S}.
\end{align}
\end{subequations}
\end{proof}
\begin{proof}[Proof of Prop.~\ref{prop:Exp}]
The geodesic on the 2-sphere emanating at $s(0) \in \mc{N}$ in direction $w=\dot s(0) \in T_{s(0)}\mc{N}$ is given by
\begin{equation}
s(t) = s(0) \cos\Big(\frac{\|w\|}{2} t\Big) 
+ 2 \frac{w}{\|w\|} \sin\Big(\frac{\|w\|}{2} t\Big).
\end{equation}
Setting $s(0)=\psi(p)$ and $w = D\psi(p)[v]=v/\sqrt{p}$, the geodesic emanating at $p=\gamma_{v}(0)$ in direction $v$ is given by $\psi^{-1}\big(s(t)\big)$ due to Lemma \ref{lem:sphere-map}, which results in \eqref{eq:gamma-S} after elementary computations.
\end{proof}

\subsection{Proofs of Section \ref{sec:Approach} and Further Details}
\label{sec:approach-proofs}

\begin{proof}[Proof of Prop.~\ref{prop:Exp-by-exp}]
We have $p = \exp_{p}(0)$ and 
\begin{equation}
\dv{t} \exp_{p}(u t) = \frac{\la p, e^{u t} \ra p e^{u t} u - p e^{u t} \la p, e^{u t} u \ra}{\la p, e^{u t} \ra^{2}} 
= p(t) u - \la p(t), u \ra p(t), 
\end{equation}
which confirms \eqref{eq:exp-pt}, is equal to \eqref{eq:v-from-u} at $t=0$ and hence yields the first expression of \eqref{eq:exp-approximates-gamma}. The second expression of \eqref{eq:exp-approximates-gamma} follows from a Taylor expansion of \eqref{eq:gamma-S}
\begin{equation}
\gamma_{v}(t) \approx p + v t + \frac{1}{4}\big(v_{p}^{2}-\|v_{p}\|^{2} p\big) t^{2},\qquad v_{p} = \frac{v}{\sqrt{p}}.
\end{equation}
\end{proof}

\begin{proof}[Proof of Lemma \ref{lem:global-maxima}]
By construction, $S(W) \in \mc{W}$, that is $S_{i}(W) \in \mc{S},\; i \in [m]$. Consequently, $0 \leq J(W)=\sum_{i \in [m]} \la S_{i}(W), W_{i} \ra \leq \sum_{i \in [m]} \|S_{i}(W)\| \|W_{i}\| < m$. The upper bound corresponds to matrices $\ol{W}^{\ast} \in \ol{\mc{W}}$ and $S(\ol{W}^{\ast})$ where for each $i \in [m]$, \textit{both} $\ol{W}^{\ast}_{i}$ and $S_{i}(\ol{W}^{\ast})$ equal the \textit{same} unit vector $e^{k_{i}}$ for some $k_{i} \in [m]$.
\end{proof}

\begin{proof}[Explicit form of \eqref{eq:Tij-matrices}]
The matrices $T^{ij}(W) = \pdv{W_{ij}} S(W)$ are implicitly given through the optimality condition \eqref{eq:grad-Rmean} that each vector $S_{k}(W),\, k \in [m]$, defined by \eqref{eq:def-S} has to satisfy,
\begin{equation} \label{eq:SkW-optimality-condition}
S_{k}(W) = \mrm{mean}_{\mc{S}}\{L_{r}(W_{r})\}_{r \in \tilde{\mc{N}}_{\mc{E}}(k)}
\qquad\gdw\qquad
0 = \sum_{r \in \tilde{\mc{N}}_{\mc{E}}(k)} \Exp_{S_{k}(W)}^{-1}\big(L_{r}(W_{r})\big).
\end{equation}
Writing
\begin{equation} \label{eq:def-phi}
\phi\big(S_{k}(W),L_{r}(W_{r})\big) 
:= \Exp_{S_{k}(W)}^{-1}\big(L_{r}(W_{r})\big),
\end{equation}
and temporarily dropping below $W$ as argument to simplify the notation, and using the indicator function $\delta_{\mrm{P}} = 1$ if the predicate $\mrm{P}=\mrm{true}$ and $\delta_{\mrm{P}} = 1$ otherwise, we differentiate the optimality condition on the r.h.s.~of \eqref{eq:SkW-optimality-condition},
\begin{subequations}
\begin{align}
0 &= \pdv{W_{ij}} \sum_{r \in \tilde{\mc{N}}_{\mc{E}}(k)}
\phi\big(S_{k}(W),L_{r}(W_{r})\big) \\
&= \sum_{r \in \tilde{\mc{N}}_{\mc{E}}(k)} \Big( D_{S_{k}}\phi(S_{k},L_{r}) \Big[\pdv{W_{ij}} S_{k}(W)\Big] + \delta_{i=r} D_{L_{r}}\phi(S_{k},L_{r}) \Big[\pdv{W_{rj}} L_{r}(W_{r})\Big] \Big) \\
&= \Big(\sum_{r \in \tilde{\mc{N}}_{\mc{E}}(k)} D_{S_{k}}\phi(S_{k},L_{r}) \Big) \Big(\pdv{W_{ij}} S_{k}(W)\Big)
+ \delta_{i \in \tilde{\mc{N}}_{\mc{E}}(k)} D_{L_{i}}\phi(S_{k},L_{i})\Big(\pdv{W_{ij}} L_{i}(W_{i}) \Big) \\ \label{eq:def-H-h}
&=: H^{k}(W) \Big(\pdv{W_{ij}} S_{k}(W)\Big) + h^{k,ij}(W).
\end{align}
\end{subequations}
Since the vectors $\phi(S_{k},L_{r})$ given by \eqref{eq:def-phi} are the negative Riemannian gradients of the (locally) strictly convex objectives \eqref{eq:objective-Rmean} defining the means $S_{k}$ \cite[Thm.~4.6.1]{Jost2005}, the regularity of the matrices $H^{k}(W)$ follows. Thus, using \eqref{eq:def-H-h} and defining the matrices
\begin{equation}
T^{ij}(W) \in \R^{m \times n},\qquad 
T^{ij}_{kl}(W) := \pdv{S_{kl}(W)}{W_{ij}},\qquad
i,k \in [m],\quad j,l \in [n],
\end{equation}
results in \eqref{eq:Tij-matrices}. The explicit form of this expression results from computing and inserting into \eqref{eq:def-H-h} the corresponding Jacobians $D_{p}\phi(p,q)$ and $D_{q}\phi(p,q)$ of
\begin{subequations} \label{eq:InvExp-pdfLi}
\begin{align}
\phi(p,q)&=\Exp_{p}^{-1}(q)
= \frac{d_{\mc{S}}(p,q)}{\sqrt{1-\la \sqrt{p},\sqrt{q}\ra^{2}}}\big(\sqrt{p q}-\la\sqrt{p},\sqrt{q}\ra p\big), 
\label{eq:InvExp-pdfLi-1} \intertext{and} \label{eq:InvExp-pdfLi-2}
\pdv{W_{ij}} L_{i}(W_{i})
&= \frac{e^{-U_{ij}}}{\la W_{i}, e^{-U_{i}} \ra} \big(e^{j} - L_{i}(W_{i})\big).
\end{align}
\end{subequations}
The term \eqref{eq:InvExp-pdfLi-1} results from mapping back the corresponding vector from the 2-sphere $\mc{N}$,
\begin{equation}
\Exp_{p}^{-1}(q) = -\big(D\psi(p)\big)^{-1}\Big(\frac{1}{2}\nabla_{\mc{N}}d_{\mc{N}}^{2}\big(\psi(p),\psi(q)\big)\Big),
\end{equation}
where $\psi$ is the sphere map \eqref{eq:def-psi-simplex-sphere} and $d_{\mc{N}}$ is the geodesic distance on $\mc{N}$.
The term \eqref{eq:InvExp-pdfLi-2} results from directly evaluating \eqref{eq:def-L}.
\end{proof}

\begin{proof}[Proof of Lemma \ref{eq:Rmean-approximation}]
We first compute $\exp_{p}^{-1}$. Suppose
\begin{equation}
q = \exp_{p}(u) = \frac{p e^{u}}{\la p, e^{u} \ra}, \qquad
p,q \in \mc{S},\quad u \in \R^{n}.
\end{equation}
Then
\begin{equation}
\log(q) = \log(p)+u-\log(\la p, e^{u} \ra) \eins,\qquad
\log(\la p, e^{u} \ra) = \frac{1}{n} \la \eins, \log(p)-\log(q) \ra,
\end{equation}
and 
\begin{equation}
u = \exp_{p}^{-1}(q) 
= (I-\frac{1}{n} \eins \eins^{\T})\big(\log(q)-\log(p)\big).
\end{equation}
Thus, in view of \eqref{eq:v-from-u}, we approximate
\begin{subequations}
\begin{align}
\Exp_{p}^{-1}(q) \approx v &= \big(\Diag(p)-p p^{\T}\big) u 
= \big(\Diag(p)-\frac{1}{n} p \eins^{\T} - p p^{\T} + \frac{1}{n} p \eins^{\T}) \log\Big(\frac{q}{p}\Big) \\
&= \big(\Diag(p)-p p^{\T}\big) \log\Big(\frac{q}{p}\Big). 
\end{align}
\end{subequations}
Applying this to the point set $\mc{P}$, i.e.~setting
\begin{equation}
v^{i} = \big(\Diag(p)-p p^{\T}\big) \log\frac{p^{i}}{p}, \qquad 
i \in [N],
\end{equation}
step (3) of \eqref{eq:Rmean-iteration} yields
\begin{subequations}
\begin{align}
v &:= \frac{1}{N} \sum_{i \in [N]} v^{i}
= \frac{1}{N} \big(\Diag(p)-p p^{\T}\big)\Big(\sum_{i \in [N]} \log(p^{i}) - N \log(p)\Big) \\
&= \big(\Diag(p)-p p^{\T}\big) \log\bigg( \frac{1}{p}
\Big(\prod_{i \in [N]} p^{i}\Big)^{\frac{1}{N}} \bigg) \\
&= \big(\Diag(p)-p p^{\T}\big) \log\Big(\frac{\mrm{mean}_{g}(\mc{P})}{p}\Big) =: \big(\Diag(p)-p p^{\T}\big) u.
\end{align}
\end{subequations}
Finally, approximating step (4) of \eqref{eq:Rmean-iteration} results in view of Prop.~\ref{prop:Exp-by-exp} in the update of $p$
\begin{equation}
\exp_{p}(u) = \frac{p e^{u}}{\la p, e^{u} \ra}
= \frac{\mrm{mean}_{g}(\mc{P})}{\la \eins, \mrm{mean}_{g}(\mc{P}) \ra}.
\end{equation}
\end{proof}

\newpage
%%%%%%%%%%%%%%%%%%%%%%%%%%%%%%%
\bibliographystyle{amsalpha}
\bibliography{assignmentFilter}

\end{document}